\newtheorem{theorem}{Theorem}
\newtheorem{propositioni}[theorem]{Proposition}
\newenvironment{proposition}
   {\begin{propositioni}\rm}{\end{propositioni}}
\newtheorem{lemmai}[theorem]{Lemma}
\newenvironment{lemma}
   {\begin{lemmai}\rm}{\end{lemmai}}
\newtheorem{propertyi}[theorem]{Property}
\newtheorem{postulatei}[theorem]{Postulate}
\newtheorem{definitioni}{Definition}
\newenvironment{definition}
   {\begin{definitioni}\rm}{\end{definitioni}}
\newtheorem{corollaryi}[theorem]{Corollary}
\newenvironment{corollary}
   {\begin{corollaryi}\rm}{\end{corollaryi}}
\newtheorem{defii}[theorem]{Definition}
\newtheorem{claimi}[theorem]{Claim}
\newtheorem{conjecturei}[theorem]{Conjecture}
\newtheorem{exercisei}[theorem]{Exercise}
\newtheorem{questioni}[theorem]{Open Question}
\newtheorem{conventioni}[theorem]{Convention}
\newtheorem{facti}[theorem]{Fact}
\newtheorem{problemi}[theorem]{Problem}
\newtheorem{remarki}[theorem]{Remark}
\newenvironment{remark}
   {\begin{remarki}\rm}{\end{remarki}}
\newtheorem{examplei}[theorem]{Example}
\newenvironment{example}
   {\begin{examplei}\rm}{\end{examplei}}
\newtheorem{notationi}[theorem]{Notation}
\newenvironment{notation}
   {\begin{notationi}\rm}{\end{notationi}}
\newenvironment{proof}
   {\begin{trivlist}\item[]{\sc Proof.}}
   {\hfill {\sc qed}\end{trivlist}}
\newtheorem{claim2}{Claim}
\newcounter{tbsnr}
\newenvironment{tbs}
{\addtocounter{tbsnr}{1}\par\bigskip \noindent\fbox{\thetbsnr}
\hspace*{\fill}\begin{minipage}{0.8\textwidth}\tt}
{\end{minipage}\hspace*{\fill}\bigskip}
\newcommand{\Cn}{\mathop{\rm Cn}}
\newcommand{\ben}{\begin{enumerate}}
\newcommand{\een}{\end{enumerate}}
\newcommand{\bit}{\begin{itemize}}
\newcommand{\eit}{\end{itemize}}
\newcommand{\imp}{\rightarrow}
\newcommand{\Imp}{\Rightarrow}
\newcommand{\sm}[1]{\textcolor{red}{#1}}
\newcommand{\con}{\overline} 
\newcommand{\K}{\mathcal{K}}
\newcommand{\A}{\mathcal{A}}
\newcommand{\C}{\mathcal{C}}
\newcommand{\R}{\mathcal{R}}
\newcommand{\D}{\mathcal{D}}
\newcommand{\LA}{\mathcal{L}}
\newcommand{\begintab}[1]{ \begin{tabular}{#1} }
\newcommand{\btab}[1]{\bigskip \par \begintab{#1}}
\newcommand{\etab} {\end{tabular} \medskip \par \noindent }
\newcommand{\ASPIC}{\emph{ASPIC}$^+$}
\title{A General Account of Argumentation with Preferences}
\author{Sanjay Modgil$^1$  and Henry Prakken$^2$\\
\begin{small}1. Department of Infomatics, King's College London (sanjay.modgil@kcl.ac.uk)\end{small}\\
\begin{small}2. Department of Information and Computing Sciences, Utrecht University, and \end{small}\\\begin{small}Faculty of Law, University of Groningen (H.Prakken@uu.nl) \end{small}}
\date{}
\begin{document}

\maketitle

\begin{abstract}

This paper builds on the recent \ASPIC\ formalism, to develop a general framework for argumentation with preferences. We motivate a revised definition of conflict free sets of arguments, adapt \ASPIC\ to accommodate a broader range of instantiating logics, and show that under some assumptions, the resulting framework satisfies key properties and rationality postulates. We then show that the generalised framework accommodates Tarskian logic instantiations extended with preferences, and then study instantiations of the framework by classical logic approaches to argumentation. We conclude by arguing that \ASPIC's modelling of defeasible inference rules further testifies to the generality of the framework, and then examine and counter recent critiques of Dung's framework and its extensions to accommodate preferences.

\end{abstract}

\section{Introduction}\label{SectionIntroduction}

Argumentation is a key topic in the logical study of nonmonotonic reasoning and the dialogical study of inter-agent communication \cite{ArgInAI,RahSim}. Argumentation is a form of reasoning that makes explicit the reasons for the conclusions that are drawn and how conflicts between reasons are resolved. This provides a natural mechanism to handle inconsistent and uncertain information and to resolve conflicts of opinion between intelligent agents.  In logical models of nonmonotonic reasoning, the argumentation metaphor has proved to overcome some drawbacks of other formalisms. Many of these have a mathematical nature that is remote from how people actually reason, which makes it difficult to understand and trust the behaviour of an intelligent system. The argumentation approach bridges this gap by providing logical formalisms that are rigid enough to be formally studied and implemented, while at the same time being close enough to informal reasoning to be understood by designers and users.

Many theoretical and practical developments build on Dung's seminal theory of abstract argumentation \cite{dung95}. A Dung \emph{argumentation framework} (\emph{AF}) consists of a conflict-based binary \emph{attack} relation $\C$ over a set of arguments $\A$. The justified
arguments are then evaluated based on subsets of $\A$ (\emph{extensions}) defined under a range of semantics. The arguments in an extension are required to not attack each other (extensions are \emph{conflict free}), and attack any argument that in turn attacks an argument in the extension (extensions \emph{reinstate/defend} their contained arguments). Dung's theory has been developed in many directions, including argument game proof theories \cite{ModCam} to determine extension membership of a given argument. Also, several works augment \emph{AF}s with preferences and/or values \cite{AmCay,BC03,ModgilAIJ,hp10aspicJAC}, so that the conflict-free extensions, and so justified arguments, are evaluated only with respect to the successful attacks (\emph{defeats}), where an argument $X$ is said to defeat an argument $Y$ iff $X$ attacks $Y$ and $Y$ is not preferred to $X$.

The widespread impact of Dung's work can partly
be attributed to its level of abstraction. \emph{AF}s can be instantiated by a wide range of logical formalisms; one is free to choose a logical language $\LA$ and define what constitutes an argument and attack between arguments defined by a theory in $\LA$. The theory's
inferences can then be defined in terms of the conclusions of the theory's justified arguments. Indeed, the inference relations of existing logics, including logic programming and various non-monotonic logics,
have been given argumentation based characterisations \cite{BDKT97,dung95,GM00}. Dung's theory thus provides a dialectical semantics for these logics, and the above-mentioned argument games can be viewed as alternative dialectical proof theories for these logics. The fact that reasoning in existing non-monotonic logics can thus be characterised, testifies to the generality of the dialectical principles of attack and reinstatement; principles that are also both intuitive and familiar in human modes of reasoning, debate and dialogue. Argumentation theory thus provides a characterisation of both human and logic-based reasoning in the presence of uncertainty and conflict, through the abstract dialectical modelling of the \emph{process} whereby arguments can be moved to attack and reinstate/defend other arguments. The theory's value can therefore in large part be attributed to its explanatory potential for making  non-monotonic reasoning processes inspectable and readily understandable for human users, and its underpinning of dialogical and more general communicative interactions that may involve heterogenous (human and software) agents reasoning in the presence of uncertainty and conflict.

More recently, the \emph{ASPIC} framework \cite{c+a07} was developed in response to the fact that the abstract nature of Dung's theory gives no guidance as to what kinds of instantiations satisfy intuitively rational properties. \emph{ASPIC} was not designed from scratch but was meant to integrate, generalise and further develop existing work on structured argumentation, partly originating from before Dung's paper (e.g.\ \cite{Pol87,S+L92,pol94,BDKT97,PS97}). \emph{ASPIC} adopts an intermediate level of abstraction between Dung's fully abstract level and concrete instantiating logics, by making some minimal assumptions on the nature of the logical language and the inference rules, and then providing abstract accounts of the structure of arguments, the nature of attack, and the use of preferences. \cite{c+a07} then formulated consistency and closure postulates that cannot be formulated at the abstract level, and showed these postulates to hold for a special case of \emph{ASPIC}; one in which preferences were \emph{not} accounted for. In \cite{hp10aspicJAC}, \ASPIC\ then generalised \emph{ASPIC} to accommodate a broader range of instantiations (including assumption-based argumentation \cite{BDKT97} and systems using argument schemes), and showed that under some assumptions, the postulates were satisfied when applying preferences. \cite{g+p11} subsequently showed that the Carneades system \cite{gpw07} is an instance of \ASPIC\ with no defeat cycles.

In this paper we build on and modify \cite{hp10aspicJAC}'s \ASPIC\  framework, to develop a more general structured framework for argumentation with preferences. We make three main contributions. We first motivate a revised definition of conflict free sets of arguments for \ASPIC, adapt \ASPIC\  to accommodate a broader range of instantiating logics, and  show that the resulting framework satisfies the key properties and postulates in \cite{dung95} and \cite{c+a07}. Second, we formalise instantiation of the new framework by Tarskian (and in particular classical) logics extended with preferences, and demonstrate that such instantiations satisfy \cite{c+a07}'s rationality postulates. Third, we examine and counter recent critiques of Dung's framework and its extensions to accommodate preferences\footnote{The current paper extends \cite{ModPrakIJCAI} in which the revised definition of conflict free sets is first proposed, and \ASPIC\ is adapted to accommodate classical logic instantiations.}.

With regard to the first contribution, Section \ref{Section-ArgumentationAndLogic} presents the conceptual foundations for our framework in the context of the above value proposition of argumentation as providing a bridging role between formal logic and human modes of reasoning. Specifically, we : i) posit criteria for defining attack relations, given their dual role in declaratively denoting the mutual incompatibility of the information contained in the attacking arguments, and their dialectical use; ii) motivate the distinction between preference dependent and preference independent attacks, where only the former's use in a dialectical context (as defeats) should be contingent upon preferences; iii) argue that unlike current approaches \cite{AmCay,BC03,ModgilAIJ}, including \cite{hp10aspicJAC}'s \ASPIC, it is conceptually more intuitive to define conflict-free sets in terms of those that do not contain attacking arguments, so that defeats
are only deployed dialectically. Section \ref{Section-ASPIC+} then revisits and generalises \cite{hp10aspicJAC}'s \ASPIC\ framework in light of Section \ref{Section-ArgumentationAndLogic}'s conceptual foundations. The new notion of conflict-free is adopted, and \cite{hp10aspicJAC}'s \ASPIC\ framework is extended to accommodate instantiation by arguments with consistent premises, thus generalising the framework to accommodate a broader range of instantiations. Section \ref{SectionPropertiesPostulates} then presents key technical results. We show that Section \ref{Section-ASPIC+}'s revised and generalised \ASPIC\ satisfies properties of Dung's theory and \cite{c+a07}'s rationality postulates.

Section \ref{SectionInstantiations} then presents the second main contribution, so testifying to the generality of the framework proposed here. To start with, we generalise results of \cite{hp10aspicJAC}, in which preferences defined over arguments on the basis of pre-orderings over arguments' constituent rules and premises,  are  shown to satisfy properties that ensure satisfaction of rationality postulates. In this paper we show that these properties are also satisfied by other ways of defining preferences, and furthermore address some limitations of \cite{hp10aspicJAC}'s way of defining preferences. We then relate our work to Amgoud \& Besnard's \cite{a+b09,a+b10} recent `abstract logic' approach to argumentation, which considers instantiations of Dung's framework by Tarskian logics. We combine this approach with the \ASPIC\ framework,
and then extend \cite{a+b09,a+b10}'s abstract logic approach with preferences, and also combine this extension with \ASPIC. Given Section \ref{SectionPropertiesPostulates}'s results, these combinations imply that we are the first to show satisfaction of \cite{c+a07}'s rationality postulates for Tarskian logic instantiations with and without preferences. Following this, we reconstruct classical logic approaches to argumentation \cite{b+h01,b+h08,g+h11}, including those that additionally accommodate preferences \cite{AmCay}. To the best of our knowledge, we are the first to prove \cite{c+a07}'s postulates for classical logic approaches with preferences. Finally, we show a correspondence between a particular classical logic instantiation of Section \ref{Section-ASPIC+}'s \ASPIC\ framework and Brewka's preferred subtheories \cite{BrewkaPS}.

Section \ref{SectionRelatedWork} discusses related work, and so presents our third main contribution. Specifically, we compare the generality of \ASPIC\ with the abstract logic proposal for structured argumentation, and argue that the latter only applies to deductive (e.g., classical logic) approaches, and not to mixed deductive and defeasible argumentation which requires modelling of defeasible inference rules. We also argue that inclusion of defeasible inference rules in models of argumentation is required if argumentation is to bridge the gap between formalisms and human reasoning, as defeasible reasons are an essential ingredient of human reasoning. Section \ref{SectionRelatedWork} also counters a number of recent criticisms of Dung's abstract approach, as well as critiques of Dung's approach extended with preferences. We claim that a proper modelling of the use of preferences requires making the structure of arguments explicit.

\section{Logic, Argumentation and Preferences}\label{Section-ArgumentationAndLogic}

\subsection{Background}\label{SectionBackground}
A \emph{Dung argumentation framework} (\emph{AF}) \cite{dung95} is a tuple ($\A,\C$), where $\C \subseteq \A \times \A$ is a binary attack relation on the arguments $\A$. $S \subseteq \A$ is then said to be \emph{conflict free} iff $\forall X,Y \in S$, ($X,Y$) $\notin \C$. The status of arguments is then evaluated as follows:

\begin{definition}\label{Dung semantics} Let ($\A,\C$) be a \emph{AF}. For any
$X \in \A$, $X$ is acceptable with respect to some $S \subseteq \A$ iff $\forall Y$ s.t. ($Y,X$) $\in \C$ implies $\exists Z \in S$ s.t. ($Z,Y$) $\in \C$. Let $S \subseteq \A$ be \emph{conflict free}. Then:
\begin{itemize}
  \item $S$ is an \emph{admissible} extension iff $X \in S$ implies $X$ is acceptable w.r.t. $S$;\\[-17pt]
  \item $S$ is a \emph{complete} extension iff $X \in S$ whenever $X$ is acceptable w.r.t. $S$;\\[-17pt]
  \item $S$ is a preferred extension iff it is a set inclusion maximal complete extension;\\[-17pt]
  \item $S$ is the grounded extension iff it is the set inclusion minimal complete extension;\\[-17pt]
  \item $S$ is a stable extension iff it is preferred and $\forall Y \notin S$, $\exists X \in S$ s.t. ($X,Y$) $\in \C$.
\end{itemize}

\noindent For $T \in \{$complete, preferred, grounded, stable$\}$, $X$ is \emph{sceptically} or \emph{credulously} justified under the $T$ semantics if $X$ belongs to all, respectively at least one, $T$ extension.
\end{definition}

A number of works \cite{AmCay, BC03, ModgilAIJ} augment \emph{AF}s to formalise the role of the relative strengths of arguments at the \emph{abstract} level. The basic idea in all these works is that an attack by $X$ on $Y$ succeeds as a \emph{defeat} only if $Y$ is not stronger than $X$. For example, preference-based \emph{AF}s (\emph{PAF}s) \cite{AmCay} are tuples $(\A,\C,\preceq)$, where given the preordering $\preceq$ $ \subseteq \A \times \A$, $Y$ is stronger than $X$ iff $Y$ is strictly preferred to $X$ ($X \prec Y$ iff $X \preceq Y$ and $Y \npreceq X$). In \cite{ModgilAIJ}, preferences between arguments are not based on a given preordering, but rather are themselves defeasible and possibly conflicting, and so are themselves the conclusions of arguments. In \cite{hp10aspicJAC}'s \emph{ASPIC}$^+$ framework, arguments are defined by strict and defeasible rules and premises expressed in some abstract language. Attacks between arguments are defined, and a preference relation over arguments is used to derive a defeat relation. Unlike \emph{PAF}s and \cite{BC03}'s value based \emph{AF}s, \emph{ASPIC}$^+$'s use of preferences to define defeat takes the structure of arguments into account.

In all the above approaches, the justified arguments are then evaluated on the basis of the derived defeat relation, rather than the original attack relation. In other words, a conflict free set is one that contains no two defeating arguments, and the defeat relation replaces the attack relation $\C$ in Definition \ref{Dung semantics}.

Prior to discussing the role of, and relationship between attacks, preferences and defeats, recall that
Section \ref{SectionIntroduction} discussed how abstract argumentation and argument game proof theories: a) provide dialectical semantics, respectively proof theories, for non-monotonic reasoning, where; b) the abstract modelling of the process whereby arguments are submitted to attack and defend, comports with intuitive human modes of reasoning and debate. Thus, the added value of argumentation is in large part due to its potential for facilitating dynamic, interactive and heterogenous (both automated and human) reasoning in the presence of uncertain and conflicting knowledge.

It is in this context that we motivate criteria for defining attack relations, the role of preferences, and a new approach to defining the extensions of a framework in terms of both defeat \emph{and} attack relations. In what follows we assume that arguments are built from strict (i.e., deductive) and defeasible inference rules (a distinction that is made more precise in Section \ref{Section-ASPIC+} and further discussed in Section \ref{SectionRelatedWork}), and refer to an argument's \emph{conclusion} following from its constituent \emph{premises} and \emph{rule} applications (referred to collectively as the argument's \emph{support}).

\subsection{The Two Roles of Attacks}\label{Section-ArgumentationAndLogic-DefiningAttacks}

Attacks play two roles. Firstly, that $X$ attacks $Y$, is an abstract, declarative representation of the mutual incompatibility of the information contained in the attacking arguments. Secondly, the attack abstractly characterises the dialectical use of $X$ as a counter-argument to $Y$. The former role suggests a necessary condition for specifying an attack between $X$ and $Y$, namely, that they contain mutually incompatible information.
However the second role suggests that this condition is not sufficient; attacks should also be defined in such a way as to reflect their use in debate and discussion. Intuitively, if $Y$ is proposed as an argument, then in seeking a counter-argument to $Y$, one seeks to construct an argument $X$ whose conclusion is in conflict with the conclusion or some supporting element of $Y$. This motivates a definition of attack according to which only an argument's \emph{final} conclusion is relevant for whether it attacks another argument.
For example, consider argument $Y$ concluding \emph{Tweety flies}, supported by the premise \emph{Tweety is a bird} and the defeasible rule that \emph{birds fly}. Consider also argument $X$ concluding \emph{Tweety does not fly}, supported by the premise and defeasible rule \emph{Tweety is a penguin} and \emph{penguins don't fly}. Then it is reasonable to say that $X$ and $Y$ attack each other, but if $X$ is extended with the defeasible rule that \emph{non-flying animals do not have wings}, to obtain $X'$ claiming \emph{Tweety does not have wings}, then $X'$ should not attack $Y$, since its final conclusion does not conflict with any element of $Y$. Intuitively, $X'$ would not be moved as a counter-argument to $Y$; rather it is the sub-argument $X$ of $X'$ that would be moved. An additional reason for not allowing $X'$ to attack $Y$ is that otherwise any continuation of $X$ (and not just $X'$) with further inferences would also attack $Y$, which may dramatically increase the number of attacks defined by a theory (and thus the computational expense incurred in evaluating the justified arguments). For example, if arguments can be constructed with the full power of classical logic, then this would yield an infinite number of attackers of $Y$.

A final requirement for attacks is that they should only be targeted at fallible elements of an argument, i.e., only on uncertain premises or defeasible inferences. In particular, conclusions of deductive inferences in an argument cannot be attacked. This should be obvious since the very meaning of deductive
 inference is that the truth of the premises of a deductive inference \emph{guarantees} the truth of its conclusion. Any disagreement with the conclusion of a deductive inference should therefore be expressed as an attack on either uncertain premises or defeasible subarguments of the attacked argument. This informal analysis is supported by recent formal results \cite{c+a07,g+h11} showing that allowing attacks on deductive inferences leads to violation of rationality postulates.

\subsection{Distinguishing Preference Dependent and Independent Attacks}\label{Section-ArgumentationAndLogic-RoleOfPreferences}

We now motivate the distinction between preference dependent and preference independent attacks. Firstly, note that we assumed above that arguments have three elements: a conclusion, a set of premises,
and inference steps from the premises to the conclusion. Arguments can then
in general be attacked in three ways: on their premises, on their
conclusion and on their inference steps. We also argue that in practice, preferences are often
used in argumentation, so that a formal framework that aims to bridge the gap with human modes of argumentation, should accommodate preferences
as first class citizens, instead of implicitly
encoding them by other means (such as with explicit exception or
applicability predicates). We now discuss to what extent these three types of attacks require
preferences to succeed as defeats. To start with, we claim that attacks on conclusions should be resolved with preferences, since such attacks
arise because of conflicting reasons for and against a
conclusion. In such cases, explicit
preferences are used to resolve such conflicts, e,g, based on rule priorities in legal
systems, orderings on desires or values in practical reasoning, or
reliability orderings in epistemic reasoning. For example, consider the above symmetrically attacking arguments $X$ and $Y$ respectively concluding \emph{Tweety does not fly} and \emph{Tweety flies}. Based on the specificity principle's prioritisation of properties of sub-classes over super-classes, one preferentially concludes \emph{Tweety does not fly}. The use of the specificity principle can be modelled at the meta-level (i.e., meta to the object-level logic in which arguments $X$ and $Y$ are constructed), as a preference for $X$ over $Y$, so that $X$ asymmetrically defeats $Y$.

However, assuming sufficient expressive power, one could also encode this metalevel arbitration of the conflict in the object level logic, as undercutting attacks on inference steps \cite{Pol87}. The inferential step licensed by the rule $bf$ = \emph{birds fly}, is blocked by a rule $pNf$ that states that if the bird is a penguin, then the inferential step encoded in the rule $bf$, is not valid. This suggests the use of undercut attacks on an inference step for yielding the same results as those obtained through the use of preferences, 
in a way that makes the rationale for preference application more explicit.
Undercuts also yield effects that cannot be exclusively effected through preferences. Consider Pollock's classic example \cite{Pol87} in which \emph{there is a red light shining} undercuts the rule \emph{if an object looks red then it is red}, so blocking the inference from \emph{there is an object that looks red}, to the conclusion \emph{the object is red}. Here, the undercut effectively expresses a preference for not drawing the inference over drawing the inference; something that cannot be expressed as a preference ordering over arguments.

We conclude that when specifying an attack by $Z$ on $Y$, based on $Z$'s conclusion undercutting a rule in $Y$, the attacking argument is first and foremost expressing reasons for preferring not to infer $Y$'s conclusion over inferring $Y$'s conclusion. Such attacks should therefore be `preference independent', since qualifying the success of such an attack (as a defeat) as being contingent on $Y$ not being preferred to $Z$, would be to contradict the preference that is effectively expressed by the attack itself. In other words, a priority relation that regards the undercut rule as of higher priority than the undercutting rule cannot be regarded as a preference for drawing the inference over not drawing the inference, since the opposite preference (for not drawing, over drawing, the inference) is already expressed in the undercutter.
Thus, we argue for a distinction between \emph{preference dependent} and \emph{preference independent} attacks, where undercuts fall into the latter category. Note that this does not preclude that a third argument $Z'$ attacks $Z$'s conclusion that $Y$'s conclusion should not be inferred, where $Z'$'s attack is preference dependent.

Finally, we claim that whether attacks on premises are preference-dependent, depends
on the nature of the premise that is attacked. Normally, preferences are
needed except if the premise states some kind assumption in the absence
of evidence to the contrary, as, for example, negation as failure
assumptions in logic programming. If $Y$ makes use of a negation as failure assumption of the form $\thicksim \alpha$, denoting that `$\alpha$ is not provable', then an argument $Z$ concluding $\alpha$, preference independent attacks $Y$, since the construction of $Z$ is contingent on the non-provability of $\alpha$, i.e., the absence of an acceptable argument $Y$ concluding $\alpha$.


\subsection{The Distinct Uses of Attacks and Defeats}\label{Section-ArgumentationAndLogic-AttacksAndDefeats}

To recap, attacks encode the mutual incompatibility of the information contained in the attacking and attacked arguments, in a way that accounts for their dialectical use. In turn, the dialectical use of attacks as defeats may or may not be contingent on the preferences defined over the arguments.

As described in Section \ref{SectionBackground}, existing works that account for preferences and/or values \cite{AmCay,BC03,ModgilAIJ}, including \cite{hp10aspicJAC}'s \emph{ASPIC}$^+$ framework, define conflict-free and acceptable sets of arguments with respect to the defeats. However, we argue that defining conflict free sets in terms of defeats is conceptually wrong. Since attacks indicate the mutual incompatibility of the information contained in the attacking and attacked arguments, then intuitively one should continue to define conflict-free sets in terms of those that do not contain attacking arguments. Defeats only encode the preference dependent use of attacks in the dialectical evaluation of the acceptability of arguments. They have no bearing on whether one argument can be said to be logically incompatible with another, but rather whether the attack can be validly employed in a dialectical setting.

In the following section, we therefore re-define \cite{hp10aspicJAC}'s \emph{ASPIC}$^+$ notion of a conflict free set, as one in which no two arguments \emph{attack} rather than defeat. We then examine the implications of this in Section~\ref{Section-ASPIC+R-RationalityPostulates}.


\section{The ASPIC+ Framework}\label{Section-ASPIC+}

In this section we review \cite{hp10aspicJAC}'s \emph{ASPIC}$^+$ framework in light of the criteria and requirements enumerated in Sections \ref{Section-ArgumentationAndLogic-DefiningAttacks} and \ref{Section-ArgumentationAndLogic-RoleOfPreferences}.
We also modify the framework in two ways: 1) we change the definition of conflict free, as proposed above; 2) we further generalise \emph{ASPIC}$^+$ so as to capture deductive approaches to argumentation \cite{a+b09,a+b10,AmCay,b+h08}. In addition, we simplify some of  \cite{hp10aspicJAC}'s notations and definitions.

\subsection{ASPIC+ Arguments}\label{Section-ASPIC+-Arguments}

The \emph{ASPIC}$^+$ framework defines arguments, as in \cite{Vre97}, as inference trees formed by applying strict or defeasible inference rules to premises that are well-formed formulae (wff) in some logical language. The distinction between two kinds of inference rules is taken from  \cite{Pol87,LS89,Pol95,Vre97}. Informally, if an inference rule's antecedents are accepted, then if the rule is strict, its consequent must be accepted \emph{no matter what}, while if the rule is defeasible, its consequent must be accepted \emph{if there are no good reasons not to accept it}.  Arguments can be attacked on their (non-axiom) premises and on their applications of defeasible inference rules.  Some attacks succeed as \emph{defeats}, which is partly determined by preferences. The acceptability status of arguments is then defined by applying any of \cite{dung95}'s semantics for abstract argumentation frameworks to the resulting set of arguments with its defeat relation.

We emphasise that \emph{ASPIC}$^+$ is not a system but a framework  for specifying systems. It defines the notion of an abstract \emph{argumentation system} (a notion adapted from \cite{Vre97}) as a structure consisting of a logical language $\LA$ with a binary relation $\mbox{}^{-}$ , a naming convention $n$ for defeasible rules and a set $\R$ consisting of two subsets $\R_s$ and $\R_d$ of strict  and defeasible inference rules. (As is usual, inference rules are defined over the language $\LA$, and are not elements in the language.) \emph{ASPIC}$^+$ as a framework does not make any assumptions on how these elements are defined in a given argumentation system (the idea to abstract from the precise nature of $\LA/\R$ is taken from \cite{LS89,Vre97,BDKT97} while the idea to abstract from $\mbox{}^{-}$ and $n$ is taken from  \cite{BDKT97} and \cite{Pol95}, respectively).

\ASPIC's inference rules can be used in two ways: they could encode domain-specific information but they could also express general laws of reasoning. When used in the latter way, the defeasible rules could, for example, express argument schemes \cite{Wal96}, while the strict rules could be determined by the choice of the logical language $\LA$: its formal semantics will then tell which inference rules over $\LA$ are valid and can therefore be added to $\R_s$. If the strict rules are thus chosen then they could consist, for example, of all classically valid inferences or more generally conform to any Tarskian consequence notion (cf.\  \cite{a+b09}).  Notice that inclusion of defeasible rules in \ASPIC\ requires some explanation, given that much current work formalises construction of arguments as deductive \cite{a+b09,a+b10}, and in particular classical \cite{b+h08,g+h11} inference. We justify the need for inclusion of defeasible inference rules in Section \ref{SectionRelatedWork}.

As just explained, the basic notion of \emph{ASPIC}$^+$ is that of an argumentation system.
Arguments are then constructed with respect to a knowledge base. Definitions of these are taken
from \cite{hp10aspicJAC} (with some modifications that will be subsequently described).'

\begin{definition}\label{argumentation-system}[\textbf{\emph{ASPIC}$^+$ argumentation system}]
An \emph{argumentation system} is a tuple $AS = (\LA,\mbox{}^{-},\R,n)$ where:\\[-17pt]

\bit

\item $\LA$ is a logical language.\\[-17pt]
\item $\mbox{}^{-}$ is a function from $\LA$ to $2^{\LA}$, such that:
\\[3pt]
$\bullet$ $\varphi$ is a \emph{contrary} of $\psi$ if $\varphi \in \con{\psi}$, $\psi \not\in
\con{\varphi}$;
\\[3pt]
$\bullet$ $\varphi$ is a \emph{contradictory} of $\psi$ (denoted by `$\varphi = -\psi$'), if
$\varphi \in \con{\psi}$, $\psi \in
\con{\varphi}$;
\\[3pt]
$\bullet$ each $\varphi \in \LA$ has at least one contradictory.
\\[-17pt]
\item $\R = \R_s \cup \R_d$ is a set of strict ($\R_s$) and defeasible
($\R_d$) inference rules of the form $\varphi_1$,
\ldots, $\varphi_n$ $\to$ $\varphi$ and  $\varphi_1$, \ldots,
$\varphi_n$ $\Rightarrow$ $\varphi$ respectively (where $\varphi_i,\varphi$ are meta-variables ranging over wff in $\LA$), and $\R_s \cap \R_d = \emptyset$.\\[-17pt]
\item $n : \R_d \longrightarrow \LA$ is a naming convention for defeasible rules.
\eit
\end{definition}

Intuitively, contraries can be used to model well-known constructs like negation as failure in logic programming or consistency checks in default logic. Note that we illustrate requirements for the asymmetric notion of contrary (as opposed to the more familiar symmetric notion of contradictory associated standardly with negation) in Section \ref{Section-ASPIC+-AttacksAndDefeats}. Note also that in previous publications on  \emph{ASPIC}$^+$ (including \cite{hp10aspicJAC}) the idea of a naming convention $n$ was instead informally introduced  when defining undercutting attack (see Definition~\ref{DefAttacks} below).  Informally,  $n(r)$ is a wff in $\LA$ which says that the defeasible rule $r \in \R$ is applicable.
\begin{definition}\label{consistency}
For any $S \subseteq \LA$, let the \emph{closure of $S$ under strict rules}, denoted $Cl_{R_s}(S)$, be the smallest set containing $S$ and the consequent of any strict rule in $\R_s$ whose antecedents are in $Cl_{R_s}(S)$. Then a  set $S$ $\subseteq$ $\LA$ is
\bit
\item \emph{directly consistent} iff $\nexists$ $\psi$, $\varphi$ $\in$ $S$ such that $\psi \in \con{\varphi}$
\item \emph{indirectly consistent} iff $Cl_{R_s}(S)$ is directly consistent.
\eit
\end{definition}
This definition is generalised from \cite{c+a07}, in which these two notions of consistency were defined for the special case where  $\mbox{}^{-}$ corresponds to negation.
%
\begin{definition}\label{Definition-knowledge-base}[\textbf{\emph{ASPIC}$^+$ knowledge base}] A \emph{knowledge base} in an argumentation system
$(\LA,\mbox{}^{-},\R,n)$ is a set $\K \subseteq
\LA$ consisting of two disjoint subsets  $\K_n$ (the \emph{axioms}) and $\K_p$ (the \emph{ordinary premises}).
\end{definition}


%

Intuitively, the axioms are certain knowledge and thus cannot be attacked, whereas the ordinary premises are uncertain and thus can be attacked. The distinction between ordinary premises and axiom premises is needed to capture systems like, for instance, Pollock's system \cite{Pol87}, which does not allow attacks on premises, and which therefore need to be modelled as axiom premises. In \cite{hp10aspicJAC}, the knowledge base was also assumed to have \emph{issue} and \emph{assumption} premises, which were used, respectively, to prove that Carneades \cite{gpw07} and assumption-based argumentation \cite{BDKT97} are special cases of \ASPIC. In the present paper we omit issue premises for simplicity while, as further discussed below in Section~\ref{Section Comparison with General Frameworks for Argumentation}, \cite{hp10aspicJAC}'s result on assumption-based argumentation also holds if all premises are ordinary instead of assumption premises. Furthermore, in previous \ASPIC publications (including \cite{hp10aspicJAC}) we included preorderings on $\R_d$ and $\K_p$ in the definitions of argumentation systems and knowledge bases respectively. We remove references to these preorderings in the above general definitions, and only introduce them when they are required for defining preference orderings over arguments.

\begin{example}\label{ExampleIllustration1} Let $(\LA,\mbox{}^{-},\R,n)$ be an argumentation system where:\\[-14pt]
\begin{itemize}
\item $\LA$ is a language of propositional literals, composed from a set of propositional atoms $\{a,b,c, \dots\}$ and the symbols $\neg$ and $\sim$ respectively denoting strong and weak negation (i.e., negation as failure). $\alpha$ is a strong literal if $\alpha$ is a propositional atom or of the form $\neg \beta$ where $\beta$ is a propositional atom (strong negation cannot be nested). $\alpha$ is a wff of $\LA$, if $\alpha$ is a strong literal or of the form $\sim \beta$ where $\beta$ is a strong literal (weak negation cannot be nested).\\[-17pt]
 \item $\alpha \in \con{\beta}$ iff (1) $\alpha$ is of the form $\neg
\beta$ or $\beta$ is of the form $\neg \alpha$; or (2) $\beta$ is of the form
$\thicksim \alpha$ (i.e., for any wff $\alpha$, $\alpha$ and $\neg \alpha$ are contradictories and $\alpha$ is a contrary of $\sim \alpha$).\\[-17pt]
 \item  $\R_s = \{t,q \rightarrow \neg p\}$,  $\R_d = \{\sim s \Rightarrow t; r \Rightarrow q; a \Rightarrow p\}$\\[-17pt]
\item $n(\sim s \Rightarrow t) = d_1$, $n(r \Rightarrow q) = d_2$, $n(a \Rightarrow p) = d_3$\\[-14pt]
\end{itemize}
Furthermore, $\K$ is the knowledge base such that $\K_n = \emptyset$ and $\K_p = \{a,r, \neg r,\sim s\}$.
\end{example}

Arguments are defined below (as in \cite{hp10aspicJAC}), together with some associated notions. Informally, for any argument $A$, $\mathtt{Prem}$ returns all the
formulas of $\K$ (\emph{premises}) used to build $A$,
$\mathtt{Conc}$ returns $A$'s conclusion, $\mathtt{Sub}$ returns all of
$A$'s sub-arguments, $\mathtt{DefRules}$ and $\mathtt{StRules}$ respectively return all defeasible and all strict rules in $A$, and $\mathtt{TopRule}(A)$ returns the last rule applied in $A$.

%
%
%
%
%

\begin{definition}\label{arg}[\textbf{\emph{ASPIC}$^+$ arguments}]
An \emph{argument} $A$ on the basis of a knowledge base $\K$ in
an argumentation system $(\LA,\mbox{}^{-},\R,n)$ is:
\begin{enumerate}
    \item\label{arg1} $\varphi$ if $\varphi$ $\in$ $\K$ with:
                $\mathtt{Prem}(A)       = \{\varphi\}$;
                $\mathtt{Conc}(A)       = \varphi$;
                $\mathtt{Sub}(A)        = \{\varphi\}$;
                $\mathtt{Rules}(A)$  = $\emptyset$;
              $\mathtt{TopRule}(A)$  = undefined.\\[-15pt]
\item\label{arg2} $A_1, \ldots A_n \rightarrow \psi$ if
$A_1, \ldots, A_n$ are arguments such that
            there exists a strict rule
            $\mathtt{Conc}(A_1), \ldots,\mathtt{Conc}(A_n)
\rightarrow \psi$
            in $\R_s$. \\
 $A_1, \ldots A_n \Rightarrow \psi$ if
$A_1, \ldots, A_n$ are arguments such that
            there exists a defeasible rule
            $\mathtt{Conc}(A_1), \ldots,\mathtt{Conc}(A_n) \Rightarrow \psi$
            in $\R_d$. \\
            $\mathtt{Prem}(A)\footnote{Note that all premises in ASPIC+ arguments are used in deriving its conclusion, so enforcing a notion of relevance
            analgous to the subset minimality condition requirement on premises in classical logic approaches to argumentation (see Section \ref{sectionALstrict}).}       = \mathtt{Prem}(A_1) \cup \ldots
\cup \mathtt{Prem}(A_n)$, \\
                $\mathtt{Conc}(A)       = \psi$, \\
                $\mathtt{Sub}(A)        = \mathtt{Sub}(A_1) \cup \ldots
\cup \mathtt{Sub}(A_n) \cup \{A\}$. Note that $A_1 \ldots A_n$ are referred to as the \emph{proper} sub-arguments of $A$\\
                 $\mathtt{Rules}(A)   = \mathtt{Rules}(A_1) \cup \ldots
\cup \mathtt{Rules}(A_n) \cup
                  \{\mathtt{Conc}(A_1), \ldots, \mathtt{Conc}(A_n)
\rightarrow$/$\Rightarrow \psi\}$ \\
                $\mathtt{DefRules}(A)$  = $\{r | r \in \mathtt{Rules}(A), r \in \R_d\}$\\
                $\mathtt{StRules}(A)$  = $\{r | r \in \mathtt{Rules}(A), r \in \R_s\}$\\
              $\mathtt{TopRule}(A)$  = $\mathtt{Conc}(A_1), \ldots
\mathtt{Conc}(A_n) \rightarrow$/$\Rightarrow \psi$

           \end{enumerate}
Furthermore, for any argument $A$:\\
$\bullet$ $\mathtt{Prem}_n(A)$ = $\mathtt{Prem}(A) \cap \K_n$ and $\mathtt{Prem}_{p}(A)$ = $\mathtt{Prem}(A) \cap \K_p$
\\[3pt]
$\bullet$ If $\mathtt{DefRules}(A) = \emptyset$, then $\mathtt{LastDefRules}(A) = \emptyset$, else;\\
 if $A$ = $A_1,\ldots, A_n \Imp \phi$ then $\mathtt{LastDefRules}(A)$ = $\{\mathtt{Conc}(A_1),\ldots, \mathtt{Conc}(A_n) \Imp \phi\}$, otherwise $\mathtt{LastDefRules}(A) = \mathtt{LastDefRules}(A_1) \cup \ldots \cup \mathtt{LastDefRules}(A_n)$.
\\[3pt]
$\bullet$ $A$
is: \emph{strict} if $\mathtt{DefRules}(A) = \emptyset$;
\emph{defeasible} if $\mathtt{DefRules}(A) \not= \emptyset$; \emph{firm}
if $\mathtt{Prem}(A)   \subseteq \K_n$; \emph{plausible} if
$\mathtt{Prem}(A)   \not\subseteq \K_n$; \emph{fallible} if $A$ is plausible or defeasible; \emph{finite} if $\mathtt{Rules}(A)$ is finite.\footnote{As explained in \cite{hp10aspicJAC}, Definition~\ref{arg} allows for arguments that are `backwards' infinite in that they do not `bottom' out in premises from the knowledge base.}
\end{definition}

Henceforth, we may employ the following notation for arguments.

\begin{notation}\label{NotationArguments}[\textbf{Notation for Arguments}]\\[-17pt]

\begin{enumerate}

    \item $S \vdash \varphi$ may be written to denote that there exists a strict argument $A$ such that $\mathtt{Conc}(A) = \varphi$, with all premises taken from $S$ (i.e., $\mathtt{Prem}(A) \subseteq S$).\\[-17pt]
  \item Arguments may be written as lists of premises and rules separated by semi-colons, or in the case that an argument has a top rule, we may write such an argument as the top rule with the antecedents replaced by the names of the sub-arguments that conclude the antecedents. For example, we may write $A$ = $[s$; $s\Rightarrow r;$ $q$; $r,q \rightarrow \neg p]$ or $A$ = [$A_1,A_2 \rightarrow \neg p$], where $A_1$ = $[s;$ $s\Rightarrow r]$, $A_2$ = $[q]$.\\[-17pt]
  \item Letting $\Gamma$ be a set of arguments, we may as an abuse of notation write $\mathtt{F}(\Gamma)$ to denote $\bigcup_{A \in \Gamma} \mathtt{F}(A)$, where $\mathtt{F} \in \{\mathtt{Prem}$, $\mathtt{Conc}$, $\mathtt{Sub}$, $\mathtt{Rules}$, $\mathtt{TopRule}$, $\mathtt{DefRules},$ $\mathtt{StRules}\}$

\end{enumerate}

\end{notation}

\begin{example}\label{ExampleArgs} The arguments (shown in Figure \ref{FigASPICArgs}) defined on the basis of the knowledge base and argumentation system in Example \ref{ExampleIllustration1} are: $A'$ = $[a]$, $A$ = $[A' \Rightarrow p]$,  $B_1$ = $[\sim s]$, $B_1'$ = $[B_1 \Rightarrow t]$,  $B_2$ = $[r]$, $B_2'$ = $[B_2 \Rightarrow q]$, $B$ = $[B_1' , B_2' \rightarrow \neg p]$, $C$ = $[\neg r]$.\\[2pt]
Furthermore, $\mathtt{Prem}(B) = \{\sim s,r\}$; $\mathtt{Conc}(B) = \neg p $; $\mathtt{Sub}(B) = \{B_1,B_2,B_1' , B_2' \}$; $\mathtt{TopRule}(B) = t,q \imp \neg p$; $\mathtt{DefRules}(B) = \{\sim s \Imp t, r \Imp q \}$; $\mathtt{StRules}(B) = \{ t,q \imp \neg p\}$.

\begin{figure}[h]
\centering
\includegraphics[width=4in]{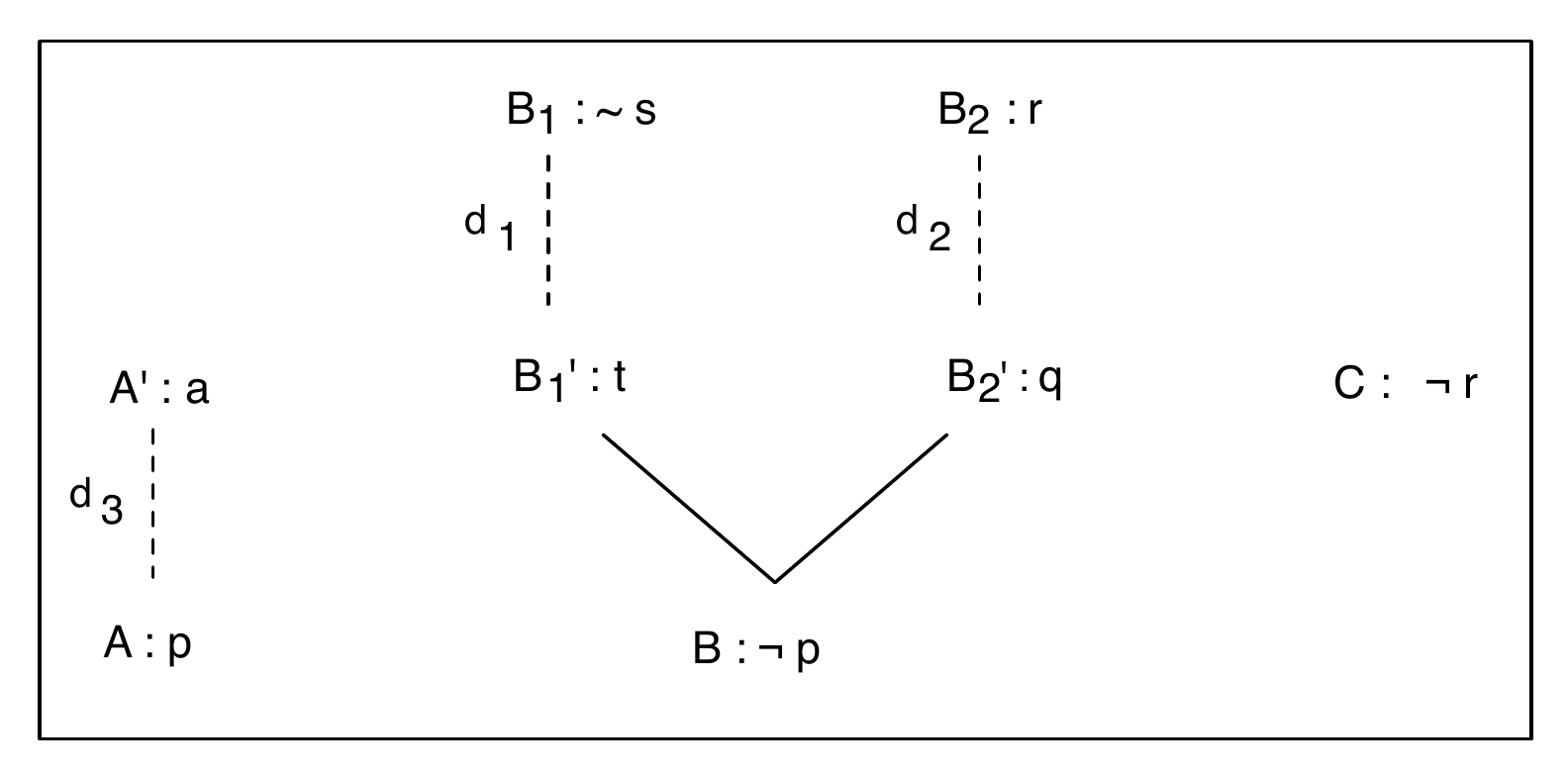}
\caption{ \emph{ASPIC}$^+$ arguments and their conclusions, with dashed and solid lines respectively representing application of defeasible and strict inference rules.\label{FigASPICArgs}}
\end{figure}

\end{example}

We now adapt \cite{hp10aspicJAC}'s above definition of an argument so as to consider a special class of arguments whose premises are `c-consistent' (for ``contradictory-consistent'). We thus generalise
\emph{ASPIC}$^+$ so as to accommodate deductive approaches to argumentation \cite{a+b09,a+b10,AmCay,b+h08} that require that the arguments defined by the instantiating logic have consistent premises.

\begin{definition}\label{DefinitionC-Consistent}[\textbf{c-consistent}] A set $S$ $\subseteq$ $\LA$ is \emph{c-consistent} if for no $\varphi$ it holds that $S \vdash \varphi, -\varphi$. Otherwise $S$ is \emph{c-inconsistent}. We say that $S \subseteq \LA$ is minimally c-inconsistent iff $S$ is c-inconsistent and $\forall S' \subset S$, $S'$ is c-consistent.

\end{definition}

Note that we use the term `c-consistent' to distinguish the notion of consistency in Definition~\ref{argumentation-system}. Also note that if $S \vdash \varphi,\phi$, where $\phi \in \con{\varphi}$, then $S$ can still be c-consistent. As we will see later, such situations do not arise when capturing deductive approaches in \ASPIC, as in these approaches there are no contraries, only contradictories.

\begin{definition}\label{DefinitionConsistentArguments}[\textbf{c-consistent argument}] An \emph{argument} $A$ on the basis of a knowledge base $\K$ in
an argumentation system $(\LA,\mbox{}^{-},\R,n)$, is c-consistent iff $\mathtt{Prem}(A)$ is c-consistent.
\end{definition}

\subsection{Attacks and Defeats}\label{Section-ASPIC+-AttacksAndDefeats}

We now review \cite{hp10aspicJAC}'s definition of attacks and defeats amongst arguments. An argument $A$ attacks an argument $A'$ if the conclusion of $A$ (i.e., $\mathtt{Conc}(A)$) is a contrary or contradictory of: an ordinary premise in $A'$; the consequent of a defeasible rule in $A'$, or; a defeasible inference step in $A'$. These three kinds of attack are respectively called
undermining, rebutting and undercutting attacks.

\begin{definition}\label{DefAttacks}[\textbf{\emph{ASPIC}$^+$ attacks}] $A$ \emph{attacks} $B$ iff $A$ \emph{undercuts}, \emph{rebuts} or \emph{undermines} $B$, where:\\[2pt]
$\bullet$ $A$ \emph{undercuts} argument $B$ (on $B'$) iff
$\mathtt{Conc}(A) \in  \con{n(r)}$ for some $B'$ $\in$ $\mathtt{Sub}(B)$
such that $B'$'s top rule $r$ is defeasible.\\[1pt]
$\bullet$ $A$ \emph{rebuts} argument $B$ (on $B'$) iff
$\mathtt{Conc}(A)\in \con{\varphi}$ for some  $B'$ $\in$
$\mathtt{Sub}(B)$ of the form $B''_1, \ldots, B''_n \Rightarrow
\varphi$. In such a case $A$ \emph{contrary-rebuts} $B$ iff
$\mathtt{Conc}(A)$ is a contrary of $\varphi$.\\[2pt]
$\bullet$  Argument $A$ \emph{undermines} $B$ (on $B'$) iff
$\mathtt{Conc}(A) \in \con{\varphi}$ for some $B'$ = $\varphi$, $\varphi$ $\in$
$\mathtt{Prem}_{p}(B)$. In such a case $A$
\emph{contrary-undermines} $B$ iff  $\mathtt{Conc}(A)$ is a contrary of
$\varphi$.

\end{definition}


\begin{example}\label{ExampleIllustration2}[\textbf{Example \ref{ExampleArgs} continued}] \noindent For the arguments in Example \ref{ExampleArgs}, $B$ rebuts $A$ on $A$,  $C$ undermines $B$ and $B_2'$  on $B_2$,  and $C$ and $B_2$ undermine each other. The attack graph is shown in Figure \ref{FigASPIC+Illustration}a).
Notice that if in addition one had the argument $D$ = $[\sim d ; \sim d \Rightarrow s]$, then $D$ would contrary-undermine $B$ and $B_1'$ on $B_1$. Moreover, if in addition one had the argument $E$ = $[r ; r \rightarrow \neg d_3]$, then $E$ would undercut $A$ on $A$.
\end{example}

Note that Definition \ref{DefAttacks} complies with Section \ref{Section-ArgumentationAndLogic-DefiningAttacks} and \ref{Section-ArgumentationAndLogic-RoleOfPreferences}'s rationale for defining attacks. An attack originating from an argument $A$ requires that its conclusion $\mathtt{Conc}(A)$ (and not the conclusion of any sub-argument of $A$) be in conflict with some \emph{fallible} element -- i.e., some ordinary premise, or defeasible rule or conclusion of a defeasible rule -- in the attacked argument. Thus, while $B_2$ rebut-attacks $C$ in Example \ref{ExampleIllustration2}, the argument $B$, that contains $B_2$ as a sub-argument, does not attack $C$. Also,
although $A$ and $B$ have contradictory conclusions, only $B$ rebuts $A$; $A$ does not rebut $B$ as $B$'s conclusion is the consequent of a strict rule. \cite{c+a07} refer to this as a \emph{restricted rebut}, and show  for a special case of \emph{ASPIC}$^+$ that if the restriction is lifted so as to allow $A$ to rebut $B$, then this would lead to violation of of \cite{c+a07}'s rationality postulates.

 \sm{Attacks can then be distinguished as to whether they are preference-dependent or preference-independent, where the formers' success as defeats is determined by the strict counterpart $\prec$ of a preference ordering $\preceq$ on the constructed arguments (where if $X \prec Y$ then $X \preceq Y$ and $Y \npreceq X$).
 We make no assumptions on the properties of $\preceq$. In Section \ref{SubSection-ASPIC+WeakestLast} we will utilise two preorderings $\leq$  on defeasible rules and $\leq'$ on ordinary premises to give example definitions of $\prec$, but the definition of defeat does not rely on these preorderings.}


%

\begin{definition}\label{DefDefeats}[\textbf{\emph{ASPIC}$^+$ defeats}]  Let $A$ attack $B$ on $B'$. If $A$ undercut, contrary-rebut, or contrary-undermine attacks $B$ on $B'$ then $A$ is said to \emph{preference-independent} attack $B$ on $B'$, otherwise $A$ is said to \emph{preference-dependent} attack $B$ on $B'$.\\[2pt]
  \noindent Then, $A$ \textbf{defeats} $B$ iff for some $B'$ either $A$ \emph{preference-independent} attacks $B$ on $B'$, or $A$ \emph{preference-dependent} attacks $B$ on $B'$ and $A \nprec B'$.\\[2pt]
$A$ strictly \emph{defeats} $B$ iff $A$ \emph{defeats} $B$ and $B$ does not \emph{defeat} $A$
\end{definition}

\begin{notation}\label{NotationAttacksDefeats} Henceforth, $\rightharpoonup$ may denote the attack relation, and $\hookrightarrow$ the defeat relation.

\end{notation}

The definition of defeats complies with Section \ref{Section-ArgumentationAndLogic-RoleOfPreferences}'s rationale for distinguishing between preference dependent and preference independent attacks. Undercuts always succeed as defeats, and so are preference independent.
 As discussed in Section \ref{Section-ArgumentationAndLogic-RoleOfPreferences}, undercutting attacks encode an asymmetry: the use of the attacked rule named $r$ in an argument $B$ is contingent on the absence of an acceptable attacking argument $A$ with an undercutting, conclusion $\neg r$.
 The notion of a contrary relation generalises the above cases of asymmetric preference independent attacks, providing for
greater flexibility in declaring formulas $\varphi$ and $\psi$ incompatible, where attacks from $\varphi$ to $\psi$ are not undercuts  but are still preference independent.\footnote{
Notice that in such cases it would be counter-intuitive to allow $\psi$ to be an axiom premise or the conclusion of a strict rule. In the rest of this paper we will therefore assume
that such cases do not arise. This assumption is formalised in Definition \ref{DefStrictDefAssumptions}.} As discussed in Section  \ref{Section-ArgumentationAndLogic-RoleOfPreferences} an example of such a cases is when $\psi$ is a negation as failure assumption. This is illustrated in Example \ref{ExampleIllustration1} in which $\alpha$ is a contrary of $\thicksim \alpha$, so that an undermining attack from an argument $A$ concluding $\alpha$ on an ordinary premise $\thicksim \alpha$ in an argument $B$, is preference independent.


%
%

\begin{figure}[h]
\centering
\includegraphics[width=5in]{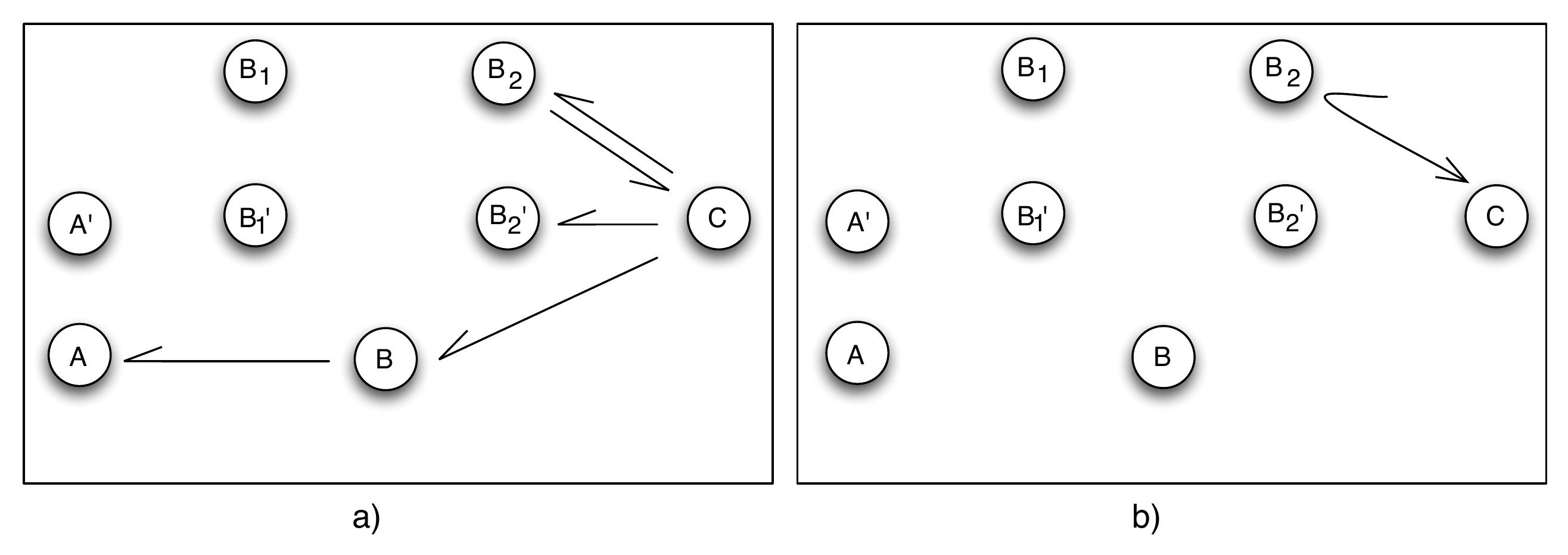}
\caption{Example \ref{ExampleIllustration2}'s \emph{ASPIC}$^+$ attack graph shown in a). Note that the dashed and solid lines representing application of defeasible and strict inference rules have been removed. The defeat graph (see Example~\ref{ExampleIllustration3}) is shown in b). \label{FigASPIC+Illustration}}
\end{figure}

\begin{example}\label{ExampleIllustration3}[\textbf{Example \ref{ExampleIllustration2} continued}] \sm{We assume that the argument ordering $\prec$} is defined in terms of preorderings $\leq$ on defeasible rules and $\leq'$ on ordinary premises (in ways fully specified in Section~\ref{SubSection-ASPIC+WeakestLast} below). Assume that $r \Rightarrow q < a \Rightarrow p$ and $\neg r <' r; \neg a \approx' r$;~$\sim s <' \neg r$. (As usual, $l \approx l'$ iff $l \leq l'$ and $l' \leq l$, while $l < l'$ iff $l \leq l'$ and $l' \nleq l$; likewise for $\approx'$ and $<'$.)

Now let $B_2' \prec A$, $B \prec A$  (because of $r \Rightarrow q < a \Rightarrow p$), $C \prec B_2, C \prec B_2', C \prec B$ (because of $\neg r <' r$). Then $B$ does not defeat $A$ ($B \not\hookrightarrow A$), $C \not\hookrightarrow B$, $C \not\hookrightarrow B_2'$ and $B_2 \hookrightarrow C$ (the arguments and defeats are depicted in Figure \ref{FigASPIC+Illustration}b)).\\[2pt] Note that if one had the additional arguments $D$ and/or $E$ described in Example \ref{ExampleIllustration2}, then $D$ would defeat $B_1$ and so $B_1'$ and $B$, while $E$ would defeat $A$. Note that $D \rightharpoonup B_1$ is preference independent since $s$ is a contrary of $\sim s$; 
the validity of $B_1, B_1'$ and $B$ is contingent on $s$ not being provable (i.e., there being no acceptable argument for $s$).
\end{example}

\subsection{Structuring Argumentation Frameworks}\label{Section-ASPIC+-c-SAF}

We now define two notions of a structured argumentation framework instantiated by an argumentation theory. The first is defined as in \cite{hp10aspicJAC}. The second accounts for this paper's definition of c-consistent arguments.

\begin{definition}\label{DefinitionArgumentationTheory}[\textbf{Argumentation theory}]
An \emph{argumentation theory} is a tuple $AT$ =
$(AS,\K)$ where $AS$ is an argumentation system and $\K$ is a knowledge base in $AS$.
\end{definition}

\begin{definition}\label{DefinitionStructuredAF}[\textbf{(c-)Structured Argumentation Frameworks}]
Let $AT$ be an argumentation theory $(AS,\K)$.
\\[2pt]
-- A \emph{structured argumentation framework (\emph{SAF})} defined by $AT$, is a triple $\langle\A$, $\C$, $\preceq$ $\rangle$ where $\A$ is the set of all finite arguments constructed from $\K$ in
$AS$ (henceforth called the set of arguments on the basis of $AT$), $\preceq$ is an ordering on $\A$, and $(X,Y) \in \C$ iff $X$ attacks $Y$.
\\[2pt]
-- A \emph{c-structured argumentation framework (\emph{c-SAF})} defined by $AT$, is a triple $\langle\A$, $\C$, $\preceq$ $\rangle$ where $\A$ is the set of all c-consistent finite arguments constructed from $\K$ in
$AS$, $\preceq$ is an ordering on $\A$, and $(X,Y) \in \C$ iff $X$ attacks $Y$.\\[4pt]
\noindent Henceforth, we may write `\emph{(c-)SAF}' instead of writing `\emph{SAF} or \emph{c-SAF}'. Note that a c-SAF is a SAF in which all arguments are required to have a c-consistent set of premises.
\end{definition}

In \cite{hp10aspicJAC}, it is assumed that any argumentation theory satisfies a number of properties. We repeat these here, and add an additional `c-classicality' property for c-\emph{SAF}s, in which we refer to the notion of `closure under strict rules' and the notation `$S \vdash \phi$' given in Definition \ref{consistency} and Notation \ref{NotationArguments} respectively.

%

\begin{definition}\label{DefStrictDefAssumptions}[\textbf{Well defined \emph{(c-)SAF}s}] Let $AT$ = $(AS,\K)$ be an argumentation theory, where  $AS$ = $(\LA,\mbox{}^{-},\R,n)$. We say that $AT$ is:
\\[3pt]
\noindent $\bullet$ \emph{closed under contraposition}  iff for all $S \subseteq \LA$, $s \in S$ and $\phi$, if $S \vdash \phi$, then $S \backslash \{s\} \cup \{-\phi\} \vdash -s$. \\[2pt]
\noindent $\bullet$ \emph{closed under transposition} \footnote{The notion of closure under transposition is taken from \cite{c+a07}.} iff if $\phi_1,\ldots,\phi_n \rightarrow \psi$ $\in \R_s$, then for $i = 1 \ldots n$, $\phi_1, \phi_{i-1}, - \psi, \phi_{i+1},$ $\ldots,\phi_n \rightarrow - \phi_i$ $\in \R_s$;
\\[2pt]
\noindent $\bullet$ \emph{axiom consistent} iff $Cl_{\R_s}(\K_n)$ is consistent.\\[2pt]
\noindent $\bullet$ \emph{c-classical} iff for any minimal c-inconsistent $S \subseteq \LA$ and for any $\varphi \in S$, it holds that  $S \setminus \{\varphi\} \vdash -\varphi$ (i.e., amongst all arguments defined there exists a strict argument with conclusion $-\varphi$ with all premises taken from $S \setminus \{\varphi\}$).
\\[2pt]
\noindent $\bullet$ \emph{well formed} if whenever $\varphi$ is a contrary of $\psi$ then $\psi \notin \K_{n}$ and $\psi$ is not the consequent of a strict rule.\footnote{This formulation repairs an error in the one of \cite{hp10aspicJAC}, which allowed for counterexamples to some results.}
\\[2pt]
If a \emph{c-SAF} is defined by an $AT$ that is c-classical, axiom consistent, well formed and closed under contraposition or closed under transposition, then the \emph{c-SAF} is said to be \emph{well defined}.
\\[2pt]
\noindent
If a \emph{SAF} is defined by an $AT$ that is axiom consistent, well formed and closed under contraposition or closed under transposition, then the \emph{SAF} is said to be \emph{well defined}.

\end{definition}

Henceforth, we will assume that any \emph{(c-)SAF} is well defined. The intuitions underlying the first four properties are self-evident. The rationale for the well-formed assumption is discussed in Section \ref{Section-ASPIC+-AttacksAndDefeats}.

\emph{(c-)SAF}s can now be linked to Dung frameworks. Firstly, note that as with existing approaches \cite{AmCay, BC03, ModgilAIJ}, \cite{hp10aspicJAC}'s notion of a conflict free set of arguments is defined with respect to the derived defeat relation.

\begin{definition}\label{DefinitionDDConflictFree}[\textbf{Defeat conflict free for \emph{(c-)SAF}s}] Let $\Delta$ = $\langle \A$, $\C$, $\preceq$ $\rangle$ be a \emph{(c-)SAF}, and $\D \subseteq \A \times \A$, where $(X,Y) \in \D$ iff $X$ defeats $Y$ according to Definition~\ref{DefDefeats}. Then $S \subseteq \A$ is \emph{defeat conflict free} iff $\forall X,Y \in S$, $(X,Y) \notin \D$.
\end{definition}

However, we have in Section \ref{Section-ArgumentationAndLogic-AttacksAndDefeats} argued that conflict free sets should be defined with respect to the attack relation, and defeats reserved for the dialectical use of attacks:

\begin{definition}\label{DefinitionADConflictFree}[\textbf{Attack conflict free for \emph{(c-)SAF}s}] Let $\Delta$ = $\langle \A$, $\C$, $\preceq$ $\rangle$ be a \emph{(c-)SAF}. Then $S \subseteq \A$ is \emph{attack conflict free} iff $\forall X,Y \in S$, $(X,Y) \notin \C$.
\end{definition}

In either case, the justified arguments are then evaluated on the basis of the extensions of a Dung framework instantiated by the arguments and derived defeat relation:


\begin{definition}\label{DefinitionSAFExtensions}[\textbf{Extensions and justified arguments/conclusions of \emph{(c-)SAF}s}] Let $\Delta$ = $\langle \A$, $\C$, $\preceq$ $\rangle$ be a \emph{(c-)SAF}, and $\D \subseteq \A \times \A$, where $(X,Y) \in \D$ iff $X$ defeats $Y$. Let $S \subseteq \A$ be defeat or attack conflict free. The \emph{extensions and justified arguments of} $\Delta$ are the extensions of the Dung framework $(\A,\D)$, as defined in Definition \ref{Dung semantics}.\\[2pt]
For $T \in \{$admissible, complete, preferred, grounded, stable$\}$, we say that:\\[-17pt]
 \begin{itemize}
   \item $\varphi$ is a $T$ credulously justified conclusion of $\Delta$ iff there exists an argument $A$.
such that $\mathtt{Conc}(A)$ = $\varphi$, and $A$ is credulously justified under the $T$ semantics.\\[-17pt]
   \item $\varphi$ is a $T$ sceptically justified conclusion of $\Delta$ iff for every $T$ extension $E$, there exists an argument $A\in E$
such that $\mathtt{Conc}(A)$ = $\varphi$.\\[-17pt]
 \end{itemize}
$S$ is a \emph{def}-$T$ extension if
$S$ is defined as defeat conflict free, and an \emph{att}-$T$ extension if $S$ is defined as attack conflict free.

\end{definition}

We now recall a definition from \cite{hp10aspicJAC}, and then in this paper we define the notion of an argument $A$ being a strict continuation of a set of arguments $\{A_1,\dots,A_n\}$.

\begin{definition} The set $M(B)$ of the \emph{maximal fallible sub-arguments} of $B$ is defined such that for any $B' \in \mathtt{Sub}(B)$, $B' \in M(B)$ iff:\\[-17pt]

\begin{enumerate}
  \item $B'$'s top rule is defeasible or $B'$ is an ordinary premise, and;\\[-13pt]
  \item there is no $B'' \in \mathtt{Sub}(B)$ s.t. $B'' \neq B$ and $B' \in \mathtt{Sub}(B'')$, and $B''$ satisfies 1).

\end{enumerate}

\end{definition}
The maximal fallible subarguments of an argument $B$ are those with the `last' defeasible inferences in $B$ or else (if $B$ is strict) they are $B$'s ordinary premises. That is, they are the maximal subarguments of $B$ on which $B$ can be attacked. In Example~\ref{ExampleArgs} we have that $M(A) = \{A\}$, $M(B) =  \{B_1',B_2'\}$ $M(C) = \{C\}$.

\begin{definition}\label{Def-StrictExtension}[\textbf{Strict Continuations of Arguments}] For any set of arguments $\{A_1,\dots,A_n\}$, the argument $A$ is a \emph{strict continuation} of $\{A_1,\dots,A_n\}$ iff:\\[3pt]- $\mathtt{Prem}_{p}(A)$ = $\bigcup_{i=1}^n \mathtt{Prem}_{p}(A_i)$ \\(i.e., the ordinary premises in $A$ are exactly those in $\{A_1,\dots,A_n\}$);
 \\[3pt]-
$\mathtt{DefRules}(A)$ = $\bigcup_{i=1}^n \mathtt{DefRules}(A_i)$ \\(i.e., the defeasible rules in $A$ are exactly those in $\{A_1,\dots,A_n\}$);
 \\[3pt]- $\mathtt{StRules}(A) \supseteq \bigcup_{i=1}^n\mathtt{StRules}(A_i)$ and $\mathtt{Prem_n}(A) \supseteq \bigcup_{i=1}^n\mathtt{Prem_n}(A_i)$\\
 (i.e., the strict rules and axiom premises of $A$ are a superset of the strict rules and axiom premises in $\{A_1,\dots,A_n\}$).

\end{definition}

\begin{example}\label{ExampleIllustration4}[\textbf{Example \ref{ExampleArgs} continued}]
In Example \ref{ExampleArgs}, we have that $B$ is a strict continuation of $B_1'$ and $B_2'$. Now notice that the argumentation theory in Example \ref{ExampleIllustration1} is not well defined, since it is neither closed under contraposition or transposition. Closure under transposition augments $\R_s$ with rules $t, p \rightarrow \neg q$ and $p, q \rightarrow \neg t$, so obtaining the additional arguments $A^+_1$ = $[B_1', A \Rightarrow \neg q]$ that rebut-attacks $B_2'$ (and so $B$), and $A^+_2$ = $[A, B_2' \Rightarrow \neg t]$ that rebut-attacks $B_1'$ (and so $B$). Figure \ref{FigASPIC+Illustration2}a) shows these additional arguments and attacks.

\begin{figure}[h]
\centering
\includegraphics[width=5.2in]{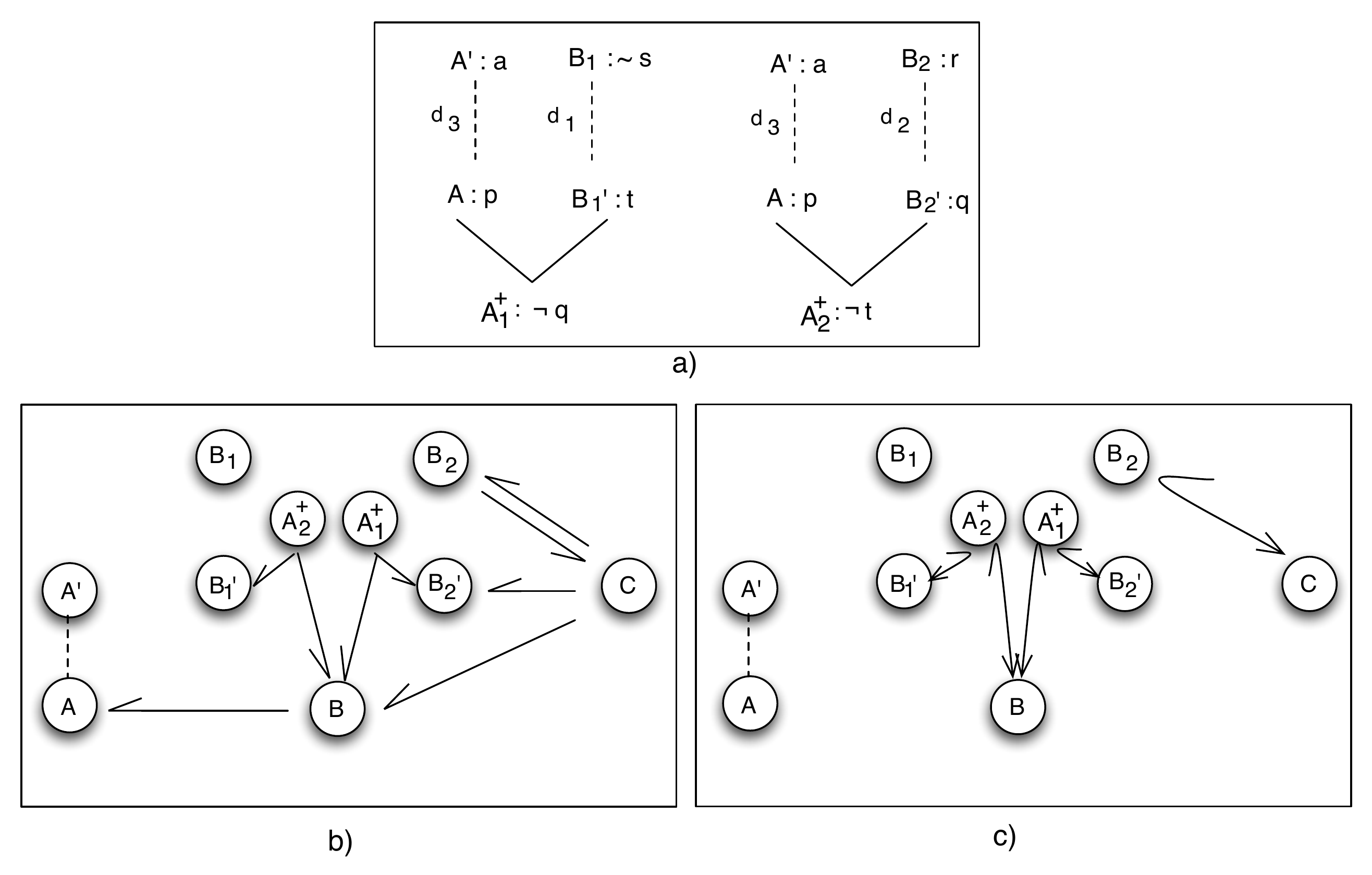}
\caption{Example \ref{ExampleIllustration4}'s arguments $A^+_1$ and $A^+_2$ built from transpositions of the strict rule $t,q \imp \neg p$ are shown in a).  \emph{ASPIC}$^+$ attack graph shown in b). A possible defeat graph is shown in c). \label{FigASPIC+Illustration2}}
\end{figure}
\end{example}

\section{Properties and Postulates}\label{SectionPropertiesPostulates}

In this section we examine the implications of the attack definition of conflict free sets. We  show that under some assumptions on the preference ordering over arguments, both \emph{SAF}s and \emph{c-SAF}s satisfy the key properties of Dung frameworks. We also show that \cite{c+a07}'s rationality postulates straightforwardly hold. On the other hand, we will show that under \cite{hp10aspicJAC}'s `defeat definition' of conflict free, key properties of Dung frameworks straightforwardly hold, whereas satisfaction of \cite{c+a07}'s rationality postulates requires assumptions on preference orderings. Note that for the defeat definition, \cite{hp10aspicJAC} has already shown satisfaction of the rationality postulates for \emph{SAF}s. This paper extends \cite{hp10aspicJAC}'s results to \emph{c-SAF}s. Finally, we will show equivalence of admissible and complete extensions under the attack and defeat definitions of conflict free.

\subsection{Properties of \emph{SAF}s and \emph{c-SAF}s under the Attack Definition of Conflict Free}\label{Section-ASPIC+-Propoerties-Attack}

Defining conflict free sets in terms of the attack relation, while using the defeat relation for determining the acceptability of arguments, potentially undermines some key results shown for Dung frameworks. To illustrate, consider Example \ref{ExampleIllustration3}'s \emph{SAF}, with the arguments and attacks shown in Figure \ref{FigASPIC+Illustration}a). As shown in  Figure \ref{FigASPIC+Illustration}b), no argument defeats $B$, so $\{B\}$ is \emph{att}-admissible (as defined in Definition \ref{DefinitionSAFExtensions}). Since $B \prec A$, then $B \not\hookrightarrow A$, and so $A$ is acceptable w.r.t. $\{B\}$. But $\{A,B\}$ is not attack conflict free and so not \emph{att}-admissible. This violates Dung's \emph{fundamental lemma} \cite{dung95}, which states that if $S$ is admissible and $A$ is acceptable w.r.t. $S$ then $S \cup \{A\}$ is admissible. However, if the \emph{SAF} is well defined (Definition \ref{DefStrictDefAssumptions}), then under the assumption that an argument ordering is \emph{reasonable}, we can show that the fundamental lemma holds.

An argument ordering $\preceq$ is reasonable if it satisfies properties that one might expect to hold of orderings over arguments composed from fallible and infallible elements. Firstly, whenever an argument $A$ is not fallible (i.e., strict and firm), then it is strictly preferred over all arguments with fallible elements, and not less preferred than any other argument. Also, continuing an argument with only axiom premises and strict inferences does not change its relative preference.
The second property is essentially a strengthening of the requirement that the strict counter-part $\prec$ of $\preceq$ is asymmetric, by stating that for any set $\A'$ = $\{C_1,\ldots,C_n\}$ of arguments, it cannot be that for all $i$, $C' \prec C_i$ where $C'$ is a strict continuation of $\A'\backslash C_i$.

\begin{definition}\label{Def-Reasonable}[\textbf{Reasonable Argument Orderings}]
An argument ordering $\preceq$ is \emph{reasonable} iff:
\begin{enumerate}
\item i) $\forall A,B$, if $A$ is strict and firm and $B$ is plausible or defeasible, then $B \prec A$;\\
      ii) $\forall A,B$, if $B$ is strict and firm then $B \nprec A$;\\
      iii) $\forall A,A',B$ such that $A'$ is a strict continuation of $\{A\}$, if $A \nprec B$ then $A' \nprec B$, and if $B \nprec A$ then $B \nprec A'$ (i.e.,  applying strict rules to a single argument's conclusion and possibly adding new axiom premises does not weaken, respectively strengthen, arguments)

\item Let $\{C_1,\ldots,C_n\}$ be a finite subset of $\A$, and for $i$ = $1 \ldots n$, let $C^{+\backslash i}$ be some strict continuation of $\{C_1,\ldots,C_{i-1},C_{i+1},\ldots,C_n\}$. Then it is not the case that: $\forall i$, $C^{+\backslash i} \prec C_i$.
\end{enumerate}
\end{definition}

In Section \ref{SubSection-ASPIC+WeakestLast} we give example definitions of argument orderings in terms of preorderings on the ordinary premises and defeasible rules, corresponding to the commonly used weakest and last link principles. We will then show that these argument orderings are reasonable. Henceforth, we will assume that the ordering $\preceq$ of any \emph{(c-)SAF} is reasonable.

We now examine the implications of an argument ordering being reasonable. Under the assumption that Example \ref{ExampleIllustration3}'s \emph{SAF} is well-defined, we additionally have the arguments and attacks described in Example \ref{ExampleIllustration4}, and shown in Figure \ref{FigASPIC+Illustration}c).  Recall that we have the maximal fallible sub-arguments $\{A, B_1', B_2'\}$ of $A$ and $B$, where:\\[-17pt]

\begin{itemize}
\item $B$ is a strict continuation of $\{B_1',B_2'\}$;\\[-17pt]
\item $A^+_1$ a strict continuation of $\{B_1', A\}$;\\[-17pt]
\item $A^+_2$ a strict continuation of $\{B_2', A\}$.\\[-17pt]
\end{itemize}

\noindent Assuming $\preceq$ is reasonable, then by Definition~\ref{Def-Reasonable}-2:\\[-12pt]

\hspace{15mm}\emph{it cannot be that} $B \prec A$, $A^+_1 \prec B_2'$ \emph{and} $A^+_2 \prec B_1'$.\\[2pt] Since by assumption $B \prec A$, then it must be that either $A^+_1 \nprec B_2'$ or $A^+_2 \nprec B_1'$, and so $A^+_1$ defeats $B_2'$ or $A^+_2$ defeats $B_1'$.
Indeed, if we refer to the preordering over defeasible rules given in Example \ref{ExampleIllustration3}, then since no rule in $B_2'$ is strictly stronger than a rule in $A^+_1$, and no rule in $B_1'$ is strictly stronger than a rule in $A^+_2$, then $A^+_1 \nprec B_2'$, $A^+_2 \nprec B_1'$ \footnote{This is verified by the argument orderings defined on the basis of the last link principle in Section \ref{SubSection-ASPIC+WeakestLast}}, and we obtain the defeat graph shown in Figure \ref{FigASPIC+Illustration2}.

%
%

In fact, the following general result can be shown:

\begin{proposition}\label{Prop$B'$-extension} Let $A$ and $B$ be arguments where $B$ is plausible or defeasible and $A$ and $B$ have contradictory conclusions, and assume $\mathtt{Prem}(A)\cup \mathtt{Prem}(B)$ is c-consistent if
$A$ and $B$ are defined as in Definition \ref{DefinitionConsistentArguments}, that is, if they are assumed to have c-consistent premises.
 Then:
\begin{enumerate}
\item For all $B' \in M(B)$, there exists a strict continuation $A^+_{B'}$ of $(M(B)\backslash \{B'\}) \cup M(A)$ such that
$A^+_{B'}$ rebuts or undermines $B$ on $B'$.

\item If $B \prec A$, and $\preceq$ is reasonable, then for some $B' \in M(B)$, $A^+_{B'}$ defeats $B$.
\end{enumerate}

\end{proposition}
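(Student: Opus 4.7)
My plan is to treat Part 1 as a constructive lemma (via transposition/contraposition) and Part 2 as a short contradiction argument using clause 2 of reasonableness. Fix $\psi = \mathtt{Conc}(B)$, so $\mathtt{Conc}(A) = -\psi$, and write $M(B) = \{B_1',\ldots,B_m'\}$ with $\phi_i$ the conclusion of $B_i'$ (or $\phi_i = B_i'$ itself when $B_i'$ is an ordinary premise).

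For Part 1, I would first dispose of the easy case $B \in M(B)$ (i.e., $B$'s top rule is defeasible or $B$ is an ordinary premise), where $M(B) = \{B\}$ and I simply take $A^+_B = A$: this is trivially a strict continuation of $M(A) = (M(B)\setminus\{B\}) \cup M(A)$, and rebuts or undermines $B$ on $B$ via $\mathtt{Conc}(A) = -\psi$. In the main case, $B$'s top rule is strict, so inside $B$ there is a strict sub-derivation from $\phi_1,\ldots,\phi_m$ to $\psi$. Closure under transposition (or under contraposition), guaranteed by well-definedness, yields for each $i$ a strict derivation of $-\phi_i$ from $\{-\psi\} \cup \{\phi_j : j \neq i\}$. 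I assemble $A^+_{B_i'}$ by feeding $A$ (concluding $-\psi$) together with the arguments $M(B)\setminus\{B_i'\}$ into this strict derivation. Because $A$ is a strict continuation of $M(A)$, the result is a strict continuation of $(M(B)\setminus\{B_i'\}) \cup M(A)$, and its conclusion $-\phi_i$ rebuts or undermines $B$ on $B_i'$ depending on whether $B_i'$ has a defeasible top rule or is an ordinary premise. For the c-SAF case, $\mathtt{Prem}(A^+_{B_i'}) \subseteq \mathtt{Prem}(A) \cup \mathtt{Prem}(B)$ is c-consistent by hypothesis, so $A^+_{B_i'}$ is a valid c-consistent argument.

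For Part 2, I argue by contradiction: suppose no $A^+_{B_i'}$ defeats $B$. Since $\mathtt{Conc}(A^+_{B_i'}) = -\phi_i$ is a contradictory of $\phi_i$, and ``contrary'' in Definition~\ref{argumentation-system} is strictly asymmetric, the attack is neither contrary-rebut nor contrary-undermine, hence preference-dependent; thus failure to defeat forces $A^+_{B_i'} \prec B_i'$ for every $i$. I then apply clause 2 of Definition~\ref{Def-Reasonable} to the finite collection $\{C_1,\ldots,C_{m+1}\} = \{B_1',\ldots,B_m',A\}$, using $C^{+\backslash i} = A^+_{B_i'}$ for $i \leq m$ and $C^{+\backslash(m+1)} = B$ (observing that $B$ is a strict continuation of $M(B)$). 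The hypothesis $B \prec A$ combined with the $m$ inequalities $A^+_{B_i'} \prec B_i'$ yields $C^{+\backslash i} \prec C_i$ for every $i$, contradicting reasonableness. Hence some $A^+_{B_i'}$ must defeat $B$.

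The hard part, in my view, is not the high-level strategy but the bookkeeping around the ``strict continuation'' clauses of Definition~\ref{Def-StrictExtension}. One has to verify carefully that (i) $B$ really is a strict continuation of $M(B)$, so that the ordinary premises and defeasible rules of $B$ coincide with those of its maximal fallible sub-arguments, and (ii) the $A^+_{B_i'}$ built by transposition qualifies as a strict continuation both of $(M(B)\setminus\{B_i'\}) \cup M(A)$ (needed in Part 1) and of $(M(B)\setminus\{B_i'\}) \cup \{A\}$ (needed when invoking clause 2 in Part 2). The equivalence of these two formulations rests on $A$ being a strict continuation of $M(A)$, which ensures identical ordinary premises and defeasible rules. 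A secondary subtlety, easily overlooked, is the preference-dependence argument: it hinges on the strict asymmetry in the definition of ``contrary'' versus ``contradictory'', without which the conclusion $-\phi_i$ could yield a preference-independent attack and the forcing step $A^+_{B_i'} \prec B_i'$ would fail.
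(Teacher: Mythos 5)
Your proof is correct and follows essentially the same route as the paper's: Part 1 via contraposition/transposition applied to the strict derivation of $\mathtt{Conc}(B)$ from the conclusions of $M(B)$ (plus axiom premises), and Part 2 by instantiating clause 2 of Definition~\ref{Def-Reasonable} with the collection $M(B)\cup\{A\}$, taking $B$ as the continuation omitting $A$ and $A^+_{B_i}$ as the continuation omitting $B_i$. The bookkeeping points you flag (that $A$ is a strict continuation of $M(A)$, so the two formulations of the base set coincide, and that the contradictory conclusions make the attacks preference-dependent) are exactly the implicit steps in the paper's argument.
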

This says that if the argument ordering is reasonable, then whenever an argument $B$ with a strict top rule rebuts (but not contrary rebuts) an argument $A$ with a defeasible top rule but is inferior to $A$, we can strictly continue $A$ into an argument defeating $B$.

Let us now generalise the earlier suggested counter-example to the fundamental lemma. Assume $B \in S$, $S$ is admissible, and either: 1) $B$ attacks $A$ on $A'$, $B \prec A'$, and so $B$ does not defeat $A$ (i.e., the example described at the beginning of this section), or; 2) $A$ attacks $B$ on $B'$, $A \prec B'$, and so $A$ does not defeat $B'$. The proof of Proposition \ref{PropositionConflictFree} below then makes use of Proposition \ref{Prop$B'$-extension} to show that in neither case can $A$ be acceptable w.r.t. $S$. This means that the result that \emph{if $A$ is acceptable w.r.t. an admissible $S$ then $S \cup \{A\}$ is conflict free}, and hence Dung's fundamental lemma, is not under threat. Prior to Proposition \ref{PropositionConflictFree}, we state a key result for \emph{c-SAF}s in order to show that Dung's fundamental lemma and the rationality postulates can be shown when arguments are restricted to those with consistent premises:

\begin{proposition}\label{Lemma4} Let ($\A$,
$\C$, $\preceq$) be a c-\emph{SAF}. If $A_1,\ldots,A_n$ are acceptable w.r.t. some conflict-free $E \subseteq \A$, then $\bigcup_{i=1}^n\mathtt{Prem}(A_i)$ is c-consistent.
\end{proposition}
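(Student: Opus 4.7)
The plan is to argue by contradiction: suppose $S := \bigcup_{i=1}^n \mathtt{Prem}(A_i)$ is c-inconsistent and extract a minimal c-inconsistent subset $S' \subseteq S$. By c-classicality applied to $S'$, for every $\varphi \in S'$ there is a strict argument $C_\varphi^+$ with $\mathtt{Conc}(C_\varphi^+) = -\varphi$ and $\mathtt{Prem}(C_\varphi^+) \subseteq S' \setminus \{\varphi\}$; these premises are c-consistent by minimality of $S'$, so $C_\varphi^+$ lives in the c-SAF. Moreover $C_\varphi^+$ is a strict continuation of the trivial premise-arguments $\{\psi : \psi \in S' \setminus \{\varphi\}\}$, each of which is itself a sub-argument of some $A_i$.

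Next I would apply the second clause of reasonableness (Definition~\ref{Def-Reasonable}-2) to the family $\{C_\varphi = \varphi : \varphi \in S'\}$ together with the continuations $\{C_\varphi^+\}$, to obtain some $\varphi^* \in S'$ with $C_{\varphi^*}^+ \nprec C_{\varphi^*}$. A short case split then pins $\varphi^*$ to $\K_p$: if $\varphi^* \in \K_n$ and $C_{\varphi^*}^+$ is firm, then $-\varphi^* \in Cl_{\R_s}(\K_n)$ while also $\varphi^* \in \K_n$, contradicting axiom consistency; if $\varphi^* \in \K_n$ and $C_{\varphi^*}^+$ is plausible, clause 1(i) of reasonableness forces $C_{\varphi^*}^+ \prec C_{\varphi^*}$, contradicting the choice of $\varphi^*$.

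With $\varphi^* \in \K_p$, the trivial argument $\varphi^*$ is a sub-argument of some $A_{j(\varphi^*)}$, and $C_{\varphi^*}^+$ undermines $A_{j(\varphi^*)}$ on it. Since $-\varphi^*$ is a contradictory (not merely a contrary) of $\varphi^*$, this undermining is preference-dependent, and $C_{\varphi^*}^+ \nprec C_{\varphi^*}$ makes it succeed as a defeat. Acceptability of $A_{j(\varphi^*)}$ w.r.t.\ $E$ then supplies some $D \in E$ defeating $C_{\varphi^*}^+$; as $C_{\varphi^*}^+$ is strict with no defeasible sub-arguments, the only possible attacks on it are underminings of its ordinary premises, so $D$ undermines $C_{\varphi^*}^+$ on some $\psi \in (S' \setminus \{\varphi^*\}) \cap \K_p$. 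The very same attack by $D$ on the premise-subargument $\psi$ of $A_{j(\psi)}$ (identical $\psi$, identical preference check) also defeats $A_{j(\psi)}$, so acceptability of $A_{j(\psi)}$ produces $D' \in E$ defeating $D$. Since defeat implies attack, $D' \rightharpoonup D$ with $D,D' \in E$, contradicting conflict-freeness of $E$ under either the attack or the defeat definition.

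The main obstacle is the case analysis that eliminates $\varphi^* \in \K_n$: one must combine axiom consistency with clause 1(i) of reasonableness to drive the reasonableness-chosen index into $\K_p$ before the acceptability chase can begin. A secondary subtlety is verifying that $D$'s defeat of $C_{\varphi^*}^+$ transfers verbatim into a defeat of $A_{j(\psi)}$, which rests on the attacked object being the trivial premise-argument $\psi$ (hence literally the same sub-argument in both host arguments) and on the preference-check in Definition~\ref{DefDefeats} referring to the attacked sub-argument rather than the host.
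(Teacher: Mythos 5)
Your proposal is correct and follows essentially the same route as the paper's proof: a reductio on a minimal c-inconsistent subset, c-classicality to build strict continuations attacking each element, clause~2 of Definition~\ref{Def-Reasonable} to find one continuation that is not strictly inferior and hence defeats, and an acceptability chase ending with two members of $E$ one of which attacks the other. Your two deviations are sound refinements rather than a different argument: you make explicit the elimination of axiom premises from the minimal inconsistent set (which the paper compresses into a parenthetical appeal to axiom consistency), and in the final chase you trace $D$'s defeat of the strict continuation back to a shared ordinary-premise sub-argument of some $A_{j(\psi)}$ instead of invoking Lemma~\ref{Lemma3} to conclude that the continuation itself is acceptable w.r.t.\ $E$.
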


\begin{proposition}\label{PropositionConflictFree} Let $A$ be acceptable w.r.t an admissible
extension $S$ of a \emph{(c-)SAF} ($\A$,
$\C$, $\preceq$). Then
$S'$ = $S$ $\cup$ $\{A\}$  is conflict free.

\end{proposition}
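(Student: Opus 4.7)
The plan is a proof by contradiction: suppose $S'=S\cup\{A\}$ is not attack conflict free. Since $S$ is admissible and therefore already attack conflict free, the witnessing attack must be between $A$ and some $B\in S$. Relabel the pair so that ``$X$ attacks $Y$'' with $\{X,Y\}=\{A,B\}$. Note that both $X$ and $Y$ are acceptable w.r.t.\ $S$: $A$ by hypothesis, $B$ because $S$ is admissible, and sub-arguments of acceptable arguments are acceptable (any attack on a sub-argument of an argument is an attack on the argument itself).

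If the attack happens to succeed as a defeat, then acceptability of $Y$ w.r.t.\ $S$ yields some $C\in S$ defeating and hence attacking $X$. When $X\in S$ this contradicts attack conflict freeness of $S$ immediately; when $X=A$, iterating the argument on the pair $(C,A)$ produces $D\in S$ attacking $C$, and $C,D\in S$ again contradicts attack conflict freeness of $S$.

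The core case is a non-defeating attack: it is preference-dependent and $X\prec Y_0$ for some $Y_0\in\mathtt{Sub}(Y)$ (the sub-argument targeted by the attack). First I would observe that $X$ must be plausible or defeasible, for otherwise $X$ is strict and firm and, since $Y_0$ is fallible (attacks target only ordinary premises or defeasible sub-arguments), clause~1(i) of reasonability gives $Y_0\prec X$, which is incompatible with $X\prec Y_0$. Since the attack is preference-dependent, the conclusions of $X$ and $Y_0$ are contradictories. In the c-SAF case, Proposition~\ref{Lemma4} applied to the acceptable pair $X,Y$ yields c-consistency of $\mathtt{Prem}(X)\cup\mathtt{Prem}(Y)\supseteq\mathtt{Prem}(X)\cup\mathtt{Prem}(Y_0)$, so the hypotheses of Proposition~\ref{Prop$B'$-extension}(2) are met with stronger side $Y_0$ and weaker side $X$. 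The proposition then delivers $X_0\in M(X)$ and a strict continuation $Z$ of $(M(X)\setminus\{X_0\})\cup M(Y_0)$ such that $Z$ defeats $X$.

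To close, since $X$ is acceptable w.r.t.\ $S$ and $Z$ defeats $X$, some $C\in S$ defeats $Z$, so $C$ attacks $Z$ on some fallible sub-argument $Z'$. By the shape of a strict continuation, $Z'$ lies in the sub-arguments of $(M(X)\setminus\{X_0\})\cup M(Y_0)$ and therefore in $\mathtt{Sub}(X)\cup\mathtt{Sub}(Y)$; moreover the preference condition witnessing that $C$ defeats $Z$ on $Z'$ equally witnesses that $C$ defeats either $X$ or $Y$ on $Z'$. Acceptability of that target w.r.t.\ $S$ then produces $D\in S$ defeating and hence attacking $C$; but $C,D\in S$ contradicts attack conflict freeness of $S$. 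The main obstacle is precisely this last subcase: the mere failure of $X$ to defeat $Y$ is not itself a contradiction, and one has to manufacture a concrete defeater of $X$ via Proposition~\ref{Prop$B'$-extension} and then trace its $S$-defender back into $\mathtt{Sub}(X)\cup\mathtt{Sub}(Y)$, relying on the fact that strict continuations preserve both the locations of maximal fallible sub-arguments and the preference witnesses attached to them.
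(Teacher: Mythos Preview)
Your proposal is correct and mirrors the paper's proof: the paper packages your ``defeat case'' as a separate lemma (Lemma~\ref{Lemma2}) and your ``non-defeat case'' as another lemma (Lemma~\ref{Lemma1}, which splits into a direct-counterattack subcase and a Proposition~\ref{Prop$B'$-extension} subcase that you merge by appealing to Proposition~\ref{Prop$B'$-extension} directly), but the underlying mechanism---manufacture a defeater of the weaker argument via Proposition~\ref{Prop$B'$-extension}, then trace its $S$-defender back into $\mathtt{Sub}(A)\cup\mathtt{Sub}(B)$ to contradict conflict-freeness of $S$---is identical, as is the appeal to Proposition~\ref{Lemma4} for the c-SAF premise-consistency hypothesis. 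The only small omission is that your case split assumes the witnessing attack is between $A$ and some $B\in S$, whereas you should also dispose of the possibility that $A$ attacks itself; the paper handles this with a one-line remark, and your machinery would cover it just as well by taking $X=Y=A$.
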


Proposition \ref{PropositionConflictFree} implies that Dung's fundamental lemma holds:

\begin{proposition}\label{PropositionFundamentalLemma} Let $A, A'$ be acceptable w.r.t an admissible
extension $S$ of a \emph{(c-)SAF} ($\A$,
$\C$, $\preceq$). Then:
\begin{enumerate}
\item $S'$ = $S$ $\cup$ $\{A\}$ is admissible \\[-17pt]\item $A'$ is
acceptable w.r.t. $S'$.
\end{enumerate}

\end{proposition}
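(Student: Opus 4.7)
The plan is to follow Dung's classical proof of the fundamental lemma, leveraging the preceding Proposition~\ref{PropositionConflictFree} to dispense with the only nontrivial obstacle, namely conflict-freeness of $S\cup\{A\}$. The remaining work is a routine monotonicity argument for acceptability, carried out uniformly for both the \emph{att}- and \emph{def}- readings of conflict-freeness, since acceptability (Definition~\ref{Dung semantics}) is phrased in terms of the defeat relation $\D$ in either case.

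For part~1, I would first observe that $S'=S\cup\{A\}$ is conflict-free: this is exactly the content of Proposition~\ref{PropositionConflictFree}, which has been stated for both \emph{SAF}s and \emph{c-SAF}s under the reasonable-ordering assumption already in force. Next I would establish the key monotonicity fact, which is purely set-theoretic on the Dung framework $(\A,\D)$: whenever $X$ is acceptable with respect to $T$ and $T\subseteq T'$, $X$ is also acceptable with respect to $T'$, because any defender $Z\in T$ of $X$ against a defeater $Y$ lies \emph{a fortiori} in $T'$. Applying this with $T=S$ and $T'=S'$ gives (i)~every $B\in S$ is acceptable with respect to $S'$ (using admissibility of $S$) and (ii)~$A$ is acceptable with respect to $S'$ (from the hypothesis on $A$). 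Combined with conflict-freeness, this shows $S'$ is admissible.

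For part~2, $A'$ is acceptable with respect to $S$ by hypothesis, so by the same monotonicity observation applied to $S\subseteq S'$, $A'$ is acceptable with respect to $S'$.

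The main obstacle in proving the fundamental lemma in the present setting is the fact that conflict-freeness (defined via attacks, or even via defeats) is not automatically preserved when adding an acceptable argument, because acceptability talks about defeats while attack-conflict-freeness talks about attacks; this is precisely the point where the example at the start of Section~\ref{Section-ASPIC+-Propoerties-Attack} threatened a counterexample. That obstacle has already been absorbed into Proposition~\ref{PropositionConflictFree} via the reasonableness of $\preceq$ and (for \emph{c-SAF}s) Proposition~\ref{Lemma4}, so the present proposition reduces to the two-line monotonicity argument sketched above, with no additional use of the structure of arguments or of the preference ordering.
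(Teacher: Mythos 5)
Your proposal is correct and matches the paper's proof: the paper likewise invokes Proposition~\ref{PropositionConflictFree} for conflict-freeness of $S\cup\{A\}$ and handles acceptability via the monotonicity fact (stated there as Lemma~\ref{Basic properties}-\ref{Basic1}) that acceptability with respect to a set is preserved under passing to supersets. Your inline justification of that monotonicity step is the same routine argument, so there is nothing to add.
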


We have shown that given reasonable argument orderings, a well defined \emph{(c-)SAF} satisfies Dung's fundamental lemma. This implies that the admissible extensions of a \emph{(c-)SAF} form a complete partial order w.r.t. set inclusion, and that every admissible extension is contained in a preferred extension. Also, given the definitions of defeat and acceptability, it is easy to see (in the same way as for Dung frameworks) that if $A$ is acceptable w.r.t. $S$, then $A$ is acceptable w.r.t. any superset of $S$ (this result is stated as Lemma \ref{Basic properties}-\ref{Basic1} in the Appendix). Thus, a \emph{(c-)SAF}'s characteristic function is monotonic, implying that the grounded extension can be identified by the function's least fixed point. It is also easy to see that every stable extension is a preferred extension.

\subsection{Rationality Postulates for \emph{SAF}s and \emph{c-SAF}s under the Attack Definition of Conflict Free}\label{Section-ASPIC+R-RationalityPostulates}

As discussed in Section \ref{SectionIntroduction}, the intermediate level of abstraction (between concrete instantiating logics and Dung's fully abstract theory) adopted by \emph{ASPIC} \cite{c+a07} and \emph{ASPIC}$^+$ \cite{hp10aspicJAC} frameworks, allows for the formulation and evaluation of postulates \cite{c+a07} whose satisfaction ensure that any concrete instantiations of the frameworks fulfil some rational criteria. We now show that under the attack definition of conflict free, well-defined \emph{SAF}s and c-\emph{SAF}s satisfy \cite{c+a07}'s rationality postulates for the complete (and so by implication the grounded, preferred and stable) semantics defined in Definition \ref{Dung semantics}.

Theorem \ref{TheoremSub-argumentClosure} below states that for any argument $A$ in a complete extension $E$, all sub-arguments of $A$ are in $E$. Theorem \ref{TheoremStrictRulesClosure} then states that the conclusions of arguments in a complete extension are closed under strict inference (recall that the closure $Cl_{R_s}(S)$ of $S$ under strict rules is defined in Definition~\ref{consistency}).

\begin{theorem}\label{TheoremSub-argumentClosure}[\textbf{Sub-argument Closure}] Let $\Delta$ = $(\A,\C,\preceq)$ be a \emph{(c-)SAF} and $E$ an \emph{att}-complete extension of $\Delta$. Then for all $A \in E$: if $A' \in \mathtt{Sub}(A)$ then $A' \in E$.
\end{theorem}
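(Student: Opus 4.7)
The plan is to use the completeness of $E$: since every argument acceptable with respect to $E$ already lies in $E$, it suffices to show that every proper sub-argument $A'$ of $A \in E$ is acceptable with respect to $E$. The core idea is that any attack on $A'$ is simultaneously an attack on $A$ (targeting the very same sub-argument), so any defeater of $A'$ must already be defeated by some member of $E$ by the admissibility of $E$.

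First, I would fix $A \in E$ and $A' \in \mathtt{Sub}(A)$, and in the \emph{c-SAF} case observe that $A'$ really is an argument of $\Delta$: since $\mathtt{Prem}(A') \subseteq \mathtt{Prem}(A)$ and c-consistency is preserved under subsets (if $S' \subseteq S$ and $S' \vdash \varphi,-\varphi$ then so does $S$), $\mathtt{Prem}(A')$ is c-consistent. Hence $A' \in \A$ in both the \emph{SAF} and \emph{c-SAF} cases.

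Next, to show acceptability of $A'$ with respect to $E$, I would take any $B$ that defeats $A'$. By Definition~\ref{DefDefeats}, $B$ attacks $A'$ on some $A'' \in \mathtt{Sub}(A')$, and either the attack is preference-independent or $B \nprec A''$. Since $\mathtt{Sub}(A') \subseteq \mathtt{Sub}(A)$, we have $A'' \in \mathtt{Sub}(A)$, so $B$ attacks $A$ on $A''$ by exactly the same sub-argument (Definition~\ref{DefAttacks} only looks at $\mathtt{Conc}(B)$ and $A''$). The classification of the attack as preference-dependent or preference-independent, and the preference comparison $B \nprec A''$, both depend only on $B$ and $A''$, not on which containing argument is being attacked. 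Hence $B$ defeats $A$.

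Finally, since $E$ is complete it is admissible, so from $A \in E$ and the fact that $B$ defeats $A$ we obtain some $C \in E$ with $C$ defeating $B$. Thus $A'$ is acceptable with respect to $E$, and completeness of $E$ gives $A' \in E$. The argument is essentially a bookkeeping exercise; the only genuine point to be careful about is that ``defeat'' and ``attack'' as given in Definitions~\ref{DefAttacks} and \ref{DefDefeats} depend purely on the targeted sub-argument, so that a defeater of a sub-argument automatically lifts to a defeater of the whole. No reasonableness of $\preceq$ or well-definedness of $\Delta$ is needed here.
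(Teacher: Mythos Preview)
Your proof is correct and follows essentially the same route as the paper's: show that $A'$ is acceptable with respect to $E$ (the paper packages this as Lemma~\ref{Basic properties}-\ref{Basic4}, whose proof is exactly the lifting argument you spell out), and then invoke completeness to conclude $A' \in E$.

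There is one minor but noteworthy difference. The paper's proof inserts an extra step, appealing to Proposition~\ref{PropositionConflictFree} to verify that $E \cup \{A'\}$ is attack-conflict-free before invoking completeness. That proposition in turn rests on the reasonableness of $\preceq$ and well-definedness of the \emph{(c-)SAF}. You correctly observe that this step is not needed: by the paper's Definition~\ref{Dung semantics}, a complete extension already contains every argument acceptable with respect to it, so once $A'$ is acceptable the conclusion $A' \in E$ is immediate, and conflict-freeness of $E \cup \{A'\}$ then follows trivially since $A' \in E$. Your remark that neither reasonableness of $\preceq$ nor well-definedness of $\Delta$ is required for this particular theorem is therefore accurate; the paper's proof imports those assumptions only through the redundant invocation of Proposition~\ref{PropositionConflictFree}. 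Your explicit check that $A' \in \A$ in the \emph{c-SAF} case (via downward closure of c-consistency) is also a useful point that the paper leaves implicit.
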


\begin{theorem}\label{TheoremStrictRulesClosure}[\textbf{Closure under Strict Rules}] Let $\Delta$ = $(\A,\C,\preceq)$ be a \emph{(c-)SAF} and $E$ an \emph{att}-complete extension of $\Delta$. Then $\{ \mathtt{Conc}(A) | A \in E\}$ = $Cl_{R_s}(\{ \mathtt{Conc}(A) | A \in E\})$.
\end{theorem}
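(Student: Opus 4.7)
The plan is to prove the nontrivial inclusion $Cl_{R_s}(S) \subseteq S$, where $S = \{\mathtt{Conc}(A) \mid A \in E\}$; the reverse inclusion $S \subseteq Cl_{R_s}(S)$ is immediate from the definition of closure. I would proceed by induction on the construction of $Cl_{R_s}(S)$. The base case where $\varphi \in S$ is trivial. For the inductive step, suppose $\varphi$ is the consequent of a strict rule $\psi_1,\ldots,\psi_n \to \varphi \in \R_s$ whose antecedents all lie in $Cl_{R_s}(S)$; by the induction hypothesis pick $A_1,\ldots,A_n \in E$ with $\mathtt{Conc}(A_i) = \psi_i$, and form the strict continuation $A = A_1,\ldots,A_n \to \varphi$ of $\{A_1,\ldots,A_n\}$. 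The target is then to show $A \in E$, so that $\varphi = \mathtt{Conc}(A) \in S$.

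Before appealing to completeness I would verify that $A$ lies in the framework's set of arguments $\A$: finiteness follows from finiteness of the $A_i$'s, and for a \emph{SAF} nothing else is required. For a \emph{c-SAF} I would invoke Proposition~\ref{Lemma4}: since each $A_i \in E$ is acceptable with respect to the conflict-free $E$, the set $\mathtt{Prem}(A) = \bigcup_i \mathtt{Prem}(A_i)$ is c-consistent, so $A \in \A$. The key step is then to show that $A$ is acceptable with respect to $E$, from which completeness of $E$ forces $A \in E$. Suppose $B$ defeats $A$, attacking it on some $A' \in \mathtt{Sub}(A)$. I would first argue $A' \neq A$ by a short case analysis on attack type: $A$'s top rule is strict, which rules out both undercut and rebut at $A$ itself (each requires a defeasible top rule), while $A$ is not an ordinary premise, ruling out undermine at $A$. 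Hence $A' \in \mathtt{Sub}(A_i)$ for some $i$, and the same attack of $B$ on $A'$ is equally an attack of $B$ on $A_i$.

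Since the defeat condition in Definition~\ref{DefDefeats} depends only on the attacker $B$, the target sub-argument $A'$, and the attack type (preference-independent, or preference-dependent with $B \nprec A'$), it follows that $B$ likewise defeats $A_i$. Because $A_i \in E$ with $E$ complete, some $C \in E$ defeats $B$, so $A$ is defended by $E$, and therefore $A \in E$. The main obstacle is precisely this case analysis showing that $A$ itself cannot be the target sub-argument of any attack, so that all defeats on $A$ are channelled through the $A_i$'s and neutralised by $E$'s defence of them; the auxiliary hurdle in the c-\emph{SAF} case is securing c-consistency of $\mathtt{Prem}(A)$, which Proposition~\ref{Lemma4} delivers.
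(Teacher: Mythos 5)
Your proof is correct and follows essentially the same route as the paper's: the paper reduces the claim to showing that any strict continuation of arguments in $E$ is acceptable w.r.t.\ $E$ (its Lemma~\ref{Lemma3}, whose proof is exactly your case analysis channelling every defeat on $A$ through some $A_i$), invokes Proposition~\ref{Lemma4} for c-consistency in the c-SAF case, and concludes by completeness. The only cosmetic differences are that you organise the argument as an induction on the construction of $Cl_{R_s}(S)$ rather than quoting Lemma~\ref{Lemma3} for an arbitrary strict continuation, and that the paper additionally cites Proposition~\ref{PropositionConflictFree} to confirm $E\cup\{X\}$ is conflict free before invoking completeness --- a step that is harmless to omit, since membership already follows from the definition of a complete extension in Definition~\ref{Dung semantics}.
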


Theorem \ref{TheoremDirectConsistency} below, states that the conclusions of arguments in an admissible extension (and so by implication complete extension) are mutually consistent. Theorem \ref{TheoremIndirectConsistency} then states the mutual consistency of the strict closure of conclusions of arguments in a complete extension.

\begin{theorem}\label{TheoremDirectConsistency}[\textbf{Direct Consistency}] Let $\Delta$ = $(\A,\C,\preceq)$ be a \emph{(c-)SAF} and $E$ an \emph{att}-admissible extension of $\Delta$. Then $\{ \mathtt{Conc}(A) | A \in E\}$ is consistent.
\end{theorem}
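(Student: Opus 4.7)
The plan is to argue by contradiction: assume arguments $A,B \in E$ with $\mathtt{Conc}(A) \in \con{\mathtt{Conc}(B)}$, and derive a violation of attack-conflict-freeness of $E$. I first handle the easy cases: if $B$ is an ordinary premise or has a defeasible top rule, then by Definition~\ref{DefAttacks} $A$ undermines or rebuts $B$ directly, so having both $A$ and $B$ in $E$ immediately contradicts attack-conflict-freeness; the case in which $A$ has one of these structural features is symmetric. This reduces matters to the situation where $A$ and $B$ are each either an axiom premise or have a strict top rule, and in this situation well-formedness (Definition~\ref{DefStrictDefAssumptions}) forbids $\mathtt{Conc}(A)$ from being a \emph{mere} contrary of $\mathtt{Conc}(B)$, so the conclusions are genuine contradictories, $\mathtt{Conc}(A) = -\mathtt{Conc}(B)$.

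Next I branch on fallibility. If both $A$ and $B$ are strict and firm, then $\{\mathtt{Conc}(A),\mathtt{Conc}(B)\} \subseteq Cl_{\R_s}(\K_n)$ is inconsistent, contradicting axiom-consistency. Otherwise, without loss of generality $B$ is plausible or defeasible, and I invoke Proposition~\ref{Prop$B'$-extension} to produce, for each $B' \in M(B)$, a strict continuation $A^+_{B'}$ of $(M(B)\setminus\{B'\}) \cup M(A)$ that attacks $B$ on $B'$; if $A$ is also plausible or defeasible I symmetrically obtain a family $B^+_{A'}$ for $A' \in M(A)$ by applying Proposition~\ref{Prop$B'$-extension} with the roles of $A$ and $B$ swapped. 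For a c-SAF the hypothesis that $\mathtt{Prem}(A)\cup\mathtt{Prem}(B)$ is c-consistent is supplied by Proposition~\ref{Lemma4}, since $A$ and $B$ are both acceptable with respect to the conflict-free set $E$.

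Now write $M(A)\cup M(B) = \{C_1,\ldots,C_n\}$ and let $C^{+\setminus i}$ denote the corresponding strict continuation attacking $C_i$. Clause~2 of reasonableness (Definition~\ref{Def-Reasonable}) yields some index $i$ with $C^{+\setminus i} \not\prec C_i$. The attack of $C^{+\setminus i}$ on $C_i$ is either preference-independent (in which case the defeat is automatic) or preference-dependent (in which case $C^{+\setminus i}\not\prec C_i$ yields the defeat); moreover, because $C_i \in \mathtt{Sub}(A)\cup\mathtt{Sub}(B)$, the same attack also makes $C^{+\setminus i}$ defeat whichever of $A$ or $B$ contains $C_i$. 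Admissibility of $E$ then produces some $D \in E$ that defeats $C^{+\setminus i}$; since $C^{+\setminus i}$ is a strict continuation of $\{C_j : j\ne i\}$, its maximal fallible subarguments are exactly these $C_j$'s, and hence $D$'s attack must land on some $C_j$ with $j\ne i$. Because $C_j \in \mathtt{Sub}(A) \cup \mathtt{Sub}(B)$, $D$ also attacks the enclosing argument in $E$, so $E$ contains two attacking arguments, contradicting attack-conflict-freeness.

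The main obstacle is the last step: one must verify that the defeater $D$ guaranteed by admissibility genuinely attacks a subargument whose enclosing super-argument lies in $E$, and this is exactly where the strict-continuation structure does its real work (it controls where the maximal fallible subarguments of $C^{+\setminus i}$ can sit). A subsidiary bookkeeping point is that reasonableness's second clause must be applied to a family of continuations that mixes those built from $M(A)$ and from $M(B)$, and that preference-independence of undercut, contrary-rebut and contrary-undermine attacks must be tracked alongside the preference-dependent rebuts and undermines so that ``defeats'' is obtained uniformly across the family.
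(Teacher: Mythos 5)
Your proof is correct, and the surrounding case analysis (direct attacks when the attacked conclusion sits on an ordinary premise or a defeasible top rule, axiom consistency when both arguments are strict and firm, well-formedness to force the remaining conclusions to be contradictories, and Proposition~\ref{Lemma4} to supply c-consistency of $\mathtt{Prem}(A)\cup\mathtt{Prem}(B)$ in the c-SAF case) matches the paper's. Where you genuinely diverge is the endgame. The paper takes a \emph{single} strict continuation $X^+_{Y'}$ attacking $Y$, observes via Lemma~\ref{Lemma3} that it is acceptable w.r.t.\ $E$, and then invokes Proposition~\ref{PropositionConflictFree} to conclude that $E\cup\{X^+_{Y'}\}$ is conflict free --- an immediate contradiction with $X^+_{Y'}\rightharpoonup Y$, with no need to upgrade any attack to a defeat. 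You instead apply clause~2 of Definition~\ref{Def-Reasonable} to the mixed family $M(A)\cup M(B)$ to extract one continuation with $C^{+\setminus i}\not\prec C_i$, hence a defeat of the enclosing argument in $E$; admissibility then yields a defender $D\in E$, whose attack you trace, via the strict-continuation structure of $C^{+\setminus i}$, back to a fallible subargument of $A$ or $B$, contradicting attack-conflict-freeness. This is in effect an inlining of the reasoning behind Lemma~\ref{Lemma2} and Proposition~\ref{PropositionConflictFree} rather than a citation of them: it buys self-containedness (you only need part~1 of Proposition~\ref{Prop$B'$-extension} plus reasonableness, not the fundamental-lemma apparatus) at the cost of redoing the defeat/defender bookkeeping, whereas the paper's route is shorter given the lemmas already established. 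One small expositional point: in your ``symmetric'' easy case ($A$ an ordinary premise or with defeasible top rule, $B$ not), you already need well-formedness at that stage to conclude $\mathtt{Conc}(B)\in\con{\mathtt{Conc}(A)}$ so that $B$ indeed attacks $A$; you only state the well-formedness step after the reduction, but the ingredient is available there, so this is a matter of ordering rather than a gap.
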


\begin{theorem}\label{TheoremIndirectConsistency}[\textbf{Indirect Consistency}] Let $\Delta$ = $(\A,\C,\preceq)$ be a \emph{(c-)SAF} and $E$ an \emph{att}-complete extension of $\Delta$. Then $Cl_{\R_s}(\{\mathtt{Conc}(A) | A \in E\})$ is consistent.
\end{theorem}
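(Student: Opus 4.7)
The plan is to prove the theorem as an almost immediate consequence of the two preceding theorems. Theorem~\ref{TheoremDirectConsistency} gives direct consistency of the set of conclusions for any admissible extension, and Theorem~\ref{TheoremStrictRulesClosure} establishes that a complete extension's conclusion set is already a fixed point of $Cl_{\R_s}$. The proof is essentially the composition of these two facts.

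Concretely, the steps are as follows. First, I would observe that any complete extension is in particular admissible (this follows directly from Definition~\ref{Dung semantics}, where a complete extension is conflict-free and every member is acceptable w.r.t.\ it, so all the admissibility conditions are met). Second, applying Theorem~\ref{TheoremDirectConsistency} to $E$ viewed as an \emph{att}-admissible extension, I conclude that $\{\mathtt{Conc}(A) \mid A \in E\}$ is (directly) consistent. Third, applying Theorem~\ref{TheoremStrictRulesClosure} to the \emph{att}-complete extension $E$, I get the set equality
\[
Cl_{\R_s}(\{\mathtt{Conc}(A) \mid A \in E\}) \;=\; \{\mathtt{Conc}(A) \mid A \in E\}.
\]
Substituting the right-hand side into the consistency statement from the second step yields that $Cl_{\R_s}(\{\mathtt{Conc}(A) \mid A \in E\})$ is directly consistent, which is precisely indirect consistency of $\{\mathtt{Conc}(A) \mid A \in E\}$ in the sense of Definition~\ref{consistency}.

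There is no real obstacle here, since all the substantive work has been done in the earlier theorems. The only detail to make explicit is the implicit move from "complete" to "admissible" when invoking Theorem~\ref{TheoremDirectConsistency}, and confirming that the notion of "consistent" appearing in Theorems~\ref{TheoremDirectConsistency} and~\ref{TheoremStrictRulesClosure} is indeed direct consistency (so that, together with the closure equality, it matches the definition of indirect consistency). The argument works uniformly for both \emph{SAF}s and \emph{c-SAF}s, since the two theorems it relies on are themselves stated for both cases.
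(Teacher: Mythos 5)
Your proof is correct and follows exactly the paper's own route: the paper's proof of Theorem~\ref{TheoremIndirectConsistency} is simply ``Follows from Theorems~\ref{TheoremStrictRulesClosure} and~\ref{TheoremDirectConsistency},'' which is precisely the composition you spell out. Your added care about the complete-implies-admissible step and about matching the notion of consistency is a reasonable elaboration of the same argument.
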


Note that the task of showing that \cite{c+a07}'s consistency postulates are satisfied is simplified by the fact that a conflict free set excludes attacking arguments.

\subsection{Comparison of Attack and Defeat Definition of Conflict Free}\label{Section-Attack-Defeat-Comparison}

Under the defeat definition of conflict free, the properties discussed in Section \ref{Section-ASPIC+-Propoerties-Attack} are of course shown to hold for \emph{(c-)SAF}s in the same way as for Dung frameworks. We now state the equivalence of extensions of \emph{(c-)SAF}s under the attack and defeat definitions of conflict free.

\begin{proposition}\label{Prop-Equivalence-a-d-admissible} Let $\Delta$ be a \emph{(c-)SAF}. For $T \in$ $\{$ \emph{admissible}, \emph{complete}, \emph{grounded}, \emph{preferred}, \emph{stable}$\}$, $E$ is an \emph{att}-$T$ extension of $\Delta$ iff $E$ is a \emph{def}-$T$ extension of $\Delta$.\\[-15pt]

\end{proposition}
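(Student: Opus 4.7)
The plan is to observe that, in Definition~\ref{DefinitionSAFExtensions}, acceptability, the complete clause, preferred/grounded maximality/minimality, and the stable ``attacks every outsider'' clause are all phrased over the defeat relation $\D$, so the only place where the att- and def-versions can diverge is in the conflict-free clause itself. Hence it suffices to prove that $S$ is att-admissible iff $S$ is def-admissible; the other four equivalences follow by inspecting the defining clauses, which refer only to $\D$ once conflict-freeness is pinned down.

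For the easy direction (att-admissible $\Rightarrow$ def-admissible), every defeat is an attack by Definition~\ref{DefDefeats}, so any att-conflict-free $S$ is also def-conflict-free, and the acceptability requirement is unchanged.

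The hard direction is def-admissible $\Rightarrow$ att-admissible, which reduces to showing that any def-admissible $S$ is already att-conflict-free. Suppose not: pick $A,B \in S$ with $A \rightharpoonup B$ but $A \not\hookrightarrow B$ (since $S$ is def-conflict-free). By Definition~\ref{DefDefeats}, $A$ preference-dependently attacks $B$ on some $B' \in \mathtt{Sub}(B)$ with $A \prec B'$, i.e.\ a rebut on a defeasibly-topped $B'$ or an undermine on an ordinary-premise $B'$. In either case $B'$ is fallible, $M(B') = \{B'\}$, and $\mathtt{Conc}(A)$ is a contradictory of $\mathtt{Conc}(B')$. Reasonableness of $\preceq$ (Definition~\ref{Def-Reasonable}(1)(i)) forces $A$ to be fallible too, since otherwise $B' \prec A$ would contradict $A \prec B'$. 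For a c-SAF, Lemma~\ref{Lemma4} applied to the def-admissible $S$ with $A,B \in S$ yields c-consistency of $\mathtt{Prem}(A) \cup \mathtt{Prem}(B) \supseteq \mathtt{Prem}(A) \cup \mathtt{Prem}(B')$, so Proposition~\ref{Prop$B'$-extension}(2) applies with the roles swapped (its ``$A$'' $:= B'$, its ``$B$'' $:= A$): there exist $A' \in M(A)$ and a strict continuation $C_\star$ of $(M(A) \setminus \{A'\}) \cup \{B'\}$ that defeats $A$.

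Since $A$ is acceptable w.r.t.\ $S$, some $C \in S$ defeats $C_\star$, so $C$ attacks $C_\star$ on some fallible sub-argument $D$. The fallible sub-arguments of $C_\star$ are exactly those of $(M(A) \setminus \{A'\}) \cup \{B'\}$, hence all lie in $\mathtt{Sub}(A) \cup \mathtt{Sub}(B)$. The classification of the attack on $D$ as preference-independent or preference-dependent, and the comparison $C \prec D$ or $C \nprec D$, depend only on $C$ and on $D$ itself, not on the enclosing argument. Therefore the same attack witnesses $C \hookrightarrow A$ or $C \hookrightarrow B$, contradicting def-conflict-freeness of $S$. This gives att-conflict-freeness, and combined with the (unchanged) acceptability condition, att-admissibility.

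The main obstacle is the last step: one has to transfer an attack originally aimed at the composite $C_\star$ back to $A$ or $B$ while preserving the preference comparison that decides whether the attack rises to a defeat. The key observation making this clean is that defeats are ultimately judged through the targeted sub-argument, and since $C_\star$ adds only strict rules and axiom premises on top of $(M(A) \setminus \{A'\}) \cup \{B'\}$, no new fallible targets are introduced, so every attack point $D$ is simultaneously a sub-argument of $A$ or of $B$ with the identical attack type and preference status.
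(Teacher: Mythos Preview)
Your proof is correct and follows essentially the same strategy as the paper: both reduce the proposition to showing that def-conflict-freeness of an admissible set implies att-conflict-freeness, and both obtain the contradiction by building a strict continuation (using Proposition~\ref{Prop$B'$-extension}) that defeats one of the two offending arguments, then transferring the defending defeat from the continuation back to a sub-argument of $A$ or $B$. The only cosmetic difference is that the paper packages the case analysis (the attacker having a defeasible top rule/ordinary premise versus a strict top rule) into the auxiliary Lemma~\ref{Lemma1}, whereas you invoke Proposition~\ref{Prop$B'$-extension}(2) directly, which absorbs both cases at once; this is a mild streamlining rather than a different argument.
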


Given the previous section's results, Proposition \ref{Prop-Equivalence-a-d-admissible} implies that \cite{c+a07}'s rationality postulates not only hold for \emph{SAF}s under the defeat definition (as already shown in \cite{hp10aspicJAC}), but also for \emph{c-SAF}s under the defeat definition.

\begin{corollary}\label{EquivalenceCorollary} Let $\Delta$ be a \emph{(c-)SAF}. Then Theorems \ref{TheoremSub-argumentClosure}-\ref{TheoremIndirectConsistency}
hold for the \emph{def}-admissible and \emph{def}-complete extensions of $\Delta$.
\end{corollary}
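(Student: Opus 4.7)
The plan is to observe that this corollary is essentially an immediate consequence of combining Proposition~\ref{Prop-Equivalence-a-d-admissible} with Theorems~\ref{TheoremSub-argumentClosure}--\ref{TheoremIndirectConsistency}. Since Proposition~\ref{Prop-Equivalence-a-d-admissible} establishes that the collection of att-$T$ extensions of $\Delta$ coincides setwise with the collection of def-$T$ extensions of $\Delta$ for $T \in \{\text{admissible}, \text{complete}, \text{grounded}, \text{preferred}, \text{stable}\}$, every structural property proved of att-admissible or att-complete extensions automatically transfers to def-admissible or def-complete extensions.

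Concretely, I would proceed as follows. Let $E$ be a def-complete extension of $\Delta$. By Proposition~\ref{Prop-Equivalence-a-d-admissible} applied with $T = \text{complete}$, $E$ is also an att-complete extension of $\Delta$. Applying Theorem~\ref{TheoremSub-argumentClosure} to $E$ then yields sub-argument closure, Theorem~\ref{TheoremStrictRulesClosure} yields closure under strict rules, and Theorem~\ref{TheoremIndirectConsistency} yields indirect consistency. For direct consistency (Theorem~\ref{TheoremDirectConsistency}), which is stated for admissible extensions, I would let $E$ be a def-admissible extension and invoke Proposition~\ref{Prop-Equivalence-a-d-admissible} with $T = \text{admissible}$ to obtain that $E$ is att-admissible, and then apply Theorem~\ref{TheoremDirectConsistency} directly. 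Since every def-complete extension is, by definition, also def-admissible, direct consistency for complete extensions is a special case.

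There is no real obstacle here, as all the substantive work is encapsulated in Proposition~\ref{Prop-Equivalence-a-d-admissible}, whose proof establishes the equivalence between the two notions of conflict-freeness at the extension level. The only thing worth being careful about is to make sure that the quoted theorems are applied with the correct notion of extension (admissible for Theorem~\ref{TheoremDirectConsistency}, complete for the other three), and that one uses the matching instance of Proposition~\ref{Prop-Equivalence-a-d-admissible} in each case. The proof can therefore be written very concisely as a direct chain of implications using these two ingredients.
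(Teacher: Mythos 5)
Your proof is correct and matches the paper's own treatment: the corollary is presented there as an immediate consequence of Proposition~\ref{Prop-Equivalence-a-d-admissible}, which identifies the \emph{def}-$T$ and \emph{att}-$T$ extensions setwise, so Theorems~\ref{TheoremSub-argumentClosure}--\ref{TheoremIndirectConsistency} transfer directly. Your care in pairing Theorem~\ref{TheoremDirectConsistency} with the admissible case and the other three with the complete case is exactly the right bookkeeping.
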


Notice that directly proving satisfaction of the consistency postulates for the defeat definition of conflict free is more involved. One must rely on Proposition \ref{Prop$B'$-extension} to show that an admissible extension contains arguments that do not defeat each other. The trade off is that with the attack definition, proof of the fundamental lemma is more involved since one needs to first show that any argument acceptable w.r.t. an admissible extension is conflict free when included in that extension. It is the proof of \emph{this} result that crucially depends on Proposition \ref{Prop$B'$-extension}. Notice that in both cases, one needs to consider the internal structure of arguments and assume a reasonable preference ordering.

Proposition \ref{Prop-Equivalence-a-d-admissible}'s equivalence begs the question as to why one should advocate the attack definition of conflict free. Firstly, a result that shows that the two different notions of conflict-freeness are (under certain assumptions) equivalent in the extensions they produce is theoretically valuable in itself. Apart from this, we have argued in Section \ref{Section-ArgumentationAndLogic} that the attack definition is conceptually more well justified. In Example \ref{ExampleIllustration3}, neither $B$ or $A$ defeat each other, and neither $B$ or $C$ defeat each other. Under the defeat definition, $\{B, A\}$ and $\{B, C\}$ are `conflict free'. But in what meaningful sense can these sets be said to be conflict free, when they contain elements that are mutually inconsistent? Consider then \cite{AmgoudRepairPAFs}'s example that purports to illustrate violation of the consistency postulates by approaches augmenting Dung frameworks with preferences. An expert argues ($A$) that a given violin is a Stradivarius and therefore expensive. A three-year old child's argument $B$ then states that it is not a Stradivarius. According to \cite{AmgoudRepairPAFs}, $B$ attacks $A$ but $A$ does not attack $B$, and $A$ is preferred over $B$ since the expert is more reliable than the child. \cite{AmgoudRepairPAFs} observe that the unique \emph{PAF}-extension $\{A,B\}$ violates the consistency postulate. In Section \ref{Section Comparison with other works on Preference-based Argumentation} we demonstrate that the problem does not arise if all arguments that can be constructed are taken into account (the expert can use a sub-argument $A'$ of $A$ that defeats $B$ so that $\{A,B\}$ is not admissible), illustrating that the problem has more to do with imperfect reasoners. However, we also note that the attack definition of conflict free is more tolerant of imperfect reasoning. Without taking into account all constructible arguments, $\{A,B\}$ is of course \emph{not} conflict free and so not a \emph{PAF}-extension, and so consistency is not violated.

To conclude, given our advocacy of the attack definition of conflict free, we henceforth assume any extension of a \emph{(c-)SAF} to be attack conflict free, and thus will henceforth
(for a given semantics $T$) refer to a `$T$ extension' rather than an `\emph{att}-$T$ extension'. However, for the results shown in the next section, we will indicate, when appropriate, that the results also hold under the defeat definition of conflict free.

\section{Instantiating Structured Argumentation Frameworks}\label{SectionInstantiations}

We have modified \emph{ASPIC}$^{+}$ in two ways: we have additionally defined \emph{c-SAF}s whose arguments must be built on mutually consistent premises, and motivated an alternative attack definition of conflict free sets. We have shown that
properties and postulates hold for well defined \emph{SAF}s and \emph{c-SAF}s with argument preference orderings that are \emph{reasonable}. In this section we study various ways to instantiate the \emph{ASPIC}$^{+}$ framework. Section \ref{SubSection-ASPIC+WeakestLast} consider ways of `instantiating' preference orderings over arguments in terms of preorderings over defeasible rules and ordinary premises. We show that the defined argument orderings are reasonable. In Section \ref{sectionALstrict} we extend with preferences Amgoud \& Besnard's approach to structured argumentation \cite{a+b09,a+b10} based on Tarski's notion of an abstract logic. We then combine
the extended abstract logic approach with \emph{ASPIC}$^{+}$. In Section \ref{SectionClassicalLogicInstantiations} we define classical logic instantiations of \emph{c-SAF}s, and show an equivalence between one such instantiation and Brewka's Preferred Subtheories \cite{BrewkaPS}.

\subsection{Weakest and Last Link Preference Relations}\label{SubSection-ASPIC+WeakestLast}

In \cite{hp10aspicJAC}, a strict argument ordering $\prec$ is defined on the basis of two preorderings $\leq$ on $\R_d$ and
 $\leq'$ on $\K_p$ under the well known \emph{weakest}-link \cite{BrewkaPS,CayRoy} and \emph{last}-link \cite{KT96,PS97} principles\footnote{Note that the cited papers make use of the principles without explicitly naming them as such.}. Intuitively,
$B \prec A$ is defined by separate set comparisons of the defeasible rules in $B$ and $A$, and the ordinary premises in $B$ and $A$. Then $B \prec A$ by the weakest link principle if:

\begin{enumerate}
  \item from amongst \emph{all the} defeasible rules in $B$ there exists a rule which is weaker than (strictly less than according to $\leq$) \emph{all the} defeasible rules in $A$, \emph{and}
  \item from amongst all the ordinary premises in $B$ there is an ordinary premise which is weaker (strictly less than according to $\leq'$) all the ordinary premises in $A$
\end{enumerate}

Then $B \prec A$ by the last link principle if the above set comparison (henceforth referred to as the $\mathtt{Elitist}$ comparison) on defeasible rules is now applied only to the last defeasible rules in $B$ and $A$ (recall the definition of $\mathtt{LastDefRules}$ in Definition \ref{arg}); i.e., `all the last' replaces `all the' in 1). If there are no defeasible rules in $B$ and $A$, then only the ordinary premises are compared, and so $B \prec A$ by the last link principle if 2) holds.

In this paper we provide an alternative interpretation of the weakest and last link principles based on an alternative set comparison (sometimes referred to as the $\mathtt{Democratic}$ comparison \cite{CayRoy}). We provide a general definition of a set comparison $\triangleleft_{\mathtt{s}}$ over sets of  defeasible rules or premises, and which is then parameterised according to the $\mathtt{Elitist}$ and $\mathtt{Democratic}$ comparisons (i.e., $\mathtt{s} = \mathtt{Eli}$ and $\mathtt{Dem}$ respectively). Note that the following definition references a preordering $\leq$ over defeasible rules or premises, where as usual:  $X < Y$ iff  $X \leq Y$ and $Y \nleq X$;  $X \approx Y$ iff  $X \leq Y$ and $Y \leq X$.

\begin{definition}\label{Def-hd12}\textbf{[Orderings $\triangleleft_{\mathtt{s}}$]} Let $\Gamma$ and $\Gamma'$ be finite sets\footnote{Notice that it suffices to restrict $\lhd$ to finite sets since \emph{ASPIC}$^+$ arguments are defined as finite (in Definition \ref{arg}) and so their ordinary premises/defeasible rules must be finite.}. Then $\triangleleft_{\mathtt{s}}$ is defined as follows:

\begin{enumerate}
  \item If $\Gamma = \emptyset$ then $\Gamma \ntriangleleft_{\mathtt{s}} \Gamma'$ ;
  \item If $\Gamma' = \emptyset$ and $\Gamma \not= \emptyset$ then $\Gamma \triangleleft_{\mathtt{s}} \Gamma'$ ;\\[3pt]
  else, assuming a preordering $\leq$ over the elements in $\Gamma \cup \Gamma'$:\\[-13pt]
  \item\label{Eli}
    if $\mathtt{s} = \mathtt{Eli}$:\\[2pt]
  $\Gamma \triangleleft_{\mathtt{Eli}} \Gamma'$ if $\exists X \in \Gamma$ s.t. $\forall Y \in \Gamma'$, $X < Y$.\\[3pt]
   else:\\[-13pt]
  \item\label{Dem1} if $\mathtt{s} = \mathtt{Dem}$:\\[2pt]
  $\Gamma \triangleleft_{\mathtt{Dem}} \Gamma'$ if $\forall X \in \Gamma$, $\exists Y \in \Gamma'$, $X < Y$.

\end{enumerate}

Then $\Gamma \trianglelefteq_{\mathtt{s}} \Gamma'$ iff $\Gamma  \triangleleft_{\mathtt{s}} \Gamma'$ or
 $\Gamma' = \Gamma$.\footnote{\sm{Where `=' denotes identity.}}

\end{definition}

Conditions 1 and  2 of Definition \ref{Def-hd12} intuitively impose that for any sets of defeasible rules/ordinary premises $S$ and $S'$,  if $S$ is the empty set, it cannot be that $S \triangleleft_{\mathtt{s}} S'$, and if $S'$ is the empty set, it must be that $S \triangleleft_{\mathtt{s}} S'$ for any non-empty $S$.
%
%

\begin{definition}\label{Def Last Link}[\textbf{Last-link principle}] Let $\mathtt{s} \in $ $\{\mathtt{Eli}, \mathtt{Dem}\}$. Then $B \prec A$ under the last-link principle iff\\[-12pt]
\begin{enumerate}
\item $\mathtt{LastDefRules}(B) \triangleleft_\mathtt{s} \mathtt{LastDefRules}(A)$; or\\[-17pt]
\item $\mathtt{LastDefRules}(B) = \emptyset$, $\mathtt{LastDefRules}(A) = \emptyset$, and $\mathtt{Prem_{p}}(B)$ $\triangleleft_\mathtt{s}$ $\mathtt{Prem_{p}}(A)$
\end{enumerate}
Then $B \preceq A$ iff $B \prec A$ or, if $\mathtt{LastDefRules}(A) \neq \emptyset$ then
$\mathtt{LastDefRules}(A) = \mathtt{LastDefRules}(B)$, else $\mathtt{Prem_{p}}(A) = \mathtt{Prem_{p}}(B)$.
\end{definition}

\begin{definition}\label{Def Weakest Link}[\textbf{Weakest-link principle}] Let $\mathtt{s} \in $ $\{\mathtt{Eli}, \mathtt{Dem}\}$. Then $B \prec A$ under the weakest-link principle iff:

\begin{enumerate}

    \item If both $B$ and $A$ are strict, then $\mathtt{Prem_{p}}(B) \triangleleft_\mathtt{s} \mathtt{Prem_{p}}(A)$, else;\\[-17pt]
    \item If both $B$ and $A$ are firm, then $\mathtt{DefRules}(B) \triangleleft_\mathtt{s} \mathtt{DefRules}(A)$, else;\\[-17pt]
    \item $\mathtt{Prem_{p}}(B) \triangleleft_\mathtt{s} \mathtt{Prem_{p}}(A)$ and $\mathtt{DefRules}(B) \triangleleft_\mathtt{s} \mathtt{DefRules}(A)$
\end{enumerate}
Then $B \preceq A$ iff $B \prec A$ or, $\mathtt{DefRules}(A) = \mathtt{DefRules}(B)$ and  $\mathtt{Prem_{p}}(A) = \mathtt{Prem_{p}}(B)$.
\end{definition}


\sm{Notice that in this paper we correct an anomaly in \cite{hp10aspicJAC}'s definition of the weakest link principle, in which if both $B$ and $A$ are strict (contain no defeasible rules) but not firm (and so do contain ordinary premises), then $B \prec A$ if $B$'s set of ordinary premises is strictly inferior to $A$'s set of ordinary premises. However, if both $B$ and $A$ are firm (contain no ordinary premises) but not strict (and so do contain defeasible rules), then it is not the case that $B \prec A$ if $B$'s set of defeasible rules is strictly inferior to $A$'s set of defeasible rules. Thus there is an asymmetry in the way that premises and defeasible rules are compared. In Definition \ref{Def Weakest Link} above, the weakest link is defined so that the defeasible rules and ordinary premises are treated in the same way.}

\begin{example}\label{ExampleIllustration5}[\textbf{Example \ref{ExampleIllustration2} continued}] Given:
\begin{itemize}
  \item $r \Rightarrow q < a \Rightarrow p$;
  \item $\neg r <' r$ ; $\neg a \approx' r$ ; $\sim s <' \neg r$
\end{itemize}
on defeasible rules and ordinary premises in Example \ref{ExampleIllustration1}, and employing the abbreviations $\mathtt{DR}$ for $\mathtt{DefRules}$ and $\mathtt{LDR}$ for $\mathtt{LastDefRules}$, we have:
\\[2pt]
\noindent $\mathtt{DR}(A)$ = $\mathtt{LDR}(A)$ = $\{a \Rightarrow p\}$, $\mathtt{Prem_{p}}(A) = \{a\}$;\\[2pt]
$\mathtt{DR}(A')$ = $\mathtt{LDR}(A')$ = $\emptyset$, $\mathtt{Prem_{p}}(A') = \{a\}$;\\[2pt]
$\mathtt{DR}(B)$ = $\mathtt{LDR}(B)$ = $\{\sim s \Rightarrow t, r \Rightarrow q\}$, $\mathtt{Prem_{p}}(B) = \{\sim s, r\}$;\\[2pt]
$\mathtt{DR}(B_2)$ = $\mathtt{LDR}(B_2)$ = $\emptyset$, $\mathtt{Prem_{p}}(B_2) = \{r\}$;\\[2pt]
$\mathtt{DR}(C)$ = $\mathtt{LDR}(C)$ = $\emptyset$, $\mathtt{Prem_{p}}(C) = \{\neg r\}$.\\[3pt]
Then:\\[-17pt]

\begin{itemize}
  \item $\mathtt{LDR}(B) \triangleleft_{\mathtt{Eli}} \mathtt{LDR}(A)$ and so  $B \prec A$ under the last-link principle.\\ $\mathtt{DR}(B) \triangleleft_{\mathtt{Eli}} \mathtt{DR}(A)$, but $\mathtt{Prem_{p}}(B) \ntriangleleft_{\mathtt{Eli}} \mathtt{Prem_{p}}(A)$, so $B \nprec A$   under the weakest-link principle.
  \item $\mathtt{LDR}(B) \ntriangleleft_{\mathtt{Dem}} \mathtt{LDR}(A)$ and $\mathtt{Prem_{p}}(B) \ntriangleleft_{\mathtt{Dem}} \mathtt{Prem_{p}}(A)$, hence $B \nprec A$ under the last or weakest-link principle.
  \item $\mathtt{Prem_{p}}(C) \triangleleft_{\mathtt{Eli}} \mathtt{Prem_{p}}(B_2)$  and so $C \prec B_2$ under last or weakest-link principles.
  \item $\mathtt{Prem_{p}}(C) \triangleleft_{\mathtt{Dem}} \mathtt{Prem_{p}}(B_2)$ and so $C \prec B_2$ under last or weakest-link principle.
\end{itemize}

\end{example}

A natural question to ask is whether comparisons other than $\mathtt{Democratic}$ or $\mathtt{Elitist}$ can be employed when defining $\triangleleft_{\mathtt{s}}$ in Definition \ref{Def-hd12}.
We identify a class of `reasonable inducing' comparisons:

\begin{definition}\label{Def-s-reasonable inducing}[\textbf{Inducing reasonable orderings}]  \sm{$\triangleleft_{\mathtt{s}}$} is said to \emph{reasonable inducing} if it is a strict partial ordering (irreflexive and transitive), and:
\begin{quote}
\hspace{-5mm}  for any $\mathtt{kr} \in \{\mathtt{LastDefRules},\mathtt{DefRules},\mathtt{Prem_{p}}\}$, for all arguments $B_1,\ldots,B_n, A$\\[-14pt]

 \hspace{-5mm}  such that $\bigcup_{i=1}^n\mathtt{kr}(B_i)$ \sm{$\lhd_{\mathtt{s}}$} $\mathtt{kr}(A)$, it holds that for some $i = 1\ldots n$, \mbox{$\mathtt{kr}(B_i) \sm{\triangleleft_{\mathtt{s}}} \mathtt{kr}(A)$}
%
%
%
%

\end{quote}

\end{definition}

The last and weakest link orderings are reasonable under the assumption that they are defined on the basis of reasonable inducing set comparisons. 

\begin{proposition}\label{PropLastReasonable} Let $\preceq$ be defined according to the last-link principle, based on a reasonable inducing $\triangleleft_{\mathtt{s}}$. Then $\preceq$ is reasonable.
\end{proposition}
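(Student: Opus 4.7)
The plan is to verify both parts of Definition~\ref{Def-Reasonable} for $\preceq$ as given by Definition~\ref{Def Last Link}. As a preliminary, I would first record the elementary fact that if $A'$ is a strict continuation of $\{A_1,\ldots,A_n\}$, then $\mathtt{LastDefRules}(A') = \bigcup_{i=1}^n \mathtt{LastDefRules}(A_i)$ and $\mathtt{Prem}_{p}(A') = \bigcup_{i=1}^n \mathtt{Prem}_{p}(A_i)$. The second identity is immediate from Definition~\ref{Def-StrictExtension}, and the first follows by a routine structural induction on $A'$: a strict continuation only layers strict inferences and axioms around the $A_i$, so every defeasible rule present in the $A_i$ remains a \emph{last} defeasible rule in $A'$ and no new defeasible rules appear.

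For condition~(1)(i), if $A$ is strict and firm then $\mathtt{LastDefRules}(A) = \mathtt{Prem}_{p}(A) = \emptyset$. A defeasible $B$ has $\mathtt{LastDefRules}(B) \neq \emptyset$, so clause~2 of Definition~\ref{Def-hd12} gives $\mathtt{LastDefRules}(B) \triangleleft_{\mathtt{s}} \emptyset$ and hence $B \prec A$ via the first disjunct of Definition~\ref{Def Last Link}; a plausible but strict $B$ has $\mathtt{LastDefRules}(B) = \emptyset$ and $\mathtt{Prem}_{p}(B) \neq \emptyset$, and the second disjunct fires. For condition~(1)(ii), a strict firm $B$ has both $\mathtt{LastDefRules}(B)$ and $\mathtt{Prem}_{p}(B)$ empty, and clause~1 of Definition~\ref{Def-hd12} then blocks both disjuncts of Definition~\ref{Def Last Link} for every $A$, so $B \nprec A$. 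Condition~(1)(iii) is then immediate from the preliminary fact: since $A'$ is a strict continuation of $\{A\}$, $\mathtt{LastDefRules}$ and $\mathtt{Prem}_{p}$ agree on $A'$ and $A$, so the defining conditions for $A' \prec B$ and $B \prec A'$ are literally the same as those for $A \prec B$ and $B \prec A$.

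The hard part is condition~(2), which I would establish by contradiction. Assume $C^{+\backslash i} \prec C_i$ for every $i$, and let $I = \{i : \mathtt{LastDefRules}(C_i) \neq \emptyset\}$. If $I \neq \emptyset$, then for each $i \in I$ the second disjunct of Definition~\ref{Def Last Link} is blocked (it needs $\mathtt{LastDefRules}(C_i) = \emptyset$), so the first disjunct must hold, yielding $\bigcup_{j \neq i} \mathtt{LastDefRules}(C_j) = \mathtt{LastDefRules}(C^{+\backslash i}) \triangleleft_{\mathtt{s}} \mathtt{LastDefRules}(C_i)$. The reasonable-inducing property then furnishes some $k(i) \neq i$ with $\mathtt{LastDefRules}(C_{k(i)}) \triangleleft_{\mathtt{s}} \mathtt{LastDefRules}(C_i)$, and clause~1 of Definition~\ref{Def-hd12} forces $\mathtt{LastDefRules}(C_{k(i)}) \neq \emptyset$, so $k(i) \in I$. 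Iterating $k$ on the finite set $I$ yields a cycle, and transitivity together with irreflexivity of $\triangleleft_{\mathtt{s}}$ deliver the contradiction.

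If instead $I = \emptyset$, every $C_j$ and every $C^{+\backslash i}$ is strict, so only the second disjunct of Definition~\ref{Def Last Link} can supply $C^{+\backslash i} \prec C_i$. Letting $I' = \{i : \mathtt{Prem}_{p}(C_i) \neq \emptyset\}$, if $I' = \emptyset$ then each $C^{+\backslash i}$ is strict and firm, whence condition~(1)(ii) already proved gives $C^{+\backslash i} \nprec C_i$, contradicting the hypothesis. Thus $I' \neq \emptyset$, and I rerun the previous cycle argument with $\mathtt{Prem}_{p}$ in place of $\mathtt{LastDefRules}$. The technical crux throughout~(2) is recognising that reasonable-inducing combined with clause~1 of Definition~\ref{Def-hd12} traps the iterates of $k$ inside the set ($I$ or $I'$) where $\triangleleft_{\mathtt{s}}$ actually compares non-empty sets, so that irreflexivity can bite on the eventual cycle.
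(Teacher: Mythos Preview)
Your proof is correct and takes essentially the same approach as the paper. The only cosmetic difference is in condition~(2): the paper unrolls an explicit descending chain (``w.l.o.g.\ $C_2$, then $C_3$, \ldots'') with case checks at each length, whereas your map $k:I\to I$ together with pigeonhole is a tidier packaging of the identical argument; your preliminary fact about $\mathtt{LastDefRules}$ and $\mathtt{Prem}_p$ under strict continuation is likewise invoked (inline, without being stated separately) in the paper's proof.
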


\begin{proposition}\label{PropWeakestReasonable} Let $\preceq$ be defined according to the weakest-link principle, based on a reasonable inducing $\triangleleft_{\mathtt{s}}$. Then $\preceq$ is reasonable.

\end{proposition}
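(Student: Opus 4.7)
The plan is to verify each clause of Definition~\ref{Def-Reasonable} directly from the weakest-link definition, using one key structural observation: since a strict continuation preserves the sets $\mathtt{Prem_p}$ and $\mathtt{DefRules}$, the properties ``strict'' and ``firm'' are preserved by strict continuation, and all comparisons in Definition~\ref{Def Weakest Link} are made on exactly the same sets. This immediately gives clause (iii) of Definition~\ref{Def-Reasonable}-1, since the case chosen in the weakest-link definition for $A$ versus $B$ is the same one chosen for $A'$ versus $B$, and the same for the reverse direction.

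For clauses (i) and (ii) of Definition~\ref{Def-Reasonable}-1, I would argue by case analysis on which of the three clauses of Definition~\ref{Def Weakest Link} applies. For (i), if $A$ is strict and firm then $\mathtt{Prem_p}(A) = \emptyset$ and $\mathtt{DefRules}(A) = \emptyset$, so whichever clause applies to $(B,A)$, the required $\lhd_{\mathtt{s}}$-comparison reduces to ``some non-empty set on the left versus $\emptyset$ on the right'', which holds by condition~2 of Definition~\ref{Def-hd12} once one observes that a plausible or defeasible $B$ has at least one of $\mathtt{Prem_p}(B), \mathtt{DefRules}(B)$ non-empty (and that when clauses 1 or 2 apply the relevant one of these is the one that must be non-empty). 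For (ii), if $B$ is strict and firm then both $\mathtt{Prem_p}(B)$ and $\mathtt{DefRules}(B)$ are empty, so condition~1 of Definition~\ref{Def-hd12} blocks any $\lhd_{\mathtt{s}}$ with $B$ on the left, and thus no clause of Definition~\ref{Def Weakest Link} can give $B \prec A$.

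The main obstacle is Definition~\ref{Def-Reasonable}-2. Suppose for contradiction that $C^{+\backslash i} \prec C_i$ for every $i$. The key observation is that the case (clause 1, 2, or 3 of Definition~\ref{Def Weakest Link}) is uniform across $i$: $C^{+\backslash i}$ and $C_i$ are both strict iff all $C_1,\dots,C_n$ are strict, and similarly for ``firm''. So either (a) all $C_j$ are strict and clause~1 gives $\mathtt{Prem_p}(C^{+\backslash i}) \lhd_{\mathtt{s}} \mathtt{Prem_p}(C_i)$ for every $i$; or (b) all $C_j$ are firm but not all strict and clause~2 gives $\mathtt{DefRules}(C^{+\backslash i}) \lhd_{\mathtt{s}} \mathtt{DefRules}(C_i)$ for every $i$; or (c) clause~3 applies uniformly and both comparisons hold for every $i$. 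In every case I pick the appropriate $\mathtt{kr} \in \{\mathtt{Prem_p},\mathtt{DefRules}\}$, rewrite $\mathtt{kr}(C^{+\backslash i})$ as $\bigcup_{j \neq i}\mathtt{kr}(C_j)$ using Definition~\ref{Def-StrictExtension}, and apply the reasonable-inducing property of $\lhd_{\mathtt{s}}$ to obtain, for every $i$, some $j_i \neq i$ with $\mathtt{kr}(C_{j_i}) \lhd_{\mathtt{s}} \mathtt{kr}(C_i)$.

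To close, I would use that $\lhd_{\mathtt{s}}$ is a strict partial order on the finite family $\{\mathtt{kr}(C_1),\dots,\mathtt{kr}(C_n)\}$, so a $\lhd_{\mathtt{s}}$-minimal element $\mathtt{kr}(C_{i_0})$ exists; then the $j_{i_0} \neq i_0$ just produced witnesses $\mathtt{kr}(C_{j_{i_0}}) \lhd_{\mathtt{s}} \mathtt{kr}(C_{i_0})$, contradicting minimality. The delicate point is really just the uniformity-of-case observation together with the fact that the degenerate subcase where every $C_j$ is both strict and firm is ruled out by the assumption $C^{+\backslash i} \prec C_i$ itself (since then both sides of every comparison would be empty, making $\lhd_{\mathtt{s}}$ fail by condition~1 of Definition~\ref{Def-hd12}).
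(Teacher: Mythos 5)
Your proof is correct and follows essentially the same route as the paper's: condition 1 is verified by direct case analysis on the three clauses of the weakest-link definition (using conditions 1 and 2 of Definition~\ref{Def-hd12} for the empty-set cases), and condition 2 by contradiction via the reasonable-inducing property of $\triangleleft_{\mathtt{s}}$ together with its irreflexivity and transitivity. The only cosmetic differences are that you make the uniformity of the applicable clause across $i$ explicit (the paper instead splits on whether some $\mathtt{DefRules}(C_i)$ or $\mathtt{Prem_p}(C_i)$ is non-empty) and you close condition 2 with a $\triangleleft_{\mathtt{s}}$-minimal element of the finite family rather than the paper's explicit descending chain — the same finiteness argument in different packaging.
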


The following propositions imply that the last and weakest link orderings $\preceq$ defined in Definitions \ref{Def Last Link} and \ref{Def Weakest Link}  are reasonable.

\begin{proposition}\label{PropReasonableInducingEli} $\triangleleft_{\mathtt{Eli}}$ is reasonable inducing.
\end{proposition}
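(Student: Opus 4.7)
The plan is to verify the two conjuncts of Definition \ref{Def-s-reasonable inducing}: (i) $\triangleleft_{\mathtt{Eli}}$ is a strict partial order, and (ii) $\triangleleft_{\mathtt{Eli}}$ satisfies the decomposition property on $\mathtt{kr}$-values for $\mathtt{kr} \in \{\mathtt{LastDefRules},\mathtt{DefRules},\mathtt{Prem_{p}}\}$. Throughout, $<$ denotes the strict part of the underlying preorder $\leq$ on rules or premises; it is irreflexive and transitive by the usual argument from $X<Y$ iff $X\leq Y$ and $Y\nleq X$ together with transitivity of $\leq$.

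For (i), irreflexivity is immediate: if $\Gamma=\emptyset$, clause 1 of Definition \ref{Def-hd12} rules out $\Gamma \triangleleft_{\mathtt{Eli}} \Gamma$, and if $\Gamma\neq\emptyset$, then $\Gamma \triangleleft_{\mathtt{Eli}} \Gamma$ would require some $X\in\Gamma$ with $X<X$, contradicting irreflexivity of $<$. For transitivity, suppose $\Gamma_1 \triangleleft_{\mathtt{Eli}} \Gamma_2$ and $\Gamma_2 \triangleleft_{\mathtt{Eli}} \Gamma_3$. The first forces $\Gamma_1\neq\emptyset$ and the second forces $\Gamma_2\neq\emptyset$. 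If $\Gamma_3=\emptyset$, clause 2 gives $\Gamma_1 \triangleleft_{\mathtt{Eli}} \Gamma_3$; otherwise pick witnesses $X\in\Gamma_1$ with $X<Y$ for every $Y\in\Gamma_2$, and $Y^*\in\Gamma_2$ with $Y^*<Z$ for every $Z\in\Gamma_3$. Then $X<Y^*$, so transitivity of $<$ yields $X<Z$ for every $Z\in\Gamma_3$, i.e.\ $\Gamma_1 \triangleleft_{\mathtt{Eli}} \Gamma_3$.

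For (ii), fix $\mathtt{kr}$ and arguments $B_1,\dots,B_n,A$ with $\bigcup_{i=1}^n \mathtt{kr}(B_i) \triangleleft_{\mathtt{Eli}} \mathtt{kr}(A)$. The hypothesis forces $\bigcup_i\mathtt{kr}(B_i)\neq\emptyset$, so some $B_j$ has $\mathtt{kr}(B_j)\neq\emptyset$. If $\mathtt{kr}(A)=\emptyset$, clause 2 of Definition \ref{Def-hd12} immediately gives $\mathtt{kr}(B_j)\triangleleft_{\mathtt{Eli}}\mathtt{kr}(A)$. Otherwise the defining clause 3 supplies a witness $X\in\bigcup_i\mathtt{kr}(B_i)$ with $X<Y$ for all $Y\in\mathtt{kr}(A)$; choosing the index $i$ with $X\in\mathtt{kr}(B_i)$, this same $X$ witnesses $\mathtt{kr}(B_i)\triangleleft_{\mathtt{Eli}}\mathtt{kr}(A)$.

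There is no substantive obstacle: the decomposition step is almost tautological because the elitist comparison is existentially quantified on the smaller side, so a single witness for the union restricts to one of the components. The only place requiring any care is the empty-set bookkeeping in the transitivity and decomposition arguments, which is handled explicitly by clauses 1 and 2 of Definition \ref{Def-hd12}.
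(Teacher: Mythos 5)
Your proof is correct and follows essentially the same route as the paper's: irreflexivity and transitivity of $\triangleleft_{\mathtt{Eli}}$ via the strict partial order $<$ with the empty-set cases handled by clauses 1 and 2 of Definition \ref{Def-hd12}, and the decomposition property by restricting the existential witness from the union to one of its components. The only cosmetic difference is that you explicitly note the $\Gamma=\emptyset$ subcase in the irreflexivity argument, which the paper leaves implicit.
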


\begin{proposition}\label{PropReasonableInducingDem} $\triangleleft_{\mathtt{Dem}}$ is reasonable inducing.
\end{proposition}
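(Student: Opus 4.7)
The plan is to verify the two requirements of Definition \ref{Def-s-reasonable inducing} in turn: first that $\triangleleft_{\mathtt{Dem}}$ is a strict partial order on finite sets, and then the decomposition property for unions.

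For irreflexivity, the only interesting case is a non-empty finite $\Gamma$, where I would use finiteness of $\Gamma$ to select a $\leq$-maximal element $X \in \Gamma$ (i.e., no $Y \in \Gamma$ with $X < Y$); this $X$ witnesses the failure of the universal condition in Definition \ref{Def-hd12}(\ref{Dem1}), so $\Gamma \ntriangleleft_{\mathtt{Dem}} \Gamma$. For transitivity, given $\Gamma_1 \triangleleft_{\mathtt{Dem}} \Gamma_2 \triangleleft_{\mathtt{Dem}} \Gamma_3$, first observe that $\Gamma_1$ and $\Gamma_2$ must be non-empty by clause~1; if $\Gamma_3 = \emptyset$ then $\Gamma_1 \triangleleft_{\mathtt{Dem}} \Gamma_3$ follows directly from clause~2, and otherwise the conclusion follows by chaining: for any $X \in \Gamma_1$ choose $Y \in \Gamma_2$ with $X < Y$, then $Z \in \Gamma_3$ with $Y < Z$, and transitivity of $<$ gives $X < Z$.

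For the decomposition property, fix $\mathtt{kr} \in \{\mathtt{LastDefRules}, \mathtt{DefRules}, \mathtt{Prem}_p\}$ and assume $\bigcup_{i=1}^n \mathtt{kr}(B_i) \triangleleft_{\mathtt{Dem}} \mathtt{kr}(A)$. By clause~1 applied to the union, $\bigcup_{i=1}^n \mathtt{kr}(B_i) \neq \emptyset$, so pick any $i_0$ with $\mathtt{kr}(B_{i_0}) \neq \emptyset$. I now split on whether $\mathtt{kr}(A)$ is empty. If $\mathtt{kr}(A) = \emptyset$, clause~2 immediately yields $\mathtt{kr}(B_{i_0}) \triangleleft_{\mathtt{Dem}} \mathtt{kr}(A)$. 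Otherwise both sets are non-empty and the hypothesis unfolds via clause~3 to: for every $X \in \bigcup_i \mathtt{kr}(B_i)$, there exists $Y \in \mathtt{kr}(A)$ with $X < Y$. Since $\mathtt{kr}(B_{i_0}) \subseteq \bigcup_i \mathtt{kr}(B_i)$, the same universal property holds when $X$ is restricted to $\mathtt{kr}(B_{i_0})$, giving $\mathtt{kr}(B_{i_0}) \triangleleft_{\mathtt{Dem}} \mathtt{kr}(A)$ by clause~3.

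There is no real obstacle here — the $\mathtt{Dem}$ condition is a universal quantification over the smaller set, which is preserved under taking subsets, so the decomposition step is essentially trivial once the empty-set edge cases dictated by clauses~1 and~2 of Definition \ref{Def-hd12} are handled separately. The only step requiring any care is irreflexivity, where finiteness of the argument (and hence of $\mathtt{kr}(B)$ for the relevant $\mathtt{kr}$) is what guarantees the existence of a $\leq$-maximal element used to block $\Gamma \triangleleft_{\mathtt{Dem}} \Gamma$.
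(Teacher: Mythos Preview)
Your proof is correct and follows essentially the same structure as the paper's: verify irreflexivity and transitivity of $\triangleleft_{\mathtt{Dem}}$, then check the decomposition property by handling the empty-set edge cases via clauses~1 and~2 of Definition~\ref{Def-hd12} before restricting the universal condition to a nonempty $\mathtt{kr}(B_{i_0})$. Your irreflexivity argument via a $\leq$-maximal element is in fact more carefully justified than the paper's, which simply asserts ``$\exists X \in \Gamma,\ X < X$'' without explicitly invoking finiteness.
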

Finally, if \sm{$\triangleleft_{\mathtt{s}}$} is transitive, then the strict weakest and last link orderings $\prec$ are strict partial orders:
\begin{proposition}\label{PropLastStrictPartial} Let $\prec$ be defined according to the last-link principle, based on a set comparison $\triangleleft_{\mathtt{s}}$ that is a strict partial order. Then $\prec$ is a strict partial order.
\end{proposition}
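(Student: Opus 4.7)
The goal is to show that $\prec$, as defined by Definition~\ref{Def Last Link} on top of a set comparison $\triangleleft_{\mathtt{s}}$ that is itself a strict partial order, is both irreflexive and transitive.

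For \emph{irreflexivity}, I would simply observe that for any argument $A$ neither $\mathtt{LastDefRules}(A) \triangleleft_{\mathtt{s}} \mathtt{LastDefRules}(A)$ nor $\mathtt{Prem_p}(A) \triangleleft_{\mathtt{s}} \mathtt{Prem_p}(A)$ can hold, since $\triangleleft_{\mathtt{s}}$ is by assumption irreflexive (this also covers the edge cases when one or both of these sets are empty, given clauses 1 and 2 of Definition~\ref{Def-hd12}). Hence neither clause 1 nor clause 2 of the last-link definition can hold with $A$ on both sides.

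For \emph{transitivity}, suppose $B \prec A$ and $A \prec C$; I would do a case split on which of the two clauses of Definition~\ref{Def Last Link} gives each relation. Write $L(X) = \mathtt{LastDefRules}(X)$ and $P(X) = \mathtt{Prem_p}(X)$.

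\textbf{Case (a):} Both pairs satisfy clause 1, so $L(B) \triangleleft_{\mathtt{s}} L(A) \triangleleft_{\mathtt{s}} L(C)$. Transitivity of $\triangleleft_{\mathtt{s}}$ gives $L(B) \triangleleft_{\mathtt{s}} L(C)$, so $B \prec C$ by clause 1.
\textbf{Case (b):} $B \prec A$ by clause 1, $A \prec C$ by clause 2. Then clause 2 forces $L(A) = \emptyset$, while $L(B) \triangleleft_{\mathtt{s}} L(A) = \emptyset$ forces $L(B) \neq \emptyset$ by clause 1 of Definition~\ref{Def-hd12}. Since clause 2 also gives $L(C) = \emptyset$, clause 2 of Definition~\ref{Def-hd12} yields $L(B) \triangleleft_{\mathtt{s}} L(C)$, hence $B \prec C$ by clause 1 of last-link.
\textbf{Case (c):} $B \prec A$ by clause 2, $A \prec C$ by clause 1. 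Clause 2 gives $L(A) = \emptyset$, but then $L(A) \triangleleft_{\mathtt{s}} L(C)$ contradicts clause 1 of Definition~\ref{Def-hd12}, so this case is vacuous.
\textbf{Case (d):} Both satisfy clause 2. Then $L(B) = L(A) = L(C) = \emptyset$ and $P(B) \triangleleft_{\mathtt{s}} P(A) \triangleleft_{\mathtt{s}} P(C)$; transitivity of $\triangleleft_{\mathtt{s}}$ again yields $P(B) \triangleleft_{\mathtt{s}} P(C)$, so $B \prec C$ by clause 2.

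The reasoning is entirely routine; the only mild subtlety is remembering that clauses 1 and 2 of Definition~\ref{Def-hd12} govern the boundary behaviour of $\triangleleft_{\mathtt{s}}$ when one of its arguments is empty, which is what closes Case (b) and rules out Case (c). No appeal to the specific identity of $\triangleleft_{\mathtt{s}}$ (Elitist or Democratic) is needed beyond the assumed strict-partial-order hypothesis.
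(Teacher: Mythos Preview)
Your proof is correct and follows essentially the same approach as the paper's: irreflexivity is immediate from irreflexivity of $\triangleleft_{\mathtt{s}}$, and transitivity is handled by a case split on which clause of Definition~\ref{Def Last Link} fires for each of the two assumed relations, using clauses 1 and 2 of Definition~\ref{Def-hd12} to handle the empty-$\mathtt{LastDefRules}$ boundary cases. The only difference is cosmetic: the paper organises the split as a nested case analysis on the ``outer'' relation first (your cases (a) and (c) are folded into its Case~1, and your (b) and (d) into its Case~2), whereas you lay out the full $2\times 2$ grid and explicitly note that one cell is vacuous.
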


\begin{proposition}\label{PropWeakestStrictPartial} Let $\prec$ be defined according to the weakest-link principle, based on a set comparison $\triangleleft_{\mathtt{s}}$ that is a strict partial order. Then $\prec$ is a strict partial order.
\end{proposition}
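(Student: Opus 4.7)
The plan is to verify the two defining properties of a strict partial order, irreflexivity and transitivity, in turn.

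For irreflexivity I would, for an arbitrary argument $A$, inspect each of the three clauses of Definition~\ref{Def Weakest Link}. In every clause, $A \prec A$ would force $\mathtt{Prem_{p}}(A) \triangleleft_{\mathtt{s}} \mathtt{Prem_{p}}(A)$ or $\mathtt{DefRules}(A) \triangleleft_{\mathtt{s}} \mathtt{DefRules}(A)$. When the relevant set is non-empty this is ruled out by irreflexivity of $\triangleleft_{\mathtt{s}}$; when it is empty, clause~1 of Definition~\ref{Def-hd12} (which says $\emptyset \ntriangleleft_{\mathtt{s}} \Gamma$ for any $\Gamma$) rules it out directly.

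For transitivity I would assume $C \prec B$ and $B \prec A$ and derive $C \prec A$ by case analysis on which clause of Definition~\ref{Def Weakest Link} produced each of the two hypotheses. The generic case is when clause~3 applies to both pairs: there we have $\mathtt{Prem_{p}}(C) \triangleleft_{\mathtt{s}} \mathtt{Prem_{p}}(B) \triangleleft_{\mathtt{s}} \mathtt{Prem_{p}}(A)$ and $\mathtt{DefRules}(C) \triangleleft_{\mathtt{s}} \mathtt{DefRules}(B) \triangleleft_{\mathtt{s}} \mathtt{DefRules}(A)$, and transitivity of $\triangleleft_{\mathtt{s}}$ together with the fact that no two of $A,B,C$ are simultaneously strict or simultaneously firm in a way that forces an earlier clause lets me reapply clause~3 to conclude $C \prec A$. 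The remaining cases correspond to $A, B, C$ being strict or firm, and these are handled by propagating strictness/firmness through the chain: e.g.\ if $A$ and $B$ are both strict then clause~1 gave $\mathtt{Prem_{p}}(B) \triangleleft_{\mathtt{s}} \mathtt{Prem_{p}}(A)$, so $\mathtt{Prem_{p}}(B) \neq \emptyset$; tracing how $C \prec B$ could have been produced (using the empty-set clauses~1-2 of Definition~\ref{Def-hd12} to rule out impossible combinations) shows that $C$ must also be strict and yields $\mathtt{Prem_{p}}(C) \triangleleft_{\mathtt{s}} \mathtt{Prem_{p}}(B)$, after which transitivity of $\triangleleft_{\mathtt{s}}$ and clause~1 close the case. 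The firm case is symmetric, and mixed cases combine the two arguments.

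The main obstacle is the combinatorial explosion arising from the interaction of the three clauses of Definition~\ref{Def Weakest Link} with the two special empty-set clauses of Definition~\ref{Def-hd12}. To keep the case split manageable I would first record a couple of auxiliary observations: (i)~a strict-and-firm argument can never appear on the left of $\prec$, because both sets on the left side would be empty and Definition~\ref{Def-hd12} clause~1 blocks every clause of the weakest-link definition; (ii)~if $B$ is strict then $\mathtt{DefRules}(B) = \emptyset$ forces any witness of $B \prec A$ to come through clause~1 (since clauses~2 and~3 would demand $\emptyset \triangleleft_{\mathtt{s}} \mathtt{DefRules}(A)$, forbidden by clause~1 of Definition~\ref{Def-hd12}), and symmetrically for firm $B$. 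Together these collapse most of the cases and reduce transitivity to two or three chained applications of transitivity of $\triangleleft_{\mathtt{s}}$.
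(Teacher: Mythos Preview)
Your overall plan matches the paper's proof: both arguments establish irreflexivity by reducing to irreflexivity of $\triangleleft_{\mathtt{s}}$ (or the empty-set clause of Definition~\ref{Def-hd12}), and both establish transitivity by a case split on which clause of Definition~\ref{Def Weakest Link} witnesses $B \prec A$, then on which clause witnesses $C \prec B$, chaining through transitivity of $\triangleleft_{\mathtt{s}}$. Your auxiliary observation~(i) is exactly what the paper uses when it notes that $B$ and $C$ must be plausible or defeasible.

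There is, however, a concrete error in your worked example. When $A$ and $B$ are both strict (so $B \prec A$ comes from clause~1), you claim that tracing $C \prec B$ ``shows that $C$ must also be strict''. This is false. Since $B$ is strict but plausible (it cannot be firm, by your observation~(i)), clause~2 for $C \prec B$ is indeed ruled out, but clause~3 is \emph{not}: if $C$ is defeasible then $\mathtt{DefRules}(C) \neq \emptyset$ and $\mathtt{DefRules}(B) = \emptyset$, and clause~2 of Definition~\ref{Def-hd12} gives $\mathtt{DefRules}(C) \triangleleft_{\mathtt{s}} \emptyset$, so clause~3 can fire. You have the direction of the empty-set rule backwards here: Definition~\ref{Def-hd12} forbids $\emptyset$ on the \emph{left} of $\triangleleft_{\mathtt{s}}$, not on the right. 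The paper handles exactly this sub-case (its case~1.2): with $C$ defeasible one gets $\mathtt{DefRules}(C) \triangleleft_{\mathtt{s}} \mathtt{DefRules}(A)$ for free (since $\mathtt{DefRules}(A) = \emptyset$), and $\mathtt{Prem_{p}}(C) \triangleleft_{\mathtt{s}} \mathtt{Prem_{p}}(A)$ by transitivity, so $C \prec A$ via clause~3 rather than clause~1. The fix is local and your overall structure survives, but the case analysis does not collapse quite as much as you suggest.
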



\subsection{Reconstructing and extending the abstract logic approach as an instance of \emph{ASPIC}$^+$}\label{sectionALstrict}


In \cite{a+b09,a+b10}, Amgoud \& Besnard present an abstract approach to defining the structure of arguments and attacks, based on Tarski's notion of an \emph{abstract logic}.

\begin{definition}\label{DefAbstractLogic}[\textbf{Abstract Logic}] An abstract logic is a pair $(\LA,Cn)$, where $\LA$ is a language and the consequence operator $Cn$ is a function from $2^{\LA}$ to $2^{\LA}$ satisfying the following conditions for all $X \subseteq \LA$:\\[-15pt]

\ben
\item $X \subseteq Cn(X)$\\[-15pt]
\item $Cn(Cn(X)) = Cn(X)$\\[-15pt]
\item $Cn(X) = \bigcup_{Y \subseteq_f X} Cn(Y)$\\[-15pt]
\item $Cn(\{p\}) = \LA$ for some $p \in \LA$\\[-15pt]
\item $\Cn(\emptyset) \neq \LA$
\een
Here $Y \subseteq_f X$ means that $Y$ is a finite subset of $X$. A set $X \subseteq \LA$ is defined as \emph{consistent} if $Cn(X) \neq \LA$, and as inconsistent otherwise.
\end{definition}

\noindent Amgoud \& Besnard  \cite{a+b09} note that the following properties hold:\\[-15pt]

\ben\addtocounter{enumi}{5}
\item If $X \subseteq X'$ then $Cn(X) \subseteq Cn(X')$ (\emph{monotonicity})\\[-15pt]
\item If $Cn(X) = Cn(X')$ then $Cn(X \cup Y) = Cn(X' \cup Y)$
\een

\noindent\cite{a+b09} also restricts its focus to so-called \emph{adjunctive} abstract logics:\\[-15pt]
\ben\addtocounter{enumi}{7}
\item $\forall x,y \in \LA$ such that $Cn(\{x,y\}) \not= Cn(\{x\}), Cn(\{x,y\}) \not= Cn(\{y\})$, $\exists z$ such that $z \neq x$, $z \neq y$ and $Cn(\{z\}) = Cn(\{x,y\})$ \footnote{For example, classical logic is adjunctive because of the conjunction connective.}.
\\[-15pt]
\een

They then define arguments and various kinds of attack relations, and  investigate consistency properties of various types of attack relations when instantiating Dung's framework with arguments and attacks. We discuss this part of their work in Section~\ref{SectionRelatedWork}. We repeat here \cite{a+b10}'s notion of an undermining attack\footnote{\cite{a+b10} call undermining ``undercutting'' but to be consistent with \ASPIC's terminology we rename it to `undermining'.}. We also extend their approach to accommodate a pre-ordering over the formulae in an abstract logic theory.

\begin{definition}\label{ALarg+ALattack}[\textbf{Arguments and attacks in abstract logics}]
Let $(\LA,Cn)$ be an abstract logic and $(\Sigma,\leq)$ a theory in $(\LA,Cn)$, where $\Sigma \subseteq \LA$ and $\leq$ a preordering over $\Sigma$:\\[-13pt]
\bit
\item an \emph{AL-argument} is a pair $(X,p)$ such that: 1) $X \subseteq \Sigma$; 2) $X$ is consistent; 3) $p \in Cn(X)$; 4) no proper subset of $X$ satisfies (1-3).\\[-17pt]
\item $(X,p)$ \emph{AL-undermines} $(Y,q)$ if there exists a $q' \in Y$ such that $\{p,q'\}$ is inconsistent.\\[-17pt]

\eit
\end{definition}

%

 We formally define the notion of an \ASPIC\ argumentation theory based on an abstract logic with preferences. This involves defining the set of strict rules in terms of the abstract-logic's consequence notion but also relating the $\mbox{}^{-}$ relation to the Tarskian notion of consistency. Note that the latter does not allow for defining asymmetric contrary relations, and so we have to assume that \ASPIC's $\mbox{}^{-}$ relation is symmetric. Next, two conditions are needed to relate the $\mbox{}^{-}$ relation to the Tarskian notion of consistency. Firstly, if two formulas are contradictories of each other then they are jointly inconsistent. Secondly, if two formulas are jointly inconsistent, then each of them has a consequence that is a contradictory of the other.
Also, a knowledge base will consist of the elements of an abstract logic theory as ordinary premises, while the argument ordering will be defined in terms of a preordering on the abstract logic theory. 
Finally, to avoid confusion we henceforth refer to the abstract logic notion of consistency as `AL-consistency'.

\begin{definition}\label{asbasedonal}[\textbf{$AT$ and \emph{c-SAF} based on abstract logic with preferences}] Let $(\LA',Cn)$ be an abstract logic and $(\Sigma,\leq')$ a theory in $(\LA',Cn)$. An \emph{abstract logic (AL) argumentation theory} based on $(\LA',Cn)$ and $(\Sigma,\leq')$, is a pair $(AS,\K)$ such that $AS$ is an argumentation system $(\LA,\mbox{}^{-},\R,n)$ based on $(\LA',Cn)$, where:
\ben
\item $\LA = \LA'$;
\item\label{AL-RS} $\R_d = \emptyset$,  and for all finite $S \subseteq \LA$ and $p \in \LA$, $S \imp p \in \R_s$ iff $p \in Cn(S)$;
\item\label{concon} $\mbox{}^{-}$ is defined such that:
\ben
\item\label{concona} if $\varphi \in \con{\psi}$ then $\psi \in \con{\varphi}$;
\item\label{conconb} if  $\varphi \in \con{\psi}$ then $\{\varphi,\psi\}$ is AL-inconsistent;
\item\label{conconc} if $\{\varphi,\psi\}$ is AL-inconsistent then there exists a $\varphi' \in Cn(\{\varphi\})$ such that $\varphi' \in \con{\psi}$.
\item\label{concond} $\con{\varphi}$ is nonempty for all $\varphi$.
\een
\een
\noindent $\K$ is a knowledge base  such that $\K_n = \emptyset$ and $\K_p = \Sigma$.\\[2pt]
\noindent $(\A,\C,\preceq)$ is the \emph{c-SAF} based on $(AS,\K)$, as defined in Definition \ref{DefinitionStructuredAF} and where $\preceq$ is defined in terms of $\leq'$ as in Section~\ref{SubSection-ASPIC+WeakestLast}. We also say that $(\A,\C,\preceq)$ is the \emph{c-SAF} based on $(\LA',Cn)$ and $(\Sigma,\leq')$.
\end{definition}

We can then show that a \emph{c-SAF} based on an abstract logic with preferences is well defined:

\begin{proposition}\label{ab=aspic}
A \emph{c-SAF} based on an AL argumentation theory is closed under contraposition, axiom consistent, c-classical, and well-formed.
\end{proposition}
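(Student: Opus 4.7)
The plan is to dispatch the four properties in turn, with the interplay between adjunctivity and conditions 3a--3d of Definition~\ref{asbasedonal} as the central technical device.

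Well-formedness is vacuous: condition 3a makes $\mbox{}^{-}$ symmetric, so no formula qualifies as a contrary (in the asymmetric sense of Definition~\ref{argumentation-system}) of another, and the well-formedness hypothesis is never triggered. Axiom consistency is almost as quick: since $\K_n = \emptyset$ and strict rules $S \imp p$ exist precisely when $p \in Cn(S)$ (construction~\ref{AL-RS}), one has $Cl_{\R_s}(\K_n) = Cn(\emptyset)$, which is AL-consistent by axiom~5 of Definition~\ref{DefAbstractLogic}; moreover, if some pair $\phi, \psi \in Cn(\emptyset)$ satisfied $\phi \in \con{\psi}$, then 3b together with monotonicity and idempotence of $Cn$ would force $Cn(\emptyset) = \LA$, contradicting axiom~5.

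The two remaining properties share a common ``collapse-and-extract'' template. For closure under contraposition, assume $S \vdash \phi$. Compactness (axiom~3 of Definition~\ref{DefAbstractLogic}) supplies a finite $S_0 \subseteq S$ with $\phi \in Cn(S_0)$. Set $T_0 = (S_0 \setminus \{s\}) \cup \{-\phi\}$; this is finite and nonempty, since it contains $-\phi$. By monotonicity $\phi \in Cn(T_0 \cup \{s\})$ and $-\phi \in Cn(T_0 \cup \{s\})$ trivially, so 3b plus idempotence gives $Cn(T_0 \cup \{s\}) = \LA$. Iterating adjunctivity on the finite nonempty $T_0$ yields a single $t \in \LA$ with $Cn(\{t\}) = Cn(T_0)$, and the substitution property (item~7 in the list of abstract-logic properties) gives $Cn(\{t, s\}) = Cn(T_0 \cup \{s\}) = \LA$; in particular $\{t, s\}$ is AL-inconsistent. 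Condition~3c now supplies $\varphi' \in Cn(\{t\}) = Cn(T_0)$ with $\varphi' \in \con{s}$, and 3a identifies $\varphi'$ as a contradictory of $s$. Hence $-s \in Cn(T_0)$ and the strict rule $T_0 \imp -s$ witnesses $(S \setminus \{s\}) \cup \{-\phi\} \vdash -s$.

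For c-classicality I would first establish the auxiliary lemma that, in this setting, $S$ is c-inconsistent iff $S$ is AL-inconsistent (forward via 3b and idempotence; backward via 3d). Consequently a minimally c-inconsistent $S$ is minimally AL-inconsistent and, by compactness applied to the witnessing pair $\varphi, -\varphi \in Cn(S)$, must be finite. For any $\varphi \in S$ with $|S| \geq 2$, the set $S \setminus \{\varphi\}$ is finite, nonempty, and AL-consistent, while $S$ is AL-inconsistent; the same collapse-and-extract reasoning (adjunctivity to get $t$ with $Cn(\{t\}) = Cn(S \setminus \{\varphi\})$, then 3c and 3a on the AL-inconsistent pair $\{t, \varphi\}$) produces $-\varphi \in Cn(S \setminus \{\varphi\})$, as required. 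The main obstacle I anticipate is the bookkeeping of this two-stage reduction: adjunctivity and 3c both operate only on pairs, so one must collapse a multi-premise set down to a single formula and then invoke the substitution property to preserve AL-inconsistency against the remaining premise on the other side of the pair. A secondary subtlety is the singleton edge case of c-classicality ($S = \{\varphi\}$, where $S \setminus \{\varphi\} = \emptyset$), which requires $-\varphi \in Cn(\emptyset)$ directly --- a step adjunctivity cannot perform on its own and which seems to need either an auxiliary assumption on the abstract logic or an appeal to 3c with some $\psi \in Cn(\emptyset)$ in place of the collapsed premise.
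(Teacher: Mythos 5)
Your proposal is correct and follows essentially the same route as the paper's own proof, which merely packages your ``collapse-and-extract'' steps into Lemmas~\ref{prop2}, \ref{easylemma} and \ref{hardlemma} (and is terser on axiom consistency, where your appeal to condition 3(\ref{conconb}) and axiom~5 of Definition~\ref{DefAbstractLogic} is the argument actually needed). The singleton edge case of c-classicality that you flag is genuine: the paper's proof silently applies adjunction to $S \setminus \{s\} = \emptyset$, and your suggested repair via Definition~\ref{asbasedonal}(\ref{conconc}) against some $\psi \in Cn(\emptyset)$ only goes through if $Cn(\emptyset)$ is nonempty, so you have identified a gap shared by the published argument rather than one peculiar to your own.
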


Since $\preceq$ is reasonable, Proposition \ref{ab=aspic} implies that all the results and rationality postulates in Sections \ref{Section-ASPIC+-Propoerties-Attack} and \ref{Section-ASPIC+R-RationalityPostulates} hold for \emph{c-SAF}s based on an abstract logic with preferences. However, note that these \emph{c-SAF}s are instantiated by \ASPIC\ arguments and undermining attacks (since rebuts and undercuts only apply to defeasible rules). The question naturally arises as to whether they are equivalent to the AL arguments and undermining attacks in Definition \ref{ALarg+ALattack}. We first show that the attacks are indeed equivalent. To do so, we define the notion of an \emph{AL-c-SAF}:

\begin{definition}\label{DefAL-ASPIC-Attacks} Let an \emph{\ASPIC-AL-undermining} attack be defined in the same way as an
\ASPIC undermining attack, with `$\mathtt{Conc}(A) \vdash -\varphi$' replacing `$\mathtt{Conc}(A) \in \con{\varphi}$' in Definition \ref{DefAttacks}.\\[2pt] Then an \emph{AL-c-SAF} defined by $(AS,\K)$ is defined as in Definition \ref{asbasedonal}, with `$(X,Y) \in \C$ iff $X$ \ASPIC-AL-undermines $Y$' replacing `$(X,Y) \in \C$ iff $X$ attacks $Y$' in Definition \ref{DefinitionStructuredAF}.
\end{definition}

 Given Lemma \ref{LemmaFact} in Section \ref{SectionProofsAL}, which shows that $\mathtt{Conc}(A) \vdash -\varphi$ iff $\{\mathtt{Conc}(A),\varphi\}$ is AL-inconsistent, then the following result shows that \ASPIC's undermining attacks faithfully reconstruct abstract logic undermining attacks:
\begin{proposition}\label{ALprop}\footnote{
Note that in the Appendix, the proof of Proposition \ref{ALprop} shows that the result holds under both attack \emph{and} defeat definitions of conflict free. } Let $(AS,\K)$ be based on $(\LA',Cn)$ and $(\Sigma,\leq')$. Let $\Delta_1$ be the \emph{c-SAF} defined by $(AS,\K)$ and $\leq'$, and $\Delta_2$ the \emph{AL-c-SAF} defined by $(AS,\K)$ and $\leq'$. Then, for $T \in \{$complete, grounded, preferred, stable$\}$, $E$ is a $T$ extension of $\Delta_1$ iff $E$ is a $T$ extension of $\Delta_2$.
\end{proposition}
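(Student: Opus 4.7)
The plan is to exploit that $\Delta_1$ and $\Delta_2$ share the same argument set $\A$ and preference ordering $\preceq$, differing only in their attack relation. Because $\R_d = \emptyset$, every attack in either framework must be an undermining: rebuts require a defeasible top rule and undercuts a rule name to target. Moreover, clause~(\ref{concona}) of Definition~\ref{asbasedonal} makes every contrary symmetric, so no undermining is preference-independent. Writing $\C_1, \C_2$ for the two attack relations and $D_1, D_2$ for the induced defeats, I will first establish a tight correspondence between these relations and then transfer extensions across.

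For the correspondence, $\C_1 \subseteq \C_2$ holds because an undermining witness $\mathtt{Conc}(X) \in \con{\varphi}$ makes $\{\mathtt{Conc}(X),\varphi\}$ AL-inconsistent by~(\ref{conconb}), and Lemma~\ref{LemmaFact} then yields $\mathtt{Conc}(X) \vdash -\varphi$. Conversely, if $X$ ASPIC-AL-undermines $Y$ on $\varphi \in \mathtt{Prem}_p(Y)$ in $\Delta_2$, then $\{\mathtt{Conc}(X),\varphi\}$ is AL-inconsistent, so by~(\ref{conconc}) some $\varphi' \in Cn(\{\mathtt{Conc}(X)\})$ lies in $\con{\varphi}$; by clause~(\ref{AL-RS}) the strict rule $\mathtt{Conc}(X) \imp \varphi'$ belongs to $\R_s$, so $X$ has a one-step strict continuation $X^\circ$ with $\mathtt{Conc}(X^\circ) = \varphi'$. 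This $X^\circ$ undermines $Y$ on $\varphi$ in $\Delta_1$, satisfies $\mathtt{Prem}(X^\circ) = \mathtt{Prem}(X)$ (hence $X^\circ \in \A$), and by clause~(iii) of Definition~\ref{Def-Reasonable}, $X \nprec \varphi$ iff $X^\circ \nprec \varphi$. Consequently $D_1 \subseteq D_2$, and every $\Delta_2$-defeat on $Y$ through $\varphi$ from $X$ is realised by a $\Delta_1$-defeat on $Y$ through the same $\varphi$ from a strict continuation $X^\circ$.

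Using this correspondence, I will prove equivalence of complete extensions in both directions. From $\Delta_1$-complete to $\Delta_2$-complete: attack conflict-freeness of $E$ in $\Delta_2$ is shown by contradiction using Theorems~\ref{TheoremSub-argumentClosure} and~\ref{TheoremIndirectConsistency}---if $X,Y \in E$ with $X$ ASPIC-AL-undermining $Y$ on $\varphi$, sub-argument closure places the singleton argument $\varphi$ in $E$ and indirect consistency forbids both $\varphi$ and $-\varphi$ from lying in $Cl_{\R_s}$ of the conclusions of $E$, yet $\mathtt{Conc}(X) \vdash -\varphi$ puts $-\varphi$ there. Acceptability and completeness in $\Delta_2$ then follow by lifting each $\Delta_2$-defeat of $X \in E$ to a $\Delta_1$-defeat $Y^\circ \to X$ through strict continuation, invoking $\Delta_1$-defence to obtain $Z \in E$ defeating $Y^\circ$, and noting that $Z$'s attack travels down to $Y$ through their shared ordinary premises, yielding a $\Delta_2$-defeat from $E$. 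The reverse direction uses $\C_1 \subseteq \C_2$ directly for conflict-freeness, and for acceptability/completeness exploits that any one-step strict continuation $Z^\circ$ of $Z \in E$ lies in $\A$ (same premises as $Z$) and has the same attackable fallible parts as $Z$, so $Z^\circ$ is acceptable w.r.t.\ $E$ in $\Delta_2$, whence $Z^\circ \in E$ by $\Delta_2$-completeness.

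Grounded, preferred and stable extensions then transfer from the complete case: grounded as the minimum complete extension, preferred as a maximal complete extension, and stable from preferred together with the observation that ``$E$ defeats every non-member'' is preserved across the frameworks by $D_1 \subseteq D_2$ and the strict-continuation lift. The main obstacle is the $\Delta_1$-to-$\Delta_2$ conflict-freeness step, which genuinely relies on the rationality postulates---especially indirect consistency---holding for the underlying well-defined \emph{c-SAF}; these are supplied by Proposition~\ref{ab=aspic} together with Section~\ref{SectionPropertiesPostulates}'s results. The same argument goes through verbatim under the defeat definition of conflict-free, since the correspondence was set up at the level of defeats as well as attacks.
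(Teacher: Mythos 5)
Your proposal is correct and follows essentially the same route as the paper's proof: the two-way attack correspondence via Lemma~\ref{LemmaFact} and one-step strict continuations, preservation of the preference comparison by clause~(iii) of Definition~\ref{Def-Reasonable}, and transfer of conflict-freeness and acceptability to obtain equivalence of complete (hence grounded, preferred and stable) extensions under both definitions of conflict-free. The only local difference is that, for conflict-freeness of a $\Delta_1$-complete extension viewed in $\Delta_2$, you invoke sub-argument closure and indirect consistency (Theorems~\ref{TheoremSub-argumentClosure} and~\ref{TheoremIndirectConsistency}), whereas the paper instead places the undermining strict continuation $Y'$ into $E$ via Lemma~\ref{Lemma3} and Proposition~\ref{PropositionConflictFree} and contradicts attack-conflict-freeness in $\Delta_1$; both are sound, since the postulates you rely on are available for the well-defined \emph{c-SAF} $\Delta_1$ by Proposition~\ref{ab=aspic}.
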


Now, observe that we do \emph{not} have an equivalence between \ASPIC\ arguments and AL arguments, because the latter imposes a subset minimality condition on the premises. This condition is not imposed on \ASPIC\ arguments in Definition~\ref{arg}, and neither is it implied by the definition of $\R_s$ in Definition \ref{asbasedonal}.
Consider the following counter-example. Given $q \in Cn(\{p\})$, $s \in Cn(\{p,r\})$ and $q \in Cn(\{s\})$, we obtain $\R_s = \{p \imp q; p,r \imp s; s \imp q \}$. Then we have the strict arguments $\{p\} \vdash q$ and $\{p,r\} \vdash q$ where the latter is not subset minimal.

In general, minimality of premise sets is undesirable. Suppose a defeasible rule $p \Imp q$ and a strict rule $p,r \imp q$: then we clearly do not want to rule out an argument for $q$ with premises $p$ and $r$, since it could well be stronger than the defeasible argument. However, since the \ASPIC\ arguments defined by an AL argumentation theory are strict, we define here the notion of premise minimal \ASPIC\ arguments and show an equivalence with AL arguments.

\begin{definition}\label{DefinitionPremiseMinimal}[\textbf{Premise minimal \ASPIC\ arguments}]
Let for any argument $A$, $A_-$ be any argument such that $\mathtt{Prem}(A_-)$ $\subseteq \mathtt{Prem}(A)$ and $\mathtt{Conc}(A_-) =  \mathtt{Conc}(A)$. Given a set of \ASPIC\ arguments $\A$, let $\A^-$ = $\{A \in \A \mid$ there is no $A_- \in \A$ such that $\mathtt{Prem}(A_-) \subset \mathtt{Prem}(A)\}$ be the premise minimal arguments in $\A$.
\end{definition}

\begin{proposition}\label{abarg=aspicarg} Let $(AS,\K)$ be based on $(\LA',Cn)$ and $(\Sigma,\leq')$. Then $A$ is a c-consistent premise minimal argument on the basis of $(AS,\K)$ iff $(\mathtt{Prem}(A),\mathtt{Conc}(A))$ is an abstract logic argument on the basis of $(\Sigma,\leq')$.
\end{proposition}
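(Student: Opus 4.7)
The plan is to reduce the biconditional to a single bridge lemma connecting strict \ASPIC\ derivability to the consequence operator $Cn$, and then read off each of the four AL-argument conditions (and their converses) from that lemma together with the interface clauses \ref{conconb}--\ref{concond} of Definition~\ref{asbasedonal}.

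First I would prove the bridge lemma: for any \ASPIC\ argument $A$ on the basis of $(AS,\K)$, $\mathtt{Conc}(A) \in Cn(\mathtt{Prem}(A))$; and conversely, whenever $p \in Cn(S)$ for some finite $S \subseteq \Sigma$, there exists an \ASPIC\ argument $A'$ with $\mathtt{Prem}(A') \subseteq S$ and $\mathtt{Conc}(A') = p$. The forward half is a straightforward induction on the tree structure of $A$: the base case uses Tarski axiom 1 ($\varphi \in Cn(\{\varphi\})$), and the inductive step uses the defining clause \ref{AL-RS} together with monotonicity and idempotence of $Cn$, since if each $\mathtt{Conc}(A_i) \in Cn(\mathtt{Prem}(A))$ and the top rule is $\mathtt{Conc}(A_1),\ldots,\mathtt{Conc}(A_n) \to \psi$, then $\psi \in Cn(\{\mathtt{Conc}(A_i)\}_i) \subseteq Cn(Cn(\mathtt{Prem}(A))) = Cn(\mathtt{Prem}(A))$. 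The converse is immediate, since $p \in Cn(S)$ means $S \to p \in \R_s$ by \ref{AL-RS} (using the finitary Tarski axiom 3 to reduce to finite $S$), yielding the one-step argument built from the premises $S$ and that strict rule.

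Next I would establish the bridge between the two notions of consistency: for $P \subseteq \Sigma$, $P$ is c-consistent iff $Cn(P) \neq \LA$. For ($\Leftarrow$), if $P \vdash \varphi$ and $P \vdash -\varphi$, then by the bridge lemma $\varphi, -\varphi \in Cn(P)$; by \ref{conconb} applied to the contradictory pair $\{\varphi,-\varphi\}$, $Cn(\{\varphi,-\varphi\}) = \LA$, so by monotonicity and idempotence $Cn(P) = \LA$. For ($\Rightarrow$), if $Cn(P) = \LA$, then for any $\psi$ with contradictory $-\psi$ (guaranteed by Definition~\ref{argumentation-system} together with \ref{concond}), both $\psi$ and $-\psi$ lie in $Cn(P)$, so the converse half of the bridge lemma yields strict \ASPIC\ arguments $P \vdash \psi$ and $P \vdash -\psi$, contradicting c-consistency.

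With these two facts in hand the main proposition is largely bookkeeping. For the ($\Rightarrow$) direction, assume $A$ is c-consistent and premise-minimal. Condition (1) holds since $\mathtt{Prem}(A) \subseteq \K_p = \Sigma$; condition (2) is the consistency bridge; condition (3) is the bridge lemma; for condition (4), if some $X \subsetneq \mathtt{Prem}(A)$ satisfied (1)--(3), the converse of the bridge lemma would produce an \ASPIC\ argument $A_-$ with $\mathtt{Prem}(A_-) \subseteq X \subsetneq \mathtt{Prem}(A)$ and $\mathtt{Conc}(A_-) = \mathtt{Conc}(A)$, violating premise-minimality. For the ($\Leftarrow$) direction, c-consistency of $\mathtt{Prem}(A)$ follows from (2) via the consistency bridge, and premise-minimality follows from (4): any $A_-$ with $\mathtt{Prem}(A_-) \subsetneq \mathtt{Prem}(A)$ and $\mathtt{Conc}(A_-) = \mathtt{Conc}(A)$ would, by the bridge lemma, yield a proper subset satisfying AL-conditions (1)--(3) (AL-consistency of the subset is inherited from that of $\mathtt{Prem}(A)$ by monotonicity of $Cn$), contradicting (4).

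The main obstacle is the consistency bridge: the asymmetry of \ASPIC's contrary relation forces careful use of clauses \ref{conconb}--\ref{concond} to route between `$\mathtt{Prem}(A)$ derives both $\varphi$ and $-\varphi$' and `$Cn(\mathtt{Prem}(A)) = \LA$'. The bridge lemma itself and the minimality argument are then routine consequences.
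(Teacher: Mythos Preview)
Your proposal is correct and follows essentially the same architecture as the paper: your ``bridge lemma'' is exactly the paper's Lemma~\ref{prop2}, and the minimality and c-consistency checks are then read off from it just as in the paper's proof. One genuine improvement: for the implication ``$Cn(P)=\LA \Rightarrow P$ c-inconsistent'' you simply observe that $Cn(P)=\LA$ contains some $\psi$ together with a contradictory $-\psi$ and invoke the bridge lemma, whereas the paper takes a longer route through adjunction (property~8) and clause~\ref{conconc} to manufacture a specific contradictory pair. Your argument is cleaner and uses strictly fewer hypotheses. A small point you leave implicit in the $(\Leftarrow)$ direction: the statement, read literally, presupposes an \ASPIC\ argument $A$, but the paper's proof also establishes \emph{existence} of such an $A$ from an AL-argument $(X,p)$; this falls out of your converse bridge lemma (the one-step argument via $X \to p \in \R_s$, with $\mathtt{Prem}(A)=X$ forced by AL-minimality), so you might make that explicit.
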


We can then show that for \emph{c-SAF}s \emph{and} \emph{SAF}s, when restricting consideration to arguments with minimal premise sets, the conclusions of arguments in complete extensions remains unchanged, under the assumption that an argument cannot be strengthened by just adding premises. The latter is formulated by requiring that if  $B$ is not strictly preferred to $A$ then $B$ is not strictly preferred to $A_-$ (since if $B$ were strictly preferred to $A_-$ this would imply that $A_-$ has been strengthened by adding premises to obtain $A$).

\begin{proposition}\label{classicalmin}
Let $\Delta$ be the \emph{(c)-SAF} $(\A,\C,\preceq)$ defined on the basis of an $AT$ for which $\preceq$ is defined such that for any $A \in \A$,  if $A \nprec B$ then $A_- \nprec B$.
\\[2pt]
 Let $\Delta^-$ be the premise minimal \emph{(c)-SAF} $(\A^-,\C^-,\preceq^-)$ where:\\[-17pt]

\begin{itemize}
\item $\A^-$ is the set of premise minimal arguments in $\A$.\\[-17pt]
\item $\C^-$ = $\{(X,Y) | (X,Y) \in \C, X,Y \in \A^-\}$.\\[-17pt]
\item $\preceq^-$ = $\{(X,Y) | (X,Y) \in \preceq, X,Y \in \A^-\}$.
\end{itemize}

 Then for $T \in \{$complete, grounded, preferred, stable$\}$, $E$ is a $T$ extension of $\Delta$ iff $E'$ is a $T$ extension of $\Delta^-$, where $E'\subseteq E$ and
$ \bigcup_{X\in E}\mathtt{Conc}(X) = \bigcup_{Y\in E'}\mathtt{Conc}(Y)$.

\end{proposition}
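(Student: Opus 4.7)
The strategy is to show that the restriction map $E \mapsto E \cap \A^-$ is the desired correspondence. Two structural observations do most of the work. First, if $X_-$ is any premise-minimal argument with $\mathtt{Prem}(X_-) \subseteq \mathtt{Prem}(X)$ and $\mathtt{Conc}(X_-) = \mathtt{Conc}(X)$, then every undermining attack on $X_-$ on some $\varphi$ is also an attack on $X$ on the same $\varphi$ (since $\varphi \in \mathtt{Prem}_{p}(X_-) \subseteq \mathtt{Prem}_{p}(X)$), and the attacked sub-argument $[\varphi]$ is literally the same in both cases, so the defeat condition $Z \nprec [\varphi]$ carries over verbatim. Second, any attack from $X$ on $Z$ is matched by an attack from $X_-$ on $Z$ at the same target $Z'$, since Definition~\ref{DefAttacks} depends only on $\mathtt{Conc}(X_-) = \mathtt{Conc}(X)$; and by the standing assumption $X \nprec Z' \Rightarrow X_- \nprec Z'$, defeats from $X$ lift to defeats from $X_-$. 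Finally, by finite descent on $|\mathtt{Prem}(X)|$, every $X \in \A$ has at least one premise-minimal counterpart $X_- \in \A^-$.

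For the forward direction, let $E$ be a $T$-extension of $\Delta$ and put $E' = E \cap \A^-$. The key lemma is: if $X \in E$ then every premise-minimal counterpart $X_- \in \A^-$ of $X$ also lies in $E$. This follows because $E$ is complete and $X_-$ is acceptable with respect to $E$: any $\Delta$-defeater $Z$ of $X_-$ is, by the first observation, also a $\Delta$-defeater of $X$, hence defeated by some $Y \in E$. Given this lemma, conflict-freeness of $E'$ in $\Delta^-$ is immediate (since $\C^- \subseteq \C$), conclusion preservation $\bigcup_{X\in E}\mathtt{Conc}(X) = \bigcup_{Y\in E'}\mathtt{Conc}(Y)$ follows (each $X \in E$ yields an $X_- \in E'$ with the same conclusion), and acceptability of $E'$ in $\Delta^-$ follows from the second observation: for any $\Delta^-$-attacker $Z \in \A^-$ of some $X \in E'$, a defender $Y \in E$ has a premise-minimal counterpart $Y_- \in E \cap \A^- = E'$ that also defeats $Z$. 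The preferred, grounded, and stable cases then follow from the complete case by standard maximality/minimality/coverage arguments, noting for stability that every $Y \in \A$ has some $Y_- \in \A^-$ of the same conclusion, so an attack from $E$ on all of $\A \setminus E$ induces an attack from $E'$ on all of $\A^- \setminus E'$.

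For the backward direction, given $E'$ a $T$-extension of $\Delta^-$, I would define $E = \{X \in \A \mid \text{some premise-minimal counterpart } X_- \text{ of } X \text{ lies in } E'\}$, and verify that $E$ is a $T$-extension of $\Delta$ satisfying $E \cap \A^- = E'$ and $\bigcup_{X\in E}\mathtt{Conc}(X) = \bigcup_{Y\in E'}\mathtt{Conc}(Y)$; the two observations, together with completeness of $E'$, yield both conflict-freeness and acceptability of $E$. The main obstacle will be rebut and undercut attacks, which depend on the defeasible-rule structure of the attacked argument rather than merely on its premises: when the internal structures of $X$ and $X_-$ differ (as they may, since $A_-$ is constrained only by conclusion and premise containment, not by proof structure), an attack on a defeasible rule of $X$ need not correspond to one on $X_-$. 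The proposition is chiefly aimed at abstract-logic-style instantiations where $\R_d = \emptyset$, so this obstacle does not actually arise in the intended setting, and the preference condition $A \nprec B \Rightarrow A_- \nprec B$ is then precisely what makes undermining defeats align between $\Delta$ and $\Delta^-$.
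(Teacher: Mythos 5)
Your forward direction is essentially the paper's own argument: the paper proves (its Lemma on $A_-$/$A_+$) that defeats transfer from an argument to any premise-contraction of it and from a target to any premise-expansion of it, deduces that complete extensions of $\Delta$ are closed under taking premise-minimal counterparts, and then restricts $E$ to $E\cap\A^-$. Your two ``structural observations'' are exactly that lemma (for undermining attacks), and your honest caveat about rebuts/undercuts applies equally to the paper's version, so I will not press it. The one thing you leave implicit in the forward direction is the completeness (as opposed to mere admissibility) of $E\cap\A^-$, i.e.\ that every $W\in\A^-$ acceptable w.r.t.\ $E\cap\A^-$ already lies in it; the paper proves this contrapositively from $W\notin E$.

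The backward direction, however, has a genuine gap: your candidate $E=\{X\in\A\mid\text{some premise-minimal counterpart of }X\text{ lies in }E'\}$ is the wrong set. Membership of $X$ in a complete extension of $\Delta$ requires that \emph{all} of $X$'s premises be defensible, not just the ones used by some minimal counterpart; your $E$ admits arguments whose ``extra'' premises are rejected, and this destroys conflict-freeness. Concretely, in Example~\ref{ExampleClassicalLogic} with $\Sigma=\{x,\neg y, x\supset y\}$, the argument $X=[\,x,\neg y\to x\,]$ (legitimate, since $x\in Cn(\{x,\neg y\})$ and $\{x,\neg y\}$ is consistent) has premise-minimal counterpart $[x]$, which lies in the extension $E_2'$ built on $\{x,x\supset y\}$; so $X\in E$ by your definition, yet the argument for $y$ from $\{x,x\supset y\}$ is also in $E$ and undermines $X$ on $\neg y$. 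The paper avoids this by taking $E^{+}=E'\cup\{A_{+}\in\A\mid A\in E'\text{ and every premise of }A_{+}\text{, viewed as a premise argument, is in }E'\}$ --- i.e.\ it only re-inflates arguments whose full premise set is already vindicated by $E'$ --- and then checks acceptability of the added $A_{+}$ via sub-argument closure of $E'$. You need this stronger side condition (or an equivalent one) for the backward construction to go through; without it the claimed extension is not even conflict-free.
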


Note that although the above proposition assumes the attack definition of conflict free, it immediately follows from Proposition \ref{Prop-Equivalence-a-d-admissible}  that  Proposition \ref{classicalmin} also holds  if the defeat definition of conflict free is assumed.

\begin{corollary}\label{classicalminCorollary} Given $\Delta$ and $\Delta^-$ as defined in Proposition \ref{classicalmin}:
\begin{enumerate}
  \item $\varphi$ is a $T$ credulously (sceptically) justified conclusion of $\Delta$ iff $\varphi$ is a $T$ credulously (sceptically) justified conclusion of $\Delta^-$.\\[-17pt]
  \item $\Delta^-$ satisfies the postulates \emph{closure under strict rules}, \emph{direct consistency}, \emph{indirect consistency} and \emph{sub-argument closure}.

\end{enumerate}

\end{corollary}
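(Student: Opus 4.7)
The plan is to leverage Proposition~\ref{classicalmin}, which establishes a bijective correspondence between $T$-extensions $E$ of $\Delta$ and $T$-extensions $E'\subseteq E$ of $\Delta^-$ such that $\bigcup_{X\in E}\mathtt{Conc}(X) = \bigcup_{Y\in E'}\mathtt{Conc}(Y)$. Both claims of the corollary should then follow by transferring properties of $\Delta$ (already proven in Sections~\ref{Section-ASPIC+-Propoerties-Attack}--\ref{Section-ASPIC+R-RationalityPostulates}) across this correspondence.

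For part~1, I would observe that credulous and sceptical $T$-justification depend only on the union (resp.\ intersection) of the conclusion sets of the $T$-extensions. Since Proposition~\ref{classicalmin} pairs each $T$-extension of $\Delta$ with a $T$-extension of $\Delta^-$ having identical conclusion set, and this pairing is bijective in both directions, the families $\{\{\mathtt{Conc}(A):A\in E\}\}_{E}$ on each side coincide, whence $\varphi$ is credulously (sceptically) $T$-justified in $\Delta$ iff it is so in $\Delta^-$. For part~2, the postulates \emph{closure under strict rules}, \emph{direct consistency}, and \emph{indirect consistency} (Theorems~\ref{TheoremStrictRulesClosure}, \ref{TheoremDirectConsistency}, \ref{TheoremIndirectConsistency}) are phrased purely in terms of $\{\mathtt{Conc}(A):A\in E\}$. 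Since $\Delta$ is a well-defined (c-)SAF and satisfies these postulates, and this set is invariant under the correspondence, each postulate transfers to every $T$-extension $E'$ of $\Delta^-$ without further work.

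The main obstacle is \emph{sub-argument closure} for $\Delta^-$, since it is formulated structurally: for $A\in E'$ we need $\mathtt{Sub}(A)\subseteq E'$, and $E'\subseteq\A^-$ forces $\mathtt{Sub}(A)\subseteq\A^-$ rather than just $\mathtt{Sub}(A)\subseteq E$ (the latter is immediate from $E'\subseteq E$ and Theorem~\ref{TheoremSub-argumentClosure} applied in $\Delta$). The key lemma I would establish is that premise-minimality is inherited by sub-arguments, proved by contraposition: if some $A'\in\mathtt{Sub}(A)$ admitted an alternative $A''$ with $\mathtt{Conc}(A'')=\mathtt{Conc}(A')$ and $\mathtt{Prem}(A'')\subsetneq\mathtt{Prem}(A')$, substituting $A''$ for $A'$ inside the inference tree of $A$ yields an argument $A^*$ with $\mathtt{Conc}(A^*)=\mathtt{Conc}(A)$ and $\mathtt{Prem}(A^*)\subsetneq\mathtt{Prem}(A)$, contradicting $A\in\A^-$. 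The delicate combinatorial point is ensuring the substitution strictly shrinks the premise set even when premises are shared between sibling sub-arguments of $A$; I would handle this either by an induction on the depth of $A'$ in $A$ or by appealing to the explicit construction of $E'$ inside the proof of Proposition~\ref{classicalmin} (so that, in effect, $E' = E\cap\A^-$), from which the desired closure follows by combining the lemma with $\mathtt{Sub}(A)\subseteq E$.
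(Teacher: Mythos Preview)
Your treatment of part~1 and of \emph{closure under strict rules}, \emph{direct consistency} and \emph{indirect consistency} in part~2 is correct and coincides with the paper's proof: these depend only on the conclusion sets of extensions, and Proposition~\ref{classicalmin} guarantees those coincide.

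For \emph{sub-argument closure}, however, your key lemma---that premise-minimality is inherited by sub-arguments---is false, and the obstacle you yourself flag (premise sharing across sibling branches) is fatal rather than merely delicate. Take $\K_p=\{p,q\}$ and $\R_s\supseteq\{p,q\to a;\ q\to a;\ a,p\to r\}$ (this situation arises in any AL argumentation theory with $a\in Cn(\{q\})$, $r\in Cn(\{a,p\})$, and $r\notin Cn(\{p\})\cup Cn(\{q\})$). Let $B=[[p],[q]\to a]$ and $A=[B,[p]\to r]$. Then $\mathtt{Prem}(A)=\{p,q\}$ and $A$ is premise-minimal for $r$; but $B\in\mathtt{Sub}(A)$ has $\mathtt{Prem}(B)=\{p,q\}$ while $[[q]\to a]$ concludes $a$ from $\{q\}$, so $B\notin\A^-$. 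Your proposed substitution yields $A^*=[[[q]\to a],[p]\to r]$ with $\mathtt{Prem}(A^*)=\{p,q\}=\mathtt{Prem}(A)$: the premise $p$ dropped from $B$ is still needed in the sibling branch, so no strict shrinkage occurs. Neither fallback rescues this: an induction on depth does nothing about shared premises, and appealing to $E'=E\cap\A^-$ still needs $A'\in\A^-$ before you can conclude $A'\in E'$.

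The paper does not attempt your inheritance lemma at all. Instead it argues that the \emph{proof} of Theorem~\ref{TheoremSub-argumentClosure} can be re-run inside $\Delta^-$: Lemma~\ref{Basic properties} (acceptability of sub-arguments) is claimed to apply to $\Delta^-$, and conflict-freeness of $E^-\cup\{A'\}$ is obtained from $E^-\subseteq E$ together with conflict-freeness of $E$. So the paper's route bypasses the structural question of whether $\mathtt{Sub}(A)\subseteq\A^-$ and works directly with acceptability and conflict-freeness relative to the ambient extension $E$ of $\Delta$.
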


The assumption that arguments are not strengthened by adding premises is not satisfied by all ways of defining $\preceq$. Consider a \emph{c-SAF} $(\A,\C,\preceq)$ defined by an AL argumentation theory, where $\preceq$ is defined on the basis of the democratic comparison $\triangleleft_{\mathtt{Dem}}$, and suppose arguments $A_-$, $A$ and $B$ such that $\mathtt{Prem}(A_-)$ = $\{p\}$, $\mathtt{Prem}(A)$ = $\{p,q\}$, $\mathtt{Prem}(B)$ = $\{r\}$, and assume the preordering on the premises is \sm{$p <' r$}. Then $\{p,q\} \sm{\ntriangleleft_{\mathtt{Dem}}} \{r\}$, but $\{p\} \sm{\triangleleft_{\mathtt{Dem}}} \{r\}$, and so it is easy to verify that $A \nprec B$, but $A_- \prec B$.
However, the assumption is satisfied by the elitist \sm{$\triangleleft_{\mathtt{Eli}}$}:

\begin{proposition}\label{PropStrengtheningAssumption} Let $(\A,\C,\preceq)$ be defined by an AL argumentation theory, where $\preceq$ is defined under the weakest or last link principles, based on the set comparison \sm{$\triangleleft_{\mathtt{Eli}}$}. Then $\forall A,B \in \A$, $\forall A_- \in \A$, if $A \nprec B$ then $A_- \nprec B$.

\end{proposition}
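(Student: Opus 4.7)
The plan is to first simplify both preference principles to a single comparison in the AL setting, and then exploit the key monotonicity of the $\mathtt{Elitist}$ comparison with respect to enlarging the first set.

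First I would observe that in an AL argumentation theory we have $\R_d = \emptyset$ and $\K_n = \emptyset$, so for every argument $A \in \A$ we have $\mathtt{DefRules}(A) = \mathtt{LastDefRules}(A) = \emptyset$ and $\mathtt{Prem}(A) = \mathtt{Prem}_p(A)$. Plugging these into Definitions~\ref{Def Last Link} and~\ref{Def Weakest Link}, the last-link clause~1 fails (both last-rule sets are empty, so by Definition~\ref{Def-hd12} clause~1 cannot hold), and clause~2 reduces to $\mathtt{Prem}_p(B) \triangleleft_{\mathtt{Eli}} \mathtt{Prem}_p(A)$; similarly, for the weakest-link case both arguments are strict (no defeasible rules) so clause~1 of Definition~\ref{Def Weakest Link} applies, again giving $\mathtt{Prem}_p(B) \triangleleft_{\mathtt{Eli}} \mathtt{Prem}_p(A)$. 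Thus, under both principles, $B \prec A$ iff $\mathtt{Prem}_p(B) \triangleleft_{\mathtt{Eli}} \mathtt{Prem}_p(A)$.

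Next I would prove the contrapositive: assume $A_- \prec B$ and show $A \prec B$, where by Definition~\ref{DefinitionPremiseMinimal} we have $\mathtt{Prem}_p(A_-) \subseteq \mathtt{Prem}_p(A)$. By the reduction above, $\mathtt{Prem}_p(A_-) \triangleleft_{\mathtt{Eli}} \mathtt{Prem}_p(B)$. By clause~1 of Definition~\ref{Def-hd12} this forces $\mathtt{Prem}_p(A_-) \neq \emptyset$, so $\mathtt{Prem}_p(A) \neq \emptyset$ as well. Now split on whether $\mathtt{Prem}_p(B)$ is empty: if it is, clause~2 of Definition~\ref{Def-hd12} immediately yields $\mathtt{Prem}_p(A) \triangleleft_{\mathtt{Eli}} \mathtt{Prem}_p(B)$; if it is not, clause~3 gives some $X \in \mathtt{Prem}_p(A_-)$ with $X < Y$ for every $Y \in \mathtt{Prem}_p(B)$, and since $X \in \mathtt{Prem}_p(A_-) \subseteq \mathtt{Prem}_p(A)$ the same witness $X$ certifies $\mathtt{Prem}_p(A) \triangleleft_{\mathtt{Eli}} \mathtt{Prem}_p(B)$. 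Either way $A \prec B$, as required.

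I do not expect any real obstacle here: the proposition is essentially recording the fact that the Elitist ordering is monotone in its first argument (enlarging the first set can only make the ``$\exists X$ smaller than all of $\Gamma'$'' condition easier to satisfy), whereas the Democratic ordering fails this monotonicity (adding a large premise can destroy the universal quantifier), which is exactly why the counterexample given just before the proposition works. The only minor care needed is in handling the empty-set boundary cases via clauses~1 and~2 of Definition~\ref{Def-hd12}, but these are routine.
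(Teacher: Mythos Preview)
Your proposal is correct and follows essentially the same approach as the paper: both first reduce the last- and weakest-link orderings to a single $\triangleleft_{\mathtt{Eli}}$ comparison on ordinary premises (since $\R_d = \emptyset$ and $\K_n = \emptyset$ in an AL theory), and then exploit the monotonicity of the Elitist comparison in its first argument together with $\mathtt{Prem}_p(A_-) \subseteq \mathtt{Prem}_p(A)$. The only cosmetic difference is that the paper argues directly from $A \nprec B$ (unpacking it as $\forall X \in \mathtt{Prem}(A)\ \exists Y \in \mathtt{Prem}(B)\ X \nless Y$ and restricting the universal quantifier to the subset), whereas you argue by contrapositive and are more explicit about the empty-set boundary cases from Definition~\ref{Def-hd12}; these are the same argument.
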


\noindent In conclusion:\\[-7pt]\

\hspace{5mm}Let  $\Delta$ = $(\A,\C,\preceq)$ be the \emph{c-SAF} based on $(\LA',Cn)$ and $(\Sigma,\leq')$, \\[-10pt]

\hspace{5mm}as defined in Definition \ref{asbasedonal}.
\\[3pt]
We have shown that all results and rationality postulates hold for $\Delta$. We have also shown that the AL undermining attacks and $\C$ are equivalent in the complete extensions that they generate, and the AL arguments and premise minimal \ASPIC\ arguments in $\A$ are equivalent.

Furthermore, Proposition \ref{classicalmin} and Corollary \ref{classicalminCorollary} then imply that:
\begin{remark}\label{ALRemark}
1) We have combined \cite{a+b09,a+b10}'s abstract logic approach to argumentation (which assumes no preference relation over $\Sigma$) with the \ASPIC\ framework, in that the justified conclusions of a Dung framework instantiated by AL arguments and AL undermining attacks, are exactly those of $\Delta$. We have also shown that Section \ref{Section-ASPIC+R-RationalityPostulates}'s rationality postulates hold for Amgoud \& Besnard's approach.
\\[4pt]
2) Given that we have extended the abstract logic approach to accommodate preferences, consider a preference-based argumentation framework (see Section \ref{SectionBackground}) $\Gamma$ = $(\A',\C',\preceq')$ defined by  $(\Sigma,\leq')$ and $(\LA',Cn)$, where $\A'$ and $\C'$ are the AL arguments and AL undermining attacks. Then, under the assumption that $\preceq'$ does not strengthen arguments when adding premises, the justified conclusions of $\Gamma$ (specifically the Dung framework instantiated by arguments and defeats defined by $\Gamma$) are exactly those of $\Delta$. This assumption is satisfied when
defining $\preceq'$ under the last or weakest link principles, based on Definition \ref{Def-hd12}'s elitist set comparison that utilises the preordering $\leq'$ over formulae in $\Sigma$. We have also shown that Section \ref{Section-ASPIC+R-RationalityPostulates}'s rationality postulates hold for Amgoud \& Besnard's approach extended with preferences.
\end{remark}
We conclude by observing that Amgoud \& Besnard investigate the consistency of extensions of a Dung framework instantiated by the arguments and attacks defined by an abstract logic. Specifically, they consider whether for any extension the union of the premises of the extension's arguments is AL consistent. We now show that for a \emph{c-SAF} based on an abstract logic this is equivalent to indirect consistency, and then refer to this result in Section \ref{SectionRelatedWork} in which we compare \ASPIC\ and the abstract logic approach.

\begin{proposition}\label{ALprop0}  Let $\Delta$ be the \emph{c-SAF} based on $(\LA',Cn)$ and $(\Sigma,\leq')$. Then for any complete extension $E$ of $\Delta$: $S = \{\phi | \phi \in \mathtt{Prem}(A), A \in E\}$ is AL-inconsistent iff $S' = Cl_{\R_s}(\{\mathtt{Conc}(A) | A \in E\})$ is inconsistent.
\end{proposition}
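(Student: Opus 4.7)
My plan is to prove Proposition \ref{ALprop0} by establishing the identity $Cn(S) = S'$ and then using the conditions on $\mbox{}^{-}$ (namely Definition~\ref{asbasedonal}.\ref{conconb} and \ref{concond}) to translate between AL-inconsistency of $S$ and direct inconsistency of $S'$.

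First I would show the lemma $Cl_{\R_s}(X) = Cn(X)$ for every $X \subseteq \LA$. The inclusion $Cl_{\R_s}(X) \subseteq Cn(X)$ follows by induction on the stage at which an element is added to the closure: the base case uses condition~1 of Definition~\ref{DefAbstractLogic}, and the inductive step uses monotonicity together with idempotence (condition~2). For the reverse inclusion, pick any $p \in Cn(X)$. By condition~3 (finitary character of $Cn$) there is a finite $Y \subseteq X$ with $p \in Cn(Y)$; by Definition~\ref{asbasedonal}.\ref{AL-RS} we then have $Y \imp p \in \R_s$, so $p \in Cl_{\R_s}(X)$.

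Next, I would relate the conclusions and the premises of arguments in $E$. By Theorem~\ref{TheoremSub-argumentClosure}, every premise $\phi \in \mathtt{Prem}(A)$ of some $A \in E$ is itself a sub-argument $[\phi] \in E$ with $\mathtt{Conc}([\phi]) = \phi$, so $S \subseteq \{\mathtt{Conc}(A) \mid A \in E\}$. Conversely, a routine induction on argument structure (using Definition~\ref{asbasedonal}.\ref{AL-RS} at each strict-rule step) yields $\mathtt{Conc}(A) \in Cn(\mathtt{Prem}(A))$ for every $A \in E$, whence by monotonicity $\mathtt{Conc}(A) \in Cn(S)$ and thus $\{\mathtt{Conc}(A) \mid A \in E\} \subseteq Cn(S)$. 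Applying $Cn$ to both inclusions and using monotonicity and idempotence gives $Cn(S) = Cn(\{\mathtt{Conc}(A) \mid A \in E\})$, which by the first step equals $Cl_{\R_s}(\{\mathtt{Conc}(A) \mid A \in E\}) = S'$.

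With the identity $Cn(S) = S'$ in hand, the equivalence is immediate. If $S$ is AL-inconsistent then $Cn(S) = \LA$, so $S' = \LA$; by Definition~\ref{asbasedonal}.\ref{concond} every $\varphi \in \LA$ has at least one contradictory, so $S'$ contains contradictory pairs and is directly inconsistent in the sense of Definition~\ref{consistency}. Conversely, if $S'$ contains $\varphi, -\varphi$ with $\varphi \in \con{-\varphi}$, then by Definition~\ref{asbasedonal}.\ref{conconb} the set $\{\varphi,-\varphi\}$ is AL-inconsistent, i.e.\ $Cn(\{\varphi,-\varphi\}) = \LA$; monotonicity of $Cn$ together with $\{\varphi,-\varphi\} \subseteq S' = Cn(S)$ and idempotence then forces $Cn(S) = \LA$, so $S$ is AL-inconsistent. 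I do not foresee a genuine obstacle; the only subtlety is keeping clean the distinction between the two consistency notions (AL-consistency versus direct/indirect consistency in \ASPIC), and this is handled by the two conditions linking $\mbox{}^{-}$ to AL-inconsistency in Definition~\ref{asbasedonal}.\ref{concon}.
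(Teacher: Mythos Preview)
Your proof is correct and is essentially the paper's argument, just organised more tightly: you first establish the identity $Cn(S)=S'$ (the paper's Lemma~\ref{prop2} already gives $Cl_{\R_s}(X)=Cn(X)$, and the sub-argument closure step is the same), and then read off both directions at once, whereas the paper handles the two directions separately using the same ingredients (sub-argument closure for $S\subseteq\{\mathtt{Conc}(A)\mid A\in E\}$, the strict-rule/$Cn$ correspondence, and Definition~\ref{asbasedonal}(\ref{conconb}) plus idempotence for the converse). The only cosmetic difference is that for the forward direction you invoke nonemptiness of $\con{\varphi}$ to find a contradictory pair in $S'=\LA$, while the paper picks an arbitrary $\varphi$ and notes $\varphi,-\varphi\in Cn(S)$ directly; both are fine.
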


\subsection{Classical logic instances of the \emph{ASPIC}$^{+}$ framework}\label{SectionClassicalLogicInstantiations}

The previous section's results allow us to
reconstruct classical logic approaches to argumentation as a special case of \emph{ASPIC}$^{+}$, and in so doing extend these approaches with preferences. We also prove a relation with Brewka's preferred subtheories.
\subsubsection{Defining classical logic instantiations of \emph{ASPIC}$^{+}$}\label{SectionDefininingClassicalLogicInstantiations}

Much recent work on structured argumentation formalises arguments as minimal classical consequences from consistent and finite premise sets in standard propositional or first-order logic \cite{AmCay,b+h01,b+h08,g+h11}. Since classical logic can be
specified as a Tarskian abstract logic $(\LA',Cn)$, where $\LA'$ is a a standard propositional or first-order language and $Cn$ the classical consequence relation, a \emph{classical argumentation theory} and its \emph{c-SAF} based on $(\LA',Cn)$ and an ordered theory $(\Sigma,\leq')$, is defined as in Definition \ref{asbasedonal}. The ordering $\leq'$ on $\Sigma$ and thus the ordinary premises, allows us to reconstruct classical logic approaches that additionally consider preferences (e.g., \cite{AmCay}). It is then easy to verify that if $\mbox{}^{-}$ is defined as classical negation, then all four conditions in Definition \ref{asbasedonal}-(3) are satisfied.

%

Amongst the above-cited works on classical argumentation, \cite{AmCay} and \cite{g+h11} adopt a Dung-style semantics, where only \cite{AmCay} considers preferences. Let us first consider \cite{g+h11}. They define seven alternative notions of attack and investigate their properties, including the rationality postulates of \cite{c+a07} studied in this paper. They show that the only two attack relations that are `well behaved', in the sense that they satisfy consistency postulate for all the semantics, are the so called `direct undercuts' and `direct defeaters':\\[-17pt]

\begin{itemize}
\item $Y$ \emph{directly undercuts} $X$ if $\mathtt{Conc}(Y) \equiv \neg p$ for some $p \in \mathtt{Prem}(X)$\\[-17pt]
\item $Y$ \emph{directly defeats} $X$ if $\mathtt{Conc}(Y) \vdash_c \neg p$ for some $p \in \mathtt{Prem}(X )$
\end{itemize}

Although our undermining attacks are not among \cite{g+h11}'s seven notions of attack, it can be shown that our undermining attacks are equivalent to their direct undercuts and defeats in that the complete extensions generated are the same. For direct defeats, this result is shown by Proposition \ref{ALprop}. For direct undercuts, it suffices to adapt the proof of Proposition~\ref{ALprop}, showing that: 1) if $Y$ undermines $X$, then letting $\mathtt{Conc}(Y) = q$, by the symmetry of classical negation $q = \neg p$ for some $p \in \mathtt{Prem}(X)$ and so $\mathtt{Conc}(Y)) \equiv \neg p$; 2) if $Y$ directly undercuts $X$ then $Y$ directly defeats $X$, and so as already shown, $Y$ undermines $X$. These equivalences and \cite{g+h11}'s negative results for their remaining five notions of attack justify why \ASPIC\ does not model these five notions.

It follows from the above, and the results and discussion in Section \ref{sectionALstrict}, that we have reconstructed and extended with preferences, \cite{g+h11}'s variants with direct undercut and direct defeat, and shown that \cite{c+a07}'s postulates are satisfied for classical logic approaches with preferences (recall that \cite{g+h11}'s other variants violate the consistency postulate even without preferences).

\begin{figure}[h]
\centering
\includegraphics[width=4.4in]{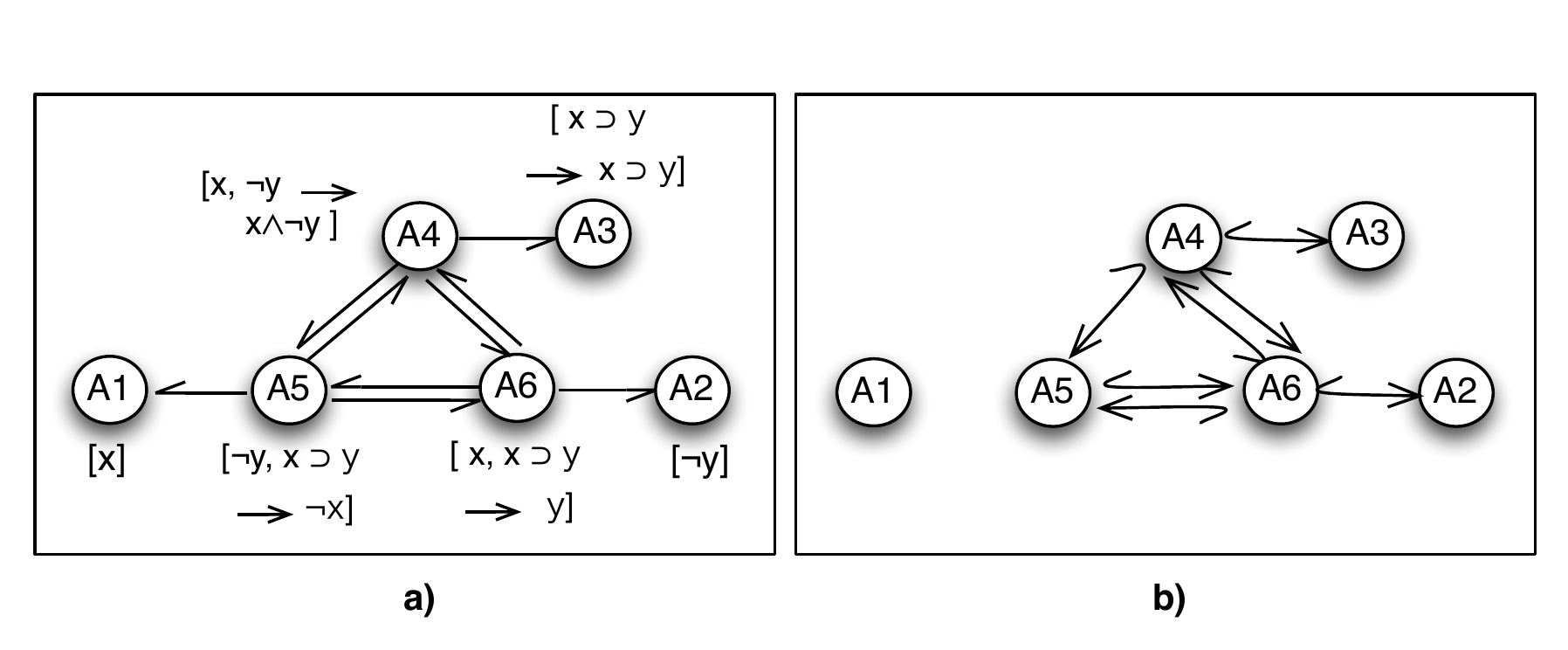}
\caption{Classical Logic argumentation: attack graph (a) and defeat attack (b)}\label{FigClassicalLogicEx}
\end{figure}

\begin{example}\label{ExampleClassicalLogic} Let the ordinary premises be the set $\Sigma$ = $\{  x, \neg y, x \supset y \}$  ($\supset$ denotes material implication) and assume $x >' \neg y, x >' x \supset y$. The attack graphs is shown in Figure \ref{FigClassicalLogicEx}-a). Under either the weakest or last link principles, and assuming either the elitist or democratic comparisons, $A_5 \prec A_1$ and so $A_5$ does not defeat $A_1$. Note also that $A_5$ attacks $A_4$ on $A_1$, and so $A_5$ does not defeat $A_4$. The defeat graph is shown in Figure \ref{FigClassicalLogicEx}-b).\\[2pt]We obtain $E_1'$ = $\{A_1, A_4, A_2\}$ and $E_2$ = $\{A_1, A_6, A_3\}$, where by satisfaction of the closure under strict rules postulate, $E_1$ = $E_1'$ extended with arguments concluding classical consequences of $\{x,\neg y, x \neg y\}$ is a preferred/stable extension, and $E_2$ = $E_2'$ extended with arguments concluding classical consequences of $\{x,y, x \supset y\}$ is a preferred/stable extension.

\end{example}

The above example shows how preferences arbitrate in favour of the sceptically justified conclusion $x$ over $\neg x$ .
Indeed, we argue that extending classical logic approaches with preferences is of particular importance, given that (as shown in \cite{Cay95,g+h11}) the preferred/stable extensions generated from a Dung framework instantiated by arguments and direct undercuts or defeats, simply correspond to the maximal consistent subsets of the theory $\Sigma$ from which the arguments are defined\footnote{In the sense that the union of formulae in the supports of arguments in each preferred/stable extension is a maximal consistent subset of $\Sigma$.}. Intuitively, one would expect this correspondence given that classical logic does not provide any logical machinery for arbitrating conflicts (in contrast with the use of undercuts and negation as failure in non-monotonic logics (as discussed in Sections \ref{Section-ArgumentationAndLogic-RoleOfPreferences} and \ref{Section-ASPIC+-AttacksAndDefeats}).
 One must therefore resort to some meta-logical mechanism, such as preferences, if argumentation is to be usefully deployed in resolving inconsistencies in a classical-logic setting.

We conclude by noting that \cite{AmCay} make use of preferences to determine the success of two of \cite{g+h11}'s variants of attack, and show that this leads to violation of the consistency postulates. We will discuss this in detail in Section~\ref{SectionRelatedWork}.

\subsubsection{Brewka's Preferred Subtheories as an instance of the \emph{ASPIC}$^{+}$ framework}\label{SectionPreferredSubtheories}

Brewka's \emph{preferred subtheories} \cite{BrewkaPS} models the use of an ordering over a classical propositional or first order theory $\Gamma$, in order to resolve inconsistencies. It has therefore been used to both formalise default reasoning and belief revision \cite{BrewkaNMReasoning}.

\begin{definition}\label{DefPreferredSubtheories} A default theory $\Gamma$ is a tuple $(\Gamma_1,\ldots, \Gamma_n)$, where each $\Gamma_i$ is a set of
formulae in a classical first order language $\LA'$. A preferred subtheory is a set $\Sigma$ = $\Sigma_1 \cup \ldots \cup \Sigma_n$ such that for $i = 1 \ldots n$, $\Sigma_1 \cup \ldots \cup \Sigma_i$ is a maximal (under set inclusion) consistent subset of $\Gamma_1,\ldots, \Gamma_i$

\end{definition}

Intuitively, a preferred subtheory is obtained by taking a maximal under set inclusion consistent subset of $\Gamma_1$, extending this with a maximal consistent subset of $\Gamma_2$, extending this with a maximal consistent subset of $\Gamma_3$, and so on. We can reconstruct preferred subtheories as an instance of the \emph{ASPIC}$^{+}$ framework.

\begin{definition} Let $\Gamma$ be a default theory $(\Gamma_1,\ldots, \Gamma_n)$, and $\forall \alpha,\beta \in \Gamma$, $(\alpha,\beta) \in \leq'$ iff $\alpha \in \Gamma_i, \beta \in \Gamma_j$, $i \geq j$. Let $\Delta$ be the \emph{c-SAF} $(\A,\C,\preceq)$ based on $(\LA', Cn)$ and $(\Gamma,\leq')$ as described in Section \ref{SectionDefininingClassicalLogicInstantiations} (with $\Gamma$ replacing $\Sigma$), and where $\preceq$ is defined under the weakest or last link principle, and on the basis of the \sm{$\triangleleft_{\mathtt{Eli}}$} set comparison. We say that $\Delta$ is the \emph{c-SAF} corresponding to $\Gamma$.
\end{definition}

\begin{theorem}\label{TheoremPS} Let $(\A,\C,\preceq)$ be a c-\emph{SAF} corresponding to a default theory $\Gamma$, and for any $\Sigma \subseteq \Gamma$,
let $\mathtt{Args}(\Sigma) \subseteq \A$ be the set of all arguments with premises taken from $\Sigma$. Then:\\
\noindent 1) If $\Sigma$ is a preferred subtheory of $\Gamma$, then $\mathtt{Args}(\Sigma)$ is a stable extension of $(\A,\C,\preceq)$.\\[3pt]
\noindent 2) If $E$ is a stable extension of $(\A,\C,\preceq)$, then $\bigcup_{A \in E}\mathtt{Prem}(A)$ is a preferred subtheory of $\Gamma$.

\end{theorem}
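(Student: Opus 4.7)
The plan is to prove both implications by exploiting that in the c-SAF corresponding to $\Gamma$, Definition~\ref{asbasedonal}(2) forces every inference rule to be strict, so the defeasible clauses of Definitions~\ref{Def Last Link} and~\ref{Def Weakest Link} are vacuous. Consequently, for any two arguments $A, B$, both the weakest-link and the last-link orderings collapse to the Elitist comparison of their ordinary premise sets under $\leq'$; in particular $A \prec B$ holds precisely when some premise of $A$ sits in a stratum $\Gamma_k$ strictly later than every stratum containing a premise of $B$.

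For direction (1), let $\Sigma = \Sigma_1 \cup \ldots \cup \Sigma_n$ be a preferred subtheory. Attack conflict-freeness of $\mathtt{Args}(\Sigma)$ is immediate: any undermining attack inside $\mathtt{Args}(\Sigma)$ would, via the identification of $\mbox{}^{-}$ with classical negation, yield two formulas in $Cl_{\R_s}(\Sigma)$ that are jointly inconsistent, contradicting consistency of $\Sigma$. For the stability condition, take any $B \notin \mathtt{Args}(\Sigma)$, pick some $\varphi \in \mathtt{Prem}(B) \setminus \Sigma$, and let $i$ be the earliest index with $\varphi \in \Gamma_i$. Maximality of $\Sigma_1 \cup \ldots \cup \Sigma_i$ in $\Gamma_1 \cup \ldots \cup \Gamma_i$ gives $\Sigma_1 \cup \ldots \cup \Sigma_i \vdash \neg\varphi$, hence a classical argument $A$ with $\mathtt{Prem}(A) \subseteq \Sigma_1 \cup \ldots \cup \Sigma_i \subseteq \Sigma$ and $\mathtt{Conc}(A) = \neg\varphi$ that undermines $B$ on the sub-argument $[\varphi]$. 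Since every premise of $A$ lies in $\Gamma_1 \cup \ldots \cup \Gamma_i$, none is strictly $<'$-less than $\varphi$, so the Elitist reading gives $A \nprec [\varphi]$, making the undermining a defeat.

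For direction (2), let $E$ be a stable extension, $\Sigma := \bigcup_{A \in E}\mathtt{Prem}(A)$, and $\Sigma_j := \Sigma \cap \Gamma_j$. Consistency of $\Sigma$ follows from sub-argument closure (Theorem~\ref{TheoremSub-argumentClosure}) and indirect consistency (Theorem~\ref{TheoremIndirectConsistency}), since each $\varphi \in \Sigma$ is the conclusion of a premise sub-argument $[\varphi] \in E$, giving $\Sigma \subseteq Cl_{\R_s}(\{\mathtt{Conc}(A) \mid A \in E\})$. For maximality of $\Sigma_1 \cup \ldots \cup \Sigma_i$ in $\Gamma_1 \cup \ldots \cup \Gamma_i$, I argue by contradiction: assume some $\varphi$ in the latter but not the former can be consistently added. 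Then $[\varphi] \notin E$, so by stability some $A \in E$ defeats $[\varphi]$, necessarily an undermining with $\mathtt{Conc}(A) = \neg\varphi$ and $A \nprec [\varphi]$. The Elitist reading of the latter forces every premise of $A$ to sit in a stratum no later than that of $\varphi$, hence in $\Sigma_1 \cup \ldots \cup \Sigma_i$; combined with $\varphi$ this exhibits an inconsistent subset, contradicting the hypothesis.

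The main delicacy is the bookkeeping linking the stratum index of the offending $\varphi$ to the Elitist inequality $A \nprec [\varphi]$, which crucially depends on the convention that $\alpha <' \beta$ iff $\beta$ sits in a strictly earlier stratum than $\alpha$; some care is needed when the $\Gamma_i$ overlap, handled by always picking the earliest stratum containing a given formula. A secondary but routine point is that applying Definitions~\ref{Def Last Link} and~\ref{Def Weakest Link} to a premise-only argument $[\varphi]$ reduces to comparing the singleton $\{\varphi\}$ against $\mathtt{Prem_{p}}(A)$. With these observations in place, the remaining work reduces to invoking the postulates of Section~\ref{Section-ASPIC+R-RationalityPostulates} for part~(2) and the Elitist definition of Section~\ref{SubSection-ASPIC+WeakestLast} for part~(1).
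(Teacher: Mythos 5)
Your proof is correct and follows essentially the same route as the paper's: attack conflict-freeness plus construction of a defeater from $\Sigma_1\cup\ldots\cup\Sigma_j \vdash \neg\gamma$ for part (1), and consistency of the premise set plus stratum-by-stratum maximality via the Elitist comparison for part (2). The only differences are cosmetic: you obtain consistency of $\bigcup_{A\in E}\mathtt{Prem}(A)$ by citing sub-argument closure together with the already-established indirect consistency postulate rather than re-running the paper's minimal-inconsistent-subset argument, and you compare the attacker against the attacked premise sub-argument $[\varphi]$ (as Definition~\ref{DefDefeats} actually requires), where the paper's text loosely compares against the whole attacked argument.
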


Note that although the above theorem assumes the attack definition of conflict free, it immediately follows from Proposition \ref{Prop-Equivalence-a-d-admissible}  that  Theorem \ref{TheoremPS} holds  if the defeat definition of conflict free is assumed. Finally, also note that the above theorem paves the way for applying argument-game proof theories and labelling algorithms for the stable semantics \cite{ModCam}, to preferred subtheories, as well as studying the preferred subtheories approach under the full range of semantics defined for Dung frameworks.

\section{A Discussion of Some Related  Work}\label{SectionRelatedWork}

\subsection{Comparison with General Frameworks for Argumentation}\label{Section Comparison with General Frameworks for Argumentation}

In this section we compare \ASPIC\ to related work. To start with, the inclusion of defeasible rules in  \emph{ASPIC}$^+$ requires some explanation, given that much current work formalises the construction of arguments as deductive \cite{a+b09,a+b10}, and in particular classical \cite{b+h08,g+h11} inference. These approaches regard argumentation-based inference as a form of inconsistency handling in deductive logic; the supposed advantage being that the logic of deductive inference is well-understood \cite[p.\ 16]{b+h08}. This raises the question of whether defeasible inference rules are needed at all. Our answer is that the research history in our field shows that at best only part of argumentation can be formalised as inconsistency handling in deductive logic.  To start with, the distinction between strict and defeasible inference rules has a long history in AI research on argumentation \cite{LS89,loui87, Pol87, pol94, Pol95,PS97,S+L92,Vre97}, so a truly general framework for structured argumentation must include this distinction. Pollock in particular  provides philosophical arguments that appeal to epistemological accounts of human reasoning, so that the modelling of defeasible rules is a particularly salient requirement in light of the bridging role (discussed in Sections \ref{SectionIntroduction} and \ref{Section-ArgumentationAndLogic}) that argumentation plays between human and formal logic-based models of reasoning.

 Moreover, conceptually, defeasible reasoning is not about handling inconsistent information but about making deductively unsound but still rational `jumps' to conclusions on the basis of consistent but deductively inconclusive information. Consider the following well-known example, with the given information  that quakers are normally pacifists, that republicans are normally not pacifists and that Richard Nixon was both a quaker and a republican.  A defeasible reasoner is then interested in what can be concluded about whether Nixon was a pacifist \emph{while consistently accepting all the given information}.  The reason that they are jointly consistent is that  `If $q$ then normally $p$'and `$q$' does not deductively imply $p$ since things could be abnormal: Nixon could be an abnormal quaker (or republican). A defeasible reasoner therefore does not want to reject any of the above statements, but rather wants to assume whenever possible that things are normal, in order to jump to conclusions about Nixon in the absence of evidence to the contrary.  In other words, defeasible reasoning is not about inconsistency handling but about making uncertain inferences from consistent (though deductively inconclusive) premises. Therefore, attempts to formalise defeasible reasoning as inconsistency handling are at least unnatural. Moreover, the literature on nonmonotonic logic suggests that such attempts\footnote{Including those that make use of applicability predicates to simulate the effects of priorities/preferences.} are prone to validating counterintuitive inferences (see e.g. \cite{BrewkaNMReasoning,gin94} or \cite{hp12mjs} for a recent discussion in the context of argumentation). We therefore conclude that, given the research literature, it makes sense to include defeasible inferences in models of argumentation, and therefore any account of argumentation that claims to be general should leave room for them.

A number of works have been proposed as general approaches to argumentation. A well-known and established framework is that of assumption-based argumentation (\emph{ABA}) \cite{BDKT97}, which has made a substantial contribution to our understanding of argumentation, and is shown (in \cite{hp10aspicJAC}) to be a special case of the \ASPIC\ framework in which arguments are built from assumption premises and strict inference rules only and in which all arguments are equally strong. As mentioned earlier, when commenting on Definition~\ref{Definition-knowledge-base}, \cite{hp10aspicJAC}'s result on the relation between \ASPIC\ and \emph{ABA} also holds if all \emph{ABA} assumptions are translated as \ASPIC's ordinary premises. To see why, firstly, note that \emph{ABA} does not accommodate preferences over assumptions. Hence all undermining attacks on assumption premises are preference independent. Since the reconstruction of \emph{ABA} does not accommodate preferences, then undermining attacks on \emph{ABA} assumptions modelled in this paper as ordinary premises, also always succeed as defeats. One can thus straightforwardly replace \cite{hp10aspicJAC}'s assumption premises with ordinary premises, and show (as in \cite{hp10aspicJAC}) that \emph{ABA} can be faithfully reconstructed in this paper's formalisation of \ASPIC\ . Our work is relevant for \emph{ABA}, since \emph{ABA} does not in general satisfy \cite{c+a07}'s consistency postulates \footnote{Just as \ASPIC does not in general satisfy \cite{c+a07}'s postulates, since one is free to instantiate \ASPIC in ways that are not `well-defined' (Definition \ref{DefStrictDefAssumptions}).}. A simple counterexample is an \emph{ABA} deductive system with two rules $\imp p$ and $\imp \neg p$. Note that is not to suggest that \emph{ABA} is flawed; rather, we provide conditions (e.g. that rules be closed under transposition) under which \emph{ABA} satisfies \cite{c+a07}'s consistency postulates.

 More recently, Amgoud \& Besnard \cite{a+b09,a+b10} proposed the abstract-logic approach (AL) to defining structured argumentation. \ASPIC\ is considerably more complex than AL: firstly because \ASPIC\ models the use of preferences to resolve attacks, so it has to distinguish between attack and defeat, and secondly because \ASPIC\ combines deductive and defeasible argumentation, which means that not only the premises, but also the defeasible inferences of an argument can be attacked. This requires that the arguments' structure be made explicit in order to know which parts of an argument can be attacked. By contrast, if all inferences are certain, then arguments can only be attacked on their premises so their internal structure is irrelevant for their evaluation.
In Section \ref{sectionALstrict} we straightforwardly extended AL with preferences and then combined the extended AL with \ASPIC. However, AL cannot accommodate defeasible inferential rules. This is because the inferential reasoning from premises to conclusion is not rendered explicit, but rather is encoded in AL's single consequence operator, which cannot distinguish between strict and defeasible inference rules: there is no way to distinguish $p$'s and $S$'s for which $p \in Cn(S)$ implies that $S \imp p$ should be in $\R_s$ or $S \Imp p$ should be in $\R_d$. Furthermore, recall that in Section \ref{sectionALstrict} we argued that AL's subset minimality condition on premises is not appropriate when accounting for defeasible inference rules.

The inappropriateness of accommodating defeasible argumentation in AL is further illustrated when considering whether \ASPIC's notion of an argument \emph{generates} an abstract logic. If this is the case for a given instance of \ASPIC, then all of \cite{a+b09,a+b10}'s results hold for this instance. Suppose an \ASPIC\ $AT$ and $Cn$ defined as follows\footnote{Since for AL, consistency requirements on arguments are added on top of a \emph{given} consequence notion $Cn$, we cannot incorporate consistency requirements into the \emph{definition} of $Cn$, and so assume \ASPIC\ without the restriction to c-consistent arguments.}: \\[-14pt]
\btab{ll}
 (1) & $p \in Cn(X)$ iff there exists an \ASPIC\ argument $A$, with $\mathtt{Conc}(A) = p$ \\
 & and $\mathtt{Prem}(A) = X$.
\etab
It can be shown that conditions (1), (2) and (3) in the definition of an abstract logic (Definition \ref{DefAbstractLogic}) are satisfied. However (4) is in general not satisfied. Consider an $AT$ with $\K = \{p\}$, $R_s = \emptyset$ and $R_d = \{p \Imp q\}$. Also,
(5) is not in general satisfied. Consider any $AT$ with $\K = \emptyset$ and $R_d = \{ \Imp p \mid p \in \LA\}$.

Of course, many instances of \ASPIC\ will satisfy (4) and (5), and thus generate abstract logics. But then we should interpret \cite{a+b09,a+b10}'s results, as they apply to these instances, with care. In particular, the notion of consistency of an abstract logic behaves in an unexpected way. Recall that Amgoud \& Besnard investigate whether for any Dung-extension $E$, the set $\bigcup_{(X,p) \in E}X$ is AL consistent (see end of Section \ref{sectionALstrict}). Now, consider an \ASPIC\ $AT$ formalising the above Nixon example in a language $\LA$ including atoms $p$, $r$ and $q$ respectively denoting `Nixon is a pacifist', `Nixon is a republican' and `Nixon is a quaker' and a connective $\leadsto$ for default conditionals. Informally, $\varphi \leadsto \psi$ means `if $\varphi$ then normally $\psi$'. Let the $\mbox{}^{-}$ relation correspond to classical negation, $\R_s$ contain all propositionally valid inferences (including $p ,\neg p \imp \varphi$ for any $\varphi \in \LA$) and $\R_d$ contain a defeasible modus ponens scheme $\varphi, \varphi \leadsto \psi \Imp \psi$. Then if $\K = \{q,r, q \leadsto p, r \leadsto \neg p\}$, any Dung-extension  contains all elements of $\K$ as arguments but does not contain arguments for both $p$ and $\neg p$ so any such extension satisfies \emph{indirect consistency} (the closure under strict rules is consistent). However, in the abstract logic generated by equation $(1)$, the set $\{q,r,q \leadsto p,r \leadsto \neg p\}$ is AL inconsistent, since there exists an \ASPIC\  argument for every $\varphi$ by combining the defeasible arguments for $p$ and $\neg p$ (even though this argument is not in any extension).

This discrepancy is caused by the fact that an abstract logic's consequence operator cannot distinguish between strict and defeasible inferences, and so regards a set $S$ as inconsistent if the closure of $S$ under \emph{both} strict and defeasible rules is directly inconsistent. But this consistency requirement is too strong, since the very idea of defeasible reasoning is that one's knowledge need \emph{not} be closed under defeasible inference, since defeasible inference rules can be defeated even if all their antecedents hold.

Concluding our comparison, the abstract logic approach provides a very interesting and insightful generalisation of earlier work on classical argumentation, but does not apply to mixed strict and defeasible argumentation, such as modelled in \ASPIC\ and earlier by many others. We should here emphasise our view that deductive approaches certainly do have their place in the study and application of of argumentation. However, we argue that a truly general account of argumentation should also accommodate the use of defeasible inference rules.


%


Amgoud \& Besnard \cite{a+b09,a+b10} also use the abstract logic approach to make some informal negative claims about the suitability of Dung-style semantics.
First of all, they informally claim \cite{a+b10} that to satisfy the consistency postulates, an attack relation should be \emph{valid} in the sense that when two arguments have jointly AL inconsistent premises, they should attack each other. However, this informal claim should be read with care: what they formally show is that validity is a \emph{sufficient} condition for consistency. Their results do not preclude ``invalid'' attack relations, such as undermining attacks, from satisfying consistency. Indeed, in \cite{hp10aspicJAC} and this paper we have identified alternative sufficient conditions for consistency. Furthermore, Amgoud \& Besnard themselves show that, under the assumption that \emph{a Dung framework contains all arguments that can be logically constructed}, their notion of consistency of extensions is satisfied assuming the AL undermining attacks in Definition \ref{ALarg+ALattack}, which is, of course, consistent with our more general result showing that AL satisfies all of \cite{c+a07} postulates  (Remark \ref{ALRemark} and Proposition \ref{ALprop0} in Section \ref{sectionALstrict}). Amgoud \& Besnard regard this assumption as problematic. However, we regard this not as a problem of the attack relation, but of the reasoner: if an imperfect reasoner is modelled who cannot be relied on to produce all relevant arguments, then perfect results cannot be expected. Furthermore, as argued in Section~\ref{Section-ArgumentationAndLogic-DefiningAttacks}, requiring that attacks be `valid' goes against the  dialectical role of attacks and has the computational problem in that it can give rise to infinitely many attacks.


%


Finally, in a recent publication \cite{AmgoudWeak}, Amgoud claims that the \ASPIC\ framework suffers from a number of weaknesses. Space limitations preclude a detailed assessment of these claims here, suffice it to say that the formal results in \cite{hp10aspicJAC} and in this paper, contradict a number of informal claims in \cite{AmgoudWeak}. Furthermore, the interested reader may consult a comprehensive rebuttal of \cite{AmgoudWeak}'s claims in \cite{Miscon}.


\subsection{Comparison with other works on Preference-based Argumentation}\label{Section Comparison with other works on Preference-based Argumentation}

We now consider approaches that accommodate preferences to determine which attacks succeed as defeats. Recently, both Kaci \cite{kaci10} and Amgoud \& Vesic \cite{a+v10, AmgoudRepairPAFs, a+v10sum, AV11AMAI} have addressed the issue of how consistency can be ensured for instantiations of the preference-based argumentation frameworks (\emph{PAF}s) \cite{AC02a} reviewed in Section \ref{SectionBackground}. They all argue that instantiations of standard \emph{PAF}s have problems with unsuccessful asymmetric attacks. \cite{kaci10} argues that all attacks should therefore be symmetric. However, \cite{a+b09} show that for classical argumentation this would still lead to inconsistency problems.  Nevertheless, \cite{a+v10, AmgoudRepairPAFs, AV11AMAI} also criticise `standard' \emph{PAF} approaches, arguing that unsuccessful asymmetric attacks may violate consistency. As a solution they propose that unsuccessful asymmetric attacks should result in rejection of the attacker even if it is not attacked by any argument. However, our consistency results obviate the need for reversing unsuccessful attacks. We have shown that by taking into account the structure of arguments, one can show that if $A$ unsuccessfully attacks $B$, then either some sub-argument of $B$ defeats $A$, or under assumptions on the preference ordering, $B$ can be continuated into an argument that defeats $A$, and that this result is key for showing consistency as discussed in Section \ref{Section-Attack-Defeat-Comparison}.

\begin{figure}[h]
\centering
\includegraphics[width=4.4in]{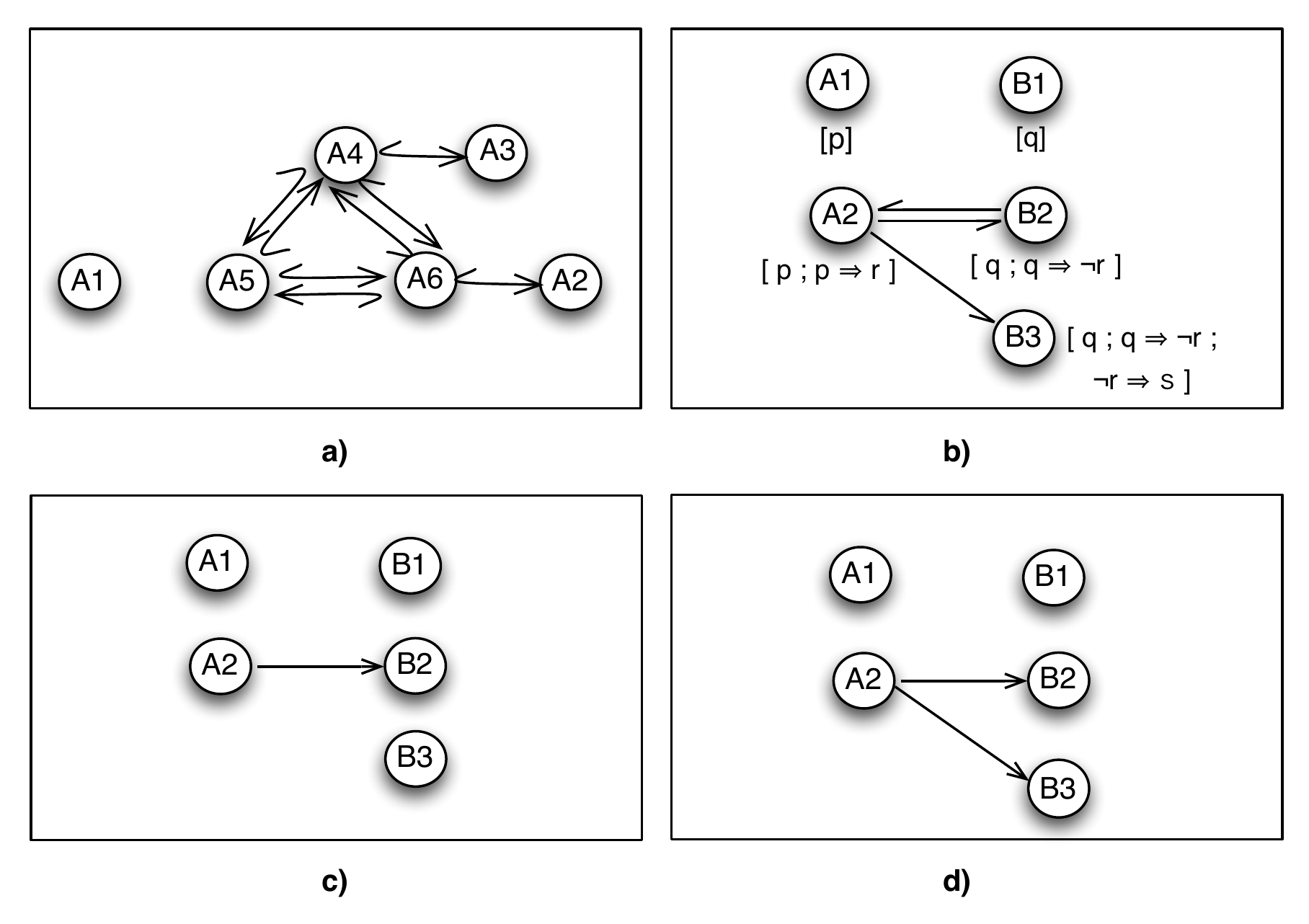}
\caption{Examples illustrating comparison with critiques of \emph{PAF}s}\label{PAFCritique}
\end{figure}

But how do our consistency results square with \cite{a+v10, AmgoudRepairPAFs, AV11AMAI}'s examples of inconsistent \emph{PAF}s? \cite{AmgoudRepairPAFs} give a semiformal example which we described earlier in Section \ref{Section-Attack-Defeat-Comparison} (and which is also described in terms of uninstantiated abstract arguments in \cite{AV11AMAI}). Recall that an expert's argument $A$, that a given violin is a Stradivarius ($s$) and therefore expensive ($e$), is asymmetrically attacked by a child's argument $B$ that it is not a Stradivarius ($\neg s$). The greater reliability of the expert's assertion about the violin means that $A$ is preferred to $B$ so that $B$ does not defeat $A$. We observed that inconsistency is not violated under this paper's attack definition of conflict free, since $\{A,B\}$ is not conflict free and so not admissible. However, even under the defeat definition, we can see that \cite{AmgoudRepairPAFs}'s suggested problem arises only when failing to take into account \emph{all} arguments. Formalising the example in \ASPIC, $A$ = $[s ; s \Imp e ]$ where $s$ is an ordinary premise, and $A'$ = $[s]$ is a sub-argument of $A$. $B = [\neg s]$ where $\neg s$ is an ordinary premise. Hence $B$ attacks $A$ on $A'$, and the expert's greater reliability means that $A'$ is preferred to $B$ and so $B$ does not defeat $A$. However, one must also then acknowledge that $A'$ rebut attacks and defeats $B$, so that $\{A,B\}$ is not admissible.

In \cite{a+v10, AV11AMAI}, Amgoud \& Vesic give a formal classical logic instantiation of a \emph{PAF} that demonstrates inconsistency. The example used is that formalised here in Example \ref{ExampleClassicalLogic}. However, Amgoud \& Vesic state that $A_1$ is strictly preferred to the other arguments, all of which are equally preferred. They thus obtain the defeat graph shown in Figure \ref{PAFCritique}-a), and so the single stable extension $\{A_1,A_2,A_3,A_5\}$, which violates consistency. The difference in outcome arises because \cite{a+v10}'s use of the premise ordering to resolve attacks (which is taken from \cite{AC02a}), differs from our Definition~\ref{DefDefeats} in which if $A$ undermines $B$ on premise $p$, then $A$ defeats $B$ if $A \nprec p$. However, in \cite{a+v10}, $A$ defeats $B$ if $A \nprec B$ based on a comparison of \emph{all} premises of $B$.  This makes a crucial difference. Since both $A_4$ and $A_5$ have $\neg y$ as weakest premise, $A_4$ and $A_5$ are equally  preferred in \cite{a+v10}. However, we have that $A_5$'s attack on $A_4$ is on $A_4$'s subargument $A_1$, so the comparison is between $A_5$ and $A_1$. Now since $x >' \neg y$, we have that $A_1$ is strictly preferred to $A_5$, so $A_5$ does not defeat $A_4$, so $A_4$ strictly defeats $A_5$. But then a set including $\{A_1,A_2,A_3,A_5\}$ is not a stable extension, since it does not defend $A_5$ against $A_4$. Instead, $E_1'$ containing the arguments $A_1, A_4$,  and $A_2$ is stable and satisfies consistency.
We prefer our approach over \cite{AmCay,a+v10}, since we do not see why the preference of the premise $\neg y$ of $A_4$, which is irrelevant to the conflict on the premise $x$, should be relevant in resolving this conflict. Note here that the crucial point is that the structure of arguments and the nature of attack should be taken into account when applying preferences. In this case it is crucial to see that $A_5$'s attack on $A_4$ was a direct attack on $A_4$'s sub-argument $A_1$.

The issue also arises in different ways. Consider the \ASPIC\ example, with $\K_p = \{p,q\}$, $\K_n =
\emptyset$, $\R_s = \emptyset$, $\R_d = \{p \Imp r; q \Imp \neg r; \neg r \Imp
s\}$. We then have the arguments and attacks in Figure \ref{PAFCritique}-d). Then, assuming $p \Imp r > q \Imp \neg r$ and
$\neg r \Imp s > p \Imp r$ , the argument
ordering $B_2 \prec A_2$, $A_2 \prec B_3$ is generated by the last link principle. A \emph{PAF} modelling then generates the defeat graph
in Figure \ref{PAFCritique}-e), so obtaining the single extension (in whatever semantics) $\{A_1,B_1,A_2,B_3\}$. So not
only $A_2$ but also $B_3$ is justified. However, not only are $A_2$ and $B_3$ based on arguments with contradictory conclusions, but the sub-argument closure postulate is violated; $B_3$ is justified, but its sub-argument $B_2$ is not. The problem arises because the \emph{PAF} modelling cannot recognise that $A_2$ attacks $B_3$ on its sub-argument $B_2$, so we should compare $A_2$ with $B_2$, and not $B_3$. Now since $B_2
\prec A_2$, then $A_2$ defeats $B_3$, so the single extension (in
whatever semantics) is $\{A_1,B_1,A_2\}$ and we have that $A_2$ is justified and both $B_2$ and $B_3$ are overruled, as visualised in Figure \ref{PAFCritique}-e). Note that these problems are not due to the use of defeasible rules or the last-link ordering.  Consider a classical logic instantiation of \ASPIC\ in which $\K_n = \emptyset$, $\K_p = \{p,q,\neg p\}$ and $q >' \neg p >' p$. The following arguments can be constructed:\\[2pt]
$A_1 = p$ ;  $A_2 = q$, $A_3 = p, q \rightarrow p \wedge q$ and $B$ = $\neg p$.\\[2pt]
Then, $A_1$ and $B$ attack each other and $B$ attacks $A_3$ (on $p$). Suppose arguments
are compared based on the weakest link principle, applying Section \ref{SubSection-ASPIC+WeakestLast}'s democratic
 principle to the premise sets. Then $A_1 \prec B$ and $B \prec A_3$. The \emph{PAF} for this example then generates
a stable extension containing $A_3$ and $B$, which again violates sub-argument closure. In
\ASPIC\ we instead obtain that $B$ defeats $A_3$ on $A_1$, so the correct outcome is obtained.

Concluding, \cite{a+v10, AmgoudRepairPAFs, AV11AMAI} are right that \emph{PAF}s need to be repaired, but the proper repair is not to change definitions at the abstract level but to make the structure of arguments and the nature of attack explicit. We have seen that seeming problems with unsuccessful asymmetric attack at the abstract level disappear if the structure of arguments and the nature of attack are specified, and that seeming violations of postulates do not occur if the success of an attack on an argument $X$ is based on a preference-based comparison on the sub-argument of $X$ that is attacked. We have also seen in this paper that there are reasonable notions of attack that result in defeat irrespective of preferences, such as \ASPIC's undercutting and contrary attacks. A framework that does not make the structure of arguments explicit cannot distinguish between preference dependent and independent attacks.

Finally, note that besides reversing asymmetric attacks, Amgoud \& Vesic \cite{a+v10, AmgoudRepairPAFs, a+v10sum, AV11AMAI}
also propose a solution to the problematic cases they identify, and that we have countered above, by using the preference ordering over arguments to define an ordering over sets of arguments, privileging those that are conflict free under the \emph{attack} relation. However this precludes the dialectical use of preferences in deciding the success of attacks between \emph{individual} arguments (as described in Section \ref{Section-ArgumentationAndLogic}); it is not clear how their use of preferences can be accounted for in dialogues and proof theoretic argument games. Furthermore, they do not show satisfaction of \cite{c+a07}'s postulates, except in the case of stable extensions, where they show that consistency is satisfied, via a correspondence with Brewka's preferred subtheories \cite{BrewkaPS}. However, we have shown this correspondence without reversing asymmetric attacks, or applying preferences over sets of arguments.

\section{Conclusions}

A newcomer to the area of abstract argumentation theory might legitimately question its added value above and beyond the conceptual insights yielded by its uniform characterisation of the inference relations of non-monotonic formalisms. This paper began with a response to this rhetorical question. Argumentative characterisations of inference encapsulate the dynamic and dialectical processes of reasoning familiar in everyday debate and discourse\footnote{Indeed, recent empirically validated work in cognitive science and psychology claims that the cognitive
capacity for human reasoning evolved primarily in order to assess and counter the claims and arguments of interlocutors in social settings \cite{SperberMercier}.}. It thus serves to both bridge formal logic and human reasoning in order that the one can inform the other, and support communicative interactions in which heterogeneous agents jointly reason and infer in the presence of uncertainty and conflict. We then discussed the declarative and procedural roles that attacks, preferences and defeats should play in the context of this value proposition, and then reviewed and modified \cite{hp10aspicJAC}'s \ASPIC\ framework in light of this discussion. Specifically, the attack relation's denotation of the mutual incompatibility of information in arguments determines whether a given set of arguments is conflict free, as distinct from their possibly preference dependent dialectical use as defeats.

\ASPIC\ provides an account of argumentation that combines Dung's argumentation theory
with structured arguments, attacks and the use of preferences. The added structure accommodates a range of concrete instantiating logics, to the extent that one can meaningfully study  satisfaction of rationality postulates. While the account retains the dialectical apparatus of Dung's theory, one must additionally show that \ASPIC's intermediate level of abstraction allows for a broad range of instantiations, if one is to continue to appeal to the above stated value proposition of argumentation. To this end, we have argued that any general account should accommodate both the traditional use of defeasible inference rules as well as deductive approaches that essentially model non-monotonicity as inconsistency handling. \cite{hp10aspicJAC}'s version of \ASPIC\ reconstructed approaches that use defeasible inference rules (e.g., \cite{Pol95,PS97}) and showed that assumption-based argumentation \cite{BDKT97} and systems using argument schemes can be formalised in \ASPIC. The modelling of defeasible rules inevitably introduced a degree of complexity that exceeds that of other proposals for general frameworks. However we have argued that a truly general framework for structured argumentation must include defeasible rules. In this paper we adapted \ASPIC\ to additionally accommodate deductive approaches that require arguments to have consistent premises, and then showed that the adapted \ASPIC, with the revised definition of conflict free, satisfies key properties of Dung frameworks and \cite{c+a07}'s rationality postulates under some assumptions. We then formalised instantiation of the adapted \ASPIC\ with Tarskian (in particular classical) logics extended with preferences, thus demonstrating satisfaction of rationality postulates by these instantiations, and paving the way for the study of other non-classical Tarskian approaches to argumentation. We also addressed some limitations of the way in which argument orderings are defined in \cite{hp10aspicJAC}, and considered a broader range of instantiations of these preference orderings, showing that they satisfy assumptions required for proof of the aforementioned properties and postulates.

Finally, a key rhetorical claim of this paper is that a proper modelling of the use of preferences requires that we take into account the structure of arguments. We believe this claim to be supported by the results in this paper and our discussion of recent critiques of Dung and preference-based argumentation frameworks.

We conclude by mentioning future research. Firstly, we emphasise that \ASPIC\ is not a system but a framework for specifying systems, such that these systems can be analysed on their properties, for instance, on whether they satisfy the four rationality postulates. An immediate task is to thus show how a range of systems, other than those considered here and in \cite{hp10aspicJAC,gpw07}, can be specified in \ASPIC. Secondly, we are currently developing a structured \ASPIC\ approach to extended argumentation \cite{ModgilAIJ}, building on a preliminary such structuring in \cite{ModPrakSEAFComma}.
Thirdly, \cite{CCD2012} recently proposed the additional so called
`non-interference' and `crash
resistance' rationality postulates, which are about
whether self-defeating arguments can interfere with
the justification status of other arguments in undesired ways. We plan
to study the conditions under which these postulates are satisfied by
the \ASPIC\ framework. Fourthly, we have in this paper focussed on weakest and last link definitions of preference orderings over arguments. We aim in future work to consider other ways of ranking augments, and to study whether such preference orderings satisfy the assumptions identified in this paper for ensuring satisfaction of properties and postulates. Finally, since many conceptual choices made in formalising \ASPIC\ appeal to the use of argumentation in practice, further real-world applications of \ASPIC\ are required to establish the framework's utility. One such existing application concerns the use of \ASPIC\ in modelling the reasoning in a well known legal case \cite{PrakPop}. Furthermore, connections between \ASPIC\ and more informal `human' modes of argumentative practice need to be established. \cite{bexASPIC-AIF} represents an important first step in this direction, in which \ASPIC\ is used to provide formal logical foundations for the Argument Interchange Format \cite{AIF}; an emerging standard for representing argumentation knowledge in both computational and human centered argumentation applications.\\[3pt]
\noindent \textbf{Acknowledgements}: We would like to thank the anonymous reviewers, whose comments on earlier versions of this paper have helped to improve the content and presentation of this paper.

\section{Appendix}

\scalefont{0.95}

\subsection{Proofs for Section \ref{Section-ASPIC+-Propoerties-Attack}}

\noindent \textbf{Proposition \ref{Prop$B'$-extension}} Let $A$ and $B$ be arguments where $B$ is plausible or defeasible and $A$ and $B$ have contradictory conclusions, and assume $\mathtt{Prem}(A)\cup \mathtt{Prem}(B)$ is c-consistent if $A$ and $B$ are defined as in Definition \ref{DefinitionConsistentArguments}. Then:\\[-17pt]
\begin{enumerate}
\item For all $B' \in M(B)$, there exists a strict continuation $A^+_{B'}$ of $(M(B)\backslash \{B'\}) \cup M(A)$ such that $A^+_{B'}$ rebuts or undermines $B$ on $B'$.\\[-17pt]

\item If $B \prec A$, and $\preceq$ is reasonable, then for some $B' \in M(B)$, $A^+_{B'}$ defeats $B$.
\end{enumerate}

\begin{proof}
1) Consider first systems closed under contraposition (Def. \ref{DefStrictDefAssumptions}). Observe first that $\mathtt{Conc}(M(B)) \cup \mathtt{Prem_n}(B) \vdash \mathtt{Conc}(B)$ (i.e., one can construct a strict argument concluding $\mathtt{Conc}(B)$ with all premises taken from $\mathtt{Conc}(M(B))$ and the axiom premises in $B$). By contraposition, and since $\mathtt{Conc}(A)$ and $\mathtt{Conc}(B)$ contradict each other, we have that for any $B_i \in M(B)$: $\mathtt{Conc}(M(B) \setminus \{B_i\})  \cup \mathtt{Prem_n}(B) \cup \mathtt{Conc}(A) \vdash - \mathtt{Conc}(B_i)$. Hence, one can construct a strict continuation $A^+_{B_i}$ that continues $\{A\} \cup M(B) \setminus \{B_i\} \cup \mathtt{Prem_n}(B)$ with strict rules, and that concludes $- \mathtt{Conc}(B_i)$.
    \\
    By construction, $M(B) \setminus \{B_i\}$ and $M(A)$ are the maximal fallible sub-arguments of $A^+_{B_i}$, and $\mathtt{Prem}(A^+_{B_i}) \subseteq \mathtt{Prem}(A) \cup \mathtt{Prem}(B)$.
    \\
    Since by construction of $M(B)$ either $B_i$ is an ordinary premise or ends with a defeasible inference, $A^+_{B_i}$ either undermines or rebuts $B_i$. But then $A^+_{B_i}$ also undermines or rebuts $B$.\\[2pt]
For systems closed under transposition the existence of arguments $A^+_{B_i}$ and $B_i$, for all $B_i \in M(B)$, is proven by  straightforward generalisation of Lemma~6 in \cite{c+a07}. Then the proof can be completed as above.
\\
In the case that $A$ and $B$ are defined as in Def. \ref{DefinitionConsistentArguments}, one only need additionally show that $\mathtt{Prem}(A^+_{B_i})$ is c-consistent, which follows given $\mathtt{Prem}(A^+_{B_i}) \subseteq \mathtt{Prem}(A) \cup \mathtt{Prem}(B)$, and $\mathtt{Prem}(A) \cup \mathtt{Prem}(B)$ is c-consistent by assumption.
\\[4pt]
2) By construction, each $B'$continuation $A^+_{B'}$ of $A$ is a strict continuation of $\{A\} \cup M(B) \setminus \{B'\} \cup \mathtt{Prem_n}(B)$. Hence, letting $M(B)$ = $\bigcup_{i=1}^n B_i$, we have $\{B_1, \ldots, B_n, A\}$ where each $A^+_{B_i}$ is a strict continuation of $\{B_1,\ldots,B_{i-1},B_{i+1},B_n,A\}$. Also, $B$ is a strict continuation of $\{B_1,\ldots,B_n\}$. Since $\preceq$ is reasonable, then by Definition \ref{Def-Reasonable}-(2), it cannot be that: $B \prec A$ \emph{and} $A^+_{B_1} \prec B_1$ \emph{and} $\ldots$ \emph{and}
$A^+_{B_n} \prec B_n$. Since by assumption $B \prec A$, then for some $i$, $A^+_{B_i}$ rebuts or undermines $B$ on $B_i$, $A^+_{B_i} \nprec B_i$, and so $A^+_{B_i}$ defeats $B$.

\end{proof}

In what follows, recall Notation \ref{NotationAttacksDefeats}, in which $X \rightharpoonup Y$ denotes $X$ \emph{attacks} $Y$ and $X \hookrightarrow Y$ denotes $X$ \emph{defeats} $Y$.

\begin{lemma}\label{Basic properties}Let ($\A$,
$\C$, $\preceq$) be a \emph{(c-)SAF}:\\[-17pt]
\begin{enumerate}
  \item\label{Basic1} If $A$ is acceptable w.r.t. $S \subseteq \A$ then $A$ is acceptable w.r.t. any superset of $S$.\\[-17pt]
  \item\label{Basic3} If $A \hookrightarrow B$, then $A \hookrightarrow B'$ for some $B' \in \mathtt{Sub}(B)$, and if
      $A \hookrightarrow B'$, $B' \in \mathtt{Sub}(B)$, then $A \hookrightarrow B$.\\[-17pt]
  \item\label{Basic4} If $A$ is acceptable w.r.t. $S \subseteq \A$, $A' \in \mathtt{Sub}(A)$, then $A'$ is acceptable w.r.t. $S$.
\end{enumerate}

\begin{proof} Proofs of \ref{Basic properties}-\ref{Basic1} and \ref{Basic properties}-\ref{Basic3} are straightforward given the definitions of acceptability and defeat. For \ref{Basic properties}-\ref{Basic4}, suppose $B \hookrightarrow A'$. By \ref{Basic properties}-\ref{Basic3}, $B \hookrightarrow A$, and so $\exists C \in S$ s.t. $C \hookrightarrow A$. Hence $A'$ is acceptable w.r.t. $S$.
\end{proof}

\end{lemma}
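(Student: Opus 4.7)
My plan is to handle the three parts in sequence, relying only on the definitions of attack (Def.~\ref{DefAttacks}), defeat (Def.~\ref{DefDefeats}) and acceptability (Def.~\ref{Dung semantics}), and on the fact that $\mathtt{Sub}$ is transitive (if $B' \in \mathtt{Sub}(B)$ then $\mathtt{Sub}(B') \subseteq \mathtt{Sub}(B)$, which is immediate from Def.~\ref{arg}).

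For part 1, I would unfold the definition: $A$ is acceptable w.r.t.\ $S$ means that for every $Y$ defeating $A$ there is some $Z \in S$ defeating $Y$. If $S \subseteq S'$, then that same $Z$ lies in $S'$, so the same defenders continue to work and $A$ is acceptable w.r.t.\ $S'$.

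For part 2, the key observation is that every attack is parametrised by a sub-argument of the target. Concretely, $A \rightharpoonup B$ iff $A$ undercuts/rebuts/undermines $B$ on some $B' \in \mathtt{Sub}(B)$, where $B'$ has the required shape (defeasible top rule, or ordinary-premise, etc.). For the forward direction, suppose $A \hookrightarrow B$; then $A \rightharpoonup B$ on some $B'$, with the preference side-condition $A \nprec B'$ (if the attack is preference-dependent). Since $B' \in \mathtt{Sub}(B')$ and $B'$ itself has the required shape, we also have $A \rightharpoonup B'$ on $B'$ with the very same preference side-condition, hence $A \hookrightarrow B'$. For the converse, if $A \hookrightarrow B'$ with $B' \in \mathtt{Sub}(B)$, then $A$ attacks $B'$ on some $B'' \in \mathtt{Sub}(B')$; by transitivity of $\mathtt{Sub}$, $B'' \in \mathtt{Sub}(B)$, so $A$ attacks $B$ on $B''$, and since $B'' \in \mathtt{Sub}(B) \cap \mathtt{Sub}(B')$ the preference side-condition is unchanged. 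Hence $A \hookrightarrow B$.

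For part 3, I would combine parts 1 and 2 exactly as in the sketch, correcting a small typo. Let $A' \in \mathtt{Sub}(A)$ and let $B$ be any argument defeating $A'$. By part 2 (the converse direction), $B \hookrightarrow A$. Since $A$ is acceptable w.r.t.\ $S$, there is $C \in S$ with $C \hookrightarrow B$. As $B$ was arbitrary, every defeater of $A'$ is itself defeated by some element of $S$, so $A'$ is acceptable w.r.t.\ $S$. No step is really an obstacle here; the only point that needs a moment's care is verifying that the preference side-condition in the defeat relation is inherited correctly in part 2, which is immediate because the witnessing sub-argument $B'$ (or $B''$) is unchanged when we move between $B$ and $B'$.
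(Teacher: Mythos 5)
Your proof is correct and follows essentially the same route as the paper: parts 1 and 2 are the direct unfoldings of the definitions that the paper dismisses as straightforward (your observation that the witnessing sub-argument and hence the preference side-condition is preserved when moving between $B$ and $B'$ is exactly the needed point), and part 3 combines part 2 with acceptability just as the paper does. You also correctly repair the typo in the paper's own proof of part 3, where ``$\exists C \in S$ s.t.\ $C \hookrightarrow A$'' should read ``$C \hookrightarrow B$''.
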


\begin{lemma}\label{Lemma1} Suppose $B \rightharpoonup A$, where $B$ attacks $A$ on $A'$, and if
$A$ and $B$ are defined as in Def. \ref{DefinitionConsistentArguments}, then $\mathtt{Prem}(A)\cup \mathtt{Prem}(B)$ is c-consistent. If $B \not\hookrightarrow A$ then either:\\[-17pt]

\begin{enumerate}
  \item  $A' \hookrightarrow B$, or;\\[-17pt]
  \item For some $B' \in M(B)$, there is a strict continuation $A'^+_{B'}$ of $(M(B)\backslash \{B'\}) \cup M(A')$ s.t. $A'^+_{B'} \hookrightarrow B$.
\end{enumerate}

\begin{proof}  Since $B \not\hookrightarrow A$, then: $B$ rebuts on the conclusion $\varphi$ of $A'$ where $A'$'s top rule is defeasible, or $B$ undermines the ordinary premise $A'$ = $\varphi$, and $B \prec A'$. Also, $\mathtt{Conc}(B)$ must be a contradictory of $\varphi$ since otherwise $\mathtt{Conc}(B)$ would be a contrary of $\varphi$ implying that $B \hookrightarrow A$ (by virtue of the preference independent attack by contraries).\\ Also, $B$ must be plausible or defeasible since for $B \prec A'$ to be the case, $B$ cannot be strict and firm (under the assumption that $\preceq$ is reasonable (Def. \ref{Def-Reasonable})).
\\
1) If $B$ is an ordinary premise or has a defeasible top rule, $A' \rightharpoonup B$, and since $B \prec A'$, $A' \hookrightarrow B$.
\\
2) If $B$ has a strict top rule, then by Proposition \ref{Prop$B'$-extension} there exists a strict continuation $A'^+_{B'}$ s.t. $A'^+_{B'} \hookrightarrow B$.
\end{proof}

\end{lemma}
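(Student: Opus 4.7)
The plan is to first extract enough structural information from the hypothesis that the attack fails as a defeat, and then branch on whether $B$'s top structure is defeasible/premise or strict. First I would observe that by Definition~\ref{DefDefeats}, since $B \rightharpoonup A$ on $A'$ but $B \not\hookrightarrow A$, the attack must be preference-dependent and $B \prec A'$. Preference-dependent attacks exclude undercut, contrary-rebut, and contrary-undermine, so $B$ either rebuts $A$ on $A'$ (where $A'$'s top rule is defeasible with conclusion $\varphi$) or undermines $A$ on $A' = \varphi$, and in both cases $\mathtt{Conc}(B)$ is a \emph{contradictory} (not merely a contrary) of $\varphi = \mathtt{Conc}(A')$. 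Next, using the reasonableness of $\preceq$ (clauses (i)--(ii) of Definition~\ref{Def-Reasonable}), since $B \prec A'$ holds, $B$ cannot be strict and firm, hence $B$ is plausible or defeasible.

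I would then split on the form of $B$. If $B$ is an ordinary premise or $B$'s top rule is defeasible, then because $\mathtt{Conc}(A')$ is a contradictory of $\mathtt{Conc}(B)$, the argument $A'$ symmetrically rebuts or undermines $B$; combined with $B \prec A'$ (and asymmetry of $\prec$ inherited from reasonableness), this gives $A' \hookrightarrow B$, which is conclusion~(1). Otherwise, $B$'s top rule is strict, and I would invoke Proposition~\ref{Prop$B'$-extension} applied to the pair $(A', B)$: $A'$ and $B$ have contradictory conclusions, $B$ is plausible or defeasible, and in the c-consistent setting the hypothesis $\mathtt{Prem}(A) \cup \mathtt{Prem}(B)$ c-consistent passes through to $\mathtt{Prem}(A') \cup \mathtt{Prem}(B)$ since $\mathtt{Prem}(A') \subseteq \mathtt{Prem}(A)$. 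Part~(2) of that proposition then directly yields some $B' \in M(B)$ and a strict continuation $A'^+_{B'}$ of $(M(B)\setminus\{B'\}) \cup M(A')$ that defeats $B$, which is conclusion~(2).

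The main subtlety I expect to have to watch is the case analysis that distinguishes contradictories from contraries: one has to invoke the hypothesis $B \not\hookrightarrow A$ precisely to rule out the three preference-independent attack forms, and this is what lets us conclude $\mathtt{Conc}(A')$ and $\mathtt{Conc}(B)$ are contradictories (so that in Case~1 the counter-direction $A' \rightharpoonup B$ is automatic, and in Case~2 the preconditions of Proposition~\ref{Prop$B'$-extension} are genuinely met). A secondary point requiring care is verifying the c-consistency precondition inherits to $\mathtt{Prem}(A') \cup \mathtt{Prem}(B)$; this is routine but must be explicitly stated so that the application of Proposition~\ref{Prop$B'$-extension} is valid in the \emph{c-SAF} setting as well as the \emph{SAF} setting.
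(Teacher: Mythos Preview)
Your proposal is correct and follows essentially the same approach as the paper's proof: extract from $B \not\hookrightarrow A$ that the attack is preference-dependent with $B \prec A'$ and $\mathtt{Conc}(B)$ a contradictory of $\mathtt{Conc}(A')$, use reasonableness to conclude $B$ is plausible or defeasible, then branch on whether $B$ has a defeasible top rule\,/\,is an ordinary premise (giving $A' \hookrightarrow B$) or has a strict top rule (invoking Proposition~\ref{Prop$B'$-extension}). Your explicit remark that $\mathtt{Prem}(A') \cup \mathtt{Prem}(B)$ inherits c-consistency from $\mathtt{Prem}(A) \cup \mathtt{Prem}(B)$ is a detail the paper leaves implicit; one small inaccuracy is that the asymmetry of $\prec$ needed in Case~1 comes from the definition of $\prec$ as the strict part of $\preceq$, not from reasonableness.
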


The following lemma follows from the fact that if $B$ defeats some strict continuation $A$ of $\{A_1,\ldots,A_n\}$ then the defeat must be on some $A_i$.

\begin{lemma}\label{Lemma3} Let ($\A$,
$\C$, $\preceq$) be a \emph{(c-)SAF}. Let $A \in \A$ be a strict continuation of $\{A_1,\ldots,A_n\}$ $\subseteq \A$, and for $i = 1 \ldots n$, $A_i$ is acceptable w.r.t. $E \subseteq \A$. Then $A$ is acceptable w.r.t. $E$.

\begin{proof}
Let $B$ be any argument s.t. $B \hookrightarrow A$. By Def. \ref{DefAttacks}, $B$ attacks $A$ by undercutting or rebutting on defeasible rules in $A$ or undermining on an ordinary premise in $A$. Hence, by definition of strict continuations (Def. \ref{Def-StrictExtension}), it must be that
$B \rightharpoonup A$ iff $B \rightharpoonup A_i$ for some (possibly more than one) $A_i \in \{A_1,\ldots,A_n\}$. Either:\\[3pt]
1) $B$ undercuts or contrary rebuts/undermines some $A_i$, and so by Def. \ref{DefDefeats}, $B$ defeats $A_i$, or:\\
2) $B$ does not undercut or contrary rebut/undermine some $A_i$. Suppose for all $A_i$, for all sub-arguments $A_i'$ of $A_i$ s.t. $B$ rebuts or undermines $A_i$ on $A_i'$, $B \prec A_i'$. This contradicts $B$ defeats $A$. Hence, for some $A_i$, $B$ defeats $A_i$.\\[3pt]
We have shown that if $B$ defeats $A$ then $B $ defeats some $A_i$. By assumption of $A_i$ acceptable w.r.t. $E$, $\exists C \in E$ s.t. $C$ defeats $B$. Hence, $A$ is acceptable w.r.t. $E$.
\end{proof}
\end{lemma}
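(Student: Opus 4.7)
The plan is to take an arbitrary defeater $B$ of $A$ and show it is defeated by some $C \in E$, by reducing defeat on $A$ to defeat on one of the $A_i$. The key structural fact is that because $A$ is a strict continuation of $\{A_1,\ldots,A_n\}$, the only new material in $A$ beyond the $A_i$ consists of axiom premises and strict rule applications; in particular, by Definition~\ref{Def-StrictExtension}, $A$ and $\bigcup_i A_i$ share the same ordinary premises and the same defeasible rules. Consequently every ``attackable'' element of $A$ (ordinary premise, conclusion of a defeasible rule, or name of a defeasible rule) already occurs inside some $A_i$.

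First I would show: if $B$ attacks $A$, then $B$ attacks some $A_i$ on the very same subargument. By Definition~\ref{DefAttacks}, attacks (undercut, rebut, undermine) target either an ordinary premise, a defeasible top rule, or its conclusion in some subargument $A'$ of $A$. Using the strict continuation clauses of Definition~\ref{Def-StrictExtension}, such an $A'$ cannot lie in the purely strict/axiomatic ``upper part'' of $A$, so $A' \in \mathtt{Sub}(A_i)$ for some $i$. Thus $B$ attacks $A_i$ on the identical $A'$, and the attack is of the same type (undercut, rebut, undermine, and in particular contrary or not).

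Next I would transfer this to defeat. Suppose $B$ defeats $A$; then either the attack is preference-independent (undercut/contrary-rebut/contrary-undermine) or it is preference-dependent with $B \nprec A'$ for the subargument $A'$ being attacked. In the first case, $B$ preference-independently attacks $A_i$ on the same $A'$, hence $B \hookrightarrow A_i$ outright. In the second case, since the attacked subargument $A'$ is literally shared with $A_i$ and $B \nprec A'$, Definition~\ref{DefDefeats} yields $B \hookrightarrow A_i$ directly. Either way $B$ defeats some $A_i$, so by acceptability of $A_i$ w.r.t.\ $E$ there exists $C \in E$ with $C \hookrightarrow B$. Hence $A$ is acceptable w.r.t.\ $E$.

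I do not expect a serious obstacle; the proof is essentially bookkeeping. The only subtle point is ensuring that the subargument $A'$ witnessing the attack on $A$ really is a subargument of some $A_i$ (rather than only of the ``strict extension'' part). This is guaranteed because strict rules and axiom premises are neither defeasibly concluded nor ordinary premises, so they admit no attack point of any of the three kinds in Definition~\ref{DefAttacks}; every attack on $A$ must therefore pierce down into the shared defeasible/ordinary material. Once this is spelled out, the preference comparison transfers verbatim because it is a comparison with $A'$ itself, not with $A$ or $A_i$.
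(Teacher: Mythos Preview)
Your proposal is correct and follows essentially the same route as the paper: both argue that any attack on $A$ must target an ordinary premise or a defeasible rule, all of which coincide (by Definition~\ref{Def-StrictExtension}) with those of some $A_i$, so $B$ attacks some $A_i$; then preference-independent attacks transfer immediately, and for preference-dependent attacks the comparison $B \nprec A'$ carries over because it is with the attacked subargument itself. The paper organises the preference-dependent case as a brief proof by contradiction (assuming $B \prec A_i'$ for every relevant $A_i'$ and deriving a clash with $B \hookrightarrow A$), whereas you argue it directly, but the substance is identical.
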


For the following proposition, recall that by assumption, any c-\emph{SAF} is well defined and so satisfies c-classicality (Def. \ref{DefStrictDefAssumptions}).
\\[5pt]
\noindent \textbf{Proposition \ref{Lemma4}} Let ($\A$,
$\C$, $\preceq$) be a c-\emph{SAF}. If $A_1,\ldots,A_n$ are acceptable w.r.t. some conflict-free $E \subseteq \A$, then $\bigcup_{i=1}^n\mathtt{Prem}(A_i)$ is c-consistent.

\begin{proof}
Suppose for contradiction otherwise, and let $S$ be any minimally c-inconsistent subset of $\bigcup_{i=1}^n\mathtt{Prem}(A_i)$. By assumption of c-classicality:\\[-17pt]
 \begin{quote}
    for all $\varphi \in S$, $S \setminus \{\varphi\} \vdash - \varphi$ and $S \setminus \{\varphi\}$ is c-consistent.\\[-17pt]
 \end{quote}

 \noindent We thus have the set of ordinary premises $S$ = $\{\varphi_1,\ldots,\varphi_m\} \subseteq \bigcup_{i=1}^n\mathtt{Prem}(A_i)$ (that must be non-empty given that $\K_n$ is c-consistent by assumption of axiom consistency (Def. \ref{DefStrictDefAssumptions})), such that for $i=1\ldots m$, there is a strict continuation $B^{+\backslash i}$ of $\{\varphi_1,\ldots,\varphi_{i-1},$ $\varphi_{i+1},\varphi_m\}$ s.t. $B^{+\backslash i} \rightharpoonup \varphi_i$ (recall that elements from $\K$ are also arguments, so this notation is well-defined). \\ Since $\preceq$ is reasonable, for some $i$, $B^{+\backslash i} \nprec \varphi_i$ and so $B^{+\backslash i} \hookrightarrow \varphi_i$. \\ Since for $i = 1 \ldots n$, $A_i$ is acceptable w.r.t. $E$, then: since $\varphi_i \in \bigcup_{i=1}^n\mathtt{Prem}(A_i)$, then by Lemma \ref{Basic properties}-\ref{Basic4}, $\varphi_i$ is acceptable w.r.t. $E$. \\
Since $B^{+\backslash i}$ is a strict continuation of some subset of $\bigcup_{i=1}^n\mathtt{Prem}(A_i)$, then by Lemmas \ref{Basic properties}-\ref{Basic4} and \ref{Lemma3}, $B^{+\backslash i}$ is acceptable w.r.t. $E$.\\ But then since  $B^{+\backslash i} \hookrightarrow \varphi_i$, $\exists X,Y \in E$ s.t. $Y \hookrightarrow B^{+\backslash i}$, $X \hookrightarrow Y$, contradicting $E$ is conflict free.
\end{proof}

\begin{lemma}\label{Lemma2} Let $A$ be acceptable w.r.t an admissible
extension $S$ of a \emph{(c-)SAF} ($\A$,
$\C$, $\preceq$). Then $\forall B \in S \cup \{A\}$, neither $A \hookrightarrow B$ or $B \hookrightarrow A$.

\begin{proof} Suppose for contradiction that: 1) $A \hookrightarrow B$, $B \in S \cup \{A\}$. By assumption of $B$'s acceptability, $\exists C \in S$ s.t. $C \hookrightarrow A$, and by acceptability of $A$, $\exists D \in S$ s.t. $D \hookrightarrow C$, hence $D \rightharpoonup C$, contradicting $S$ is conflict free; 2) $B \hookrightarrow A$, $B \in S$. By acceptability of $A$, $\exists D \in S$ s.t. $D \hookrightarrow B$, hence $D \rightharpoonup B$, contradicting $S$ is conflict free.
\end{proof}

\end{lemma}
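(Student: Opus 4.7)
The plan is to prove the lemma by contradiction, splitting into cases based on whether $B \in S$ or $B = A$, and on the direction of the putative defeat. In each case the strategy is to unwind the acceptability hypotheses to produce two elements of $S$ one of which defeats the other, and thus attacks the other (since defeat implies attack), contradicting conflict-freeness of $S$.

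First I would handle the case $B \hookrightarrow A$ with $B \in S$. By acceptability of $A$ w.r.t.\ $S$, there exists $D \in S$ with $D \hookrightarrow B$; since $D, B \in S$ and $D \rightharpoonup B$, this contradicts that $S$ is (attack) conflict free. Next, for $A \hookrightarrow B$ with $B \in S$, admissibility of $S$ gives that $B$ is acceptable w.r.t.\ $S$, so there is some $C \in S$ with $C \hookrightarrow A$; then, appealing to $A$'s acceptability, there is $D \in S$ with $D \hookrightarrow C$, and as $C, D \in S$ we again violate conflict-freeness. The remaining cases where $B = A$ (so $A \hookrightarrow A$) reduce to the same pattern: $A$ itself is then one of its defeaters, so acceptability yields $C \in S$ defeating $A$, and then $D \in S$ defeating $C$, again contradicting conflict-freeness.

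I do not anticipate a serious obstacle: everything follows by elementary unfolding of Definitions~\ref{Dung semantics}, \ref{DefDefeats}, and \ref{DefinitionADConflictFree}. The only point worth flagging is that throughout the argument I rely on the implication ``$X \hookrightarrow Y$ implies $X \rightharpoonup Y$'', which is immediate from Definition~\ref{DefDefeats}. This is what allows conflict-freeness (defined in the attack-based sense of Definition~\ref{DefinitionADConflictFree}) to be contradicted by the existence of a defeat within $S$, and ensures that the argument works uniformly regardless of whether one has adopted the attack or defeat definition of conflict free.
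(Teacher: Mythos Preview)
Your proposal is correct and follows essentially the same argument as the paper's proof: both proceed by contradiction, use acceptability of $B$ (either as a member of the admissible $S$ or as $A$ itself) to obtain a defender $C \in S$ of $B$, then use acceptability of $A$ to obtain $D \in S$ defeating $C$, and conclude by noting that defeat implies attack, contradicting (attack) conflict-freeness of $S$. The only cosmetic difference is that you explicitly separate the sub-case $B = A$, whereas the paper folds it into the case $A \hookrightarrow B$ by appealing directly to ``$B$'s acceptability''.
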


\noindent \textbf{Proposition \ref{PropositionConflictFree}} Let $A$ be acceptable w.r.t an admissible
extension $S$ of a \emph{(c-)SAF} ($\A$,
$\C$, $\preceq$). Then
$S'$ = $S$ $\cup$ $\{A\}$  is conflict free.

\begin{proof}
Firstly, since for any $B \in S$, $B$ is acceptable w.r.t. $S$, then by Proposition \ref{Lemma4}, $\mathtt{Prem}(A) \cup \mathtt{Prem}(B)$ is c-consistent.\\[2pt]
Suppose for contradiction that $S'$ is not conflict free. By assumption, $S$ is conflict free. $A$ cannot attack itself since $A$ must then defeat itself,  contradicting Lemma \ref{Lemma2}. Hence, we have the following two cases:\\[2pt]
\noindent \textbf{1)} $\exists B \in S$, $B \rightharpoonup A$, and $B \not\hookrightarrow A$ by Lemma \ref{Lemma2}. By Lemma \ref{Lemma1}, for some sub-argument $A'$ of $A$, either:
\\[2pt]
\textbf{1.1)} $A'$ defeats $B$, hence (by acceptability of $B$) $\exists C \in S$ s.t. $C \hookrightarrow A'$, and so (by Lemma \ref{Basic properties}-\ref{Basic3}) $C \hookrightarrow A$, contradicting Lemma \ref{Lemma2}, or;
\textbf{1.2)} $\exists$ $A'^+_{B'}$ s.t. $A'^+_{B'} \hookrightarrow B$, hence $\exists C \in S$ s.t. $C \hookrightarrow A'^+_{B'}$. By construction of $A'^+_{B'}$ and Lemma \ref{Basic properties}-\ref{Basic3}, it must be that $C \hookrightarrow Z$, $Z \in \mathtt{Sub}(A) \cup  \mathtt{Sub}(B)$. Hence, (by Lemma \ref{Basic properties}-\ref{Basic3}) either $C \hookrightarrow B$, contradicting $S$ is conflict free, or $C \hookrightarrow A$, contradicting Lemma \ref{Lemma2}.\\[2pt]
\noindent \textbf{2)} $\exists B \in S$, $A \rightharpoonup B$, and $A \not\hookrightarrow B$ by Lemma \ref{Lemma2}.
By Lemma \ref{Lemma1}, for some sub-argument $B'$ of $B$, either:
\\[2pt]
\textbf{2.1)} $B'$ defeats $A$, hence (by acceptability of $A$) $\exists C \in S$ s.t. $C \hookrightarrow B'$ and so (by Lemma \ref{Basic properties}-\ref{Basic3}) $C \hookrightarrow B$, hence $C \rightharpoonup B$, contradicting $S$ is conflict free, or;
\textbf{2.2)}  $\exists B'^+_{A'}$ s.t. $B'^+_{A'} \hookrightarrow A$, hence $\exists C \in S$ s.t. $C \hookrightarrow B'^+_{A'}$. By construction of $B'^+_{A'}$, $C \hookrightarrow Z$, $Z \in \mathtt{Sub}(A) \cup  \mathtt{Sub}(B)$, leading to a contradiction as in \textbf{1.2)}.\end{proof}

\noindent \textbf{Proposition} \ref{PropositionFundamentalLemma} Let $A, A'$ be acceptable w.r.t an admissible
extension $S$ of a \emph{(c-)SAF} ($\A$,
$\C$, $\preceq$). Then:\\[-17pt]
\begin{enumerate}
\item $S'$ = $S$ $\cup$ $\{A\}$ is admissible \\[-17pt]\item $A'$ is
acceptable w.r.t. $S'$.\\[-17pt]
\end{enumerate}
\begin{proof} 1) By Lemma \ref{Basic properties}-\ref{Basic1}, all arguments in $S'$ are acceptable w.r.t. $S'$. By Proposition \ref{PropositionConflictFree}, $S'$ is conflict free. Hence $S'$ is admissible. 2) By Lemma \ref{Basic properties}-\ref{Basic1}, $A'$ is
acceptable w.r.t. $S'$.\end{proof}

\subsection{Proofs for Section \ref{Section-ASPIC+R-RationalityPostulates}}

\noindent \textbf{Theorem \ref{TheoremSub-argumentClosure}} [\emph{\textbf{Sub-argument Closure}}] Let $\Delta$ = $(\A,\C,\preceq)$ be a \emph{(c-)SAF} and $E$ a complete extension of $\Delta$. Then for all $A \in E$: if $A' \in \mathtt{Sub}(A)$ then $A' \in E$.\\[-17pt]
\begin{proof} $A'$ is acceptable w.r.t. $E$ by Lemma \ref{Basic properties}-\ref{Basic4}. $E \cup \{A'\}$ is conflict free by Prop.\ref{PropositionConflictFree}. Hence, since $E$ is complete, $A' \in E$.\end{proof}

\noindent \textbf{Theorem \ref{TheoremStrictRulesClosure}} [\emph{\textbf{Closure under Strict Rules}}] Let $\Delta$ = $(\A,\C,\preceq)$ be a \emph{(c-)SAF} and $E$ a complete extension of $\Delta$. Then $\{ \mathtt{Conc}(A) | A \in E\}$ = $Cl_{R_s}(\{ \mathtt{Conc}(A) | A \in E\})$.\\[-17pt]

\begin{proof} It suffices to show that any strict continuation $X$ of $\{ A | A \in E\}$ is in $E$. By Lemma \ref{Lemma3}, any such $X$ is acceptable w.r.t. $E$. By Proposition \ref{PropositionConflictFree}, $E \cup \{X\}$ is conflict free. Hence, since $E$ is complete, $X \in E$. Note that if $\Delta$ is a c-\emph{SAF}, Proposition \ref{Lemma4} guarantees that $X$'s premises are c-consistent.\end{proof}


\noindent \textbf{Theorem \ref{TheoremDirectConsistency}} [\emph{\textbf{Direct Consistency}}] Let $\Delta$ = $(\A,\C,\preceq)$ be a \emph{(c-)SAF} and $E$ an admissible extension of $\Delta$. Then $\{ \mathtt{Conc}(A) | A \in E\}$ is consistent.\\[-17pt]

\begin{proof} We show that if $A,B \in E$,  $\mathtt{Conc}(A)$ $\in$ $\overline{\mathtt{Conc}(B)}$ (i.e., $E$ is inconsistent (Def. \ref{argumentation-system})), then this leads to a contradiction:
\\[2pt]
\textbf{1.} $A$ is firm and strict, and:
\\
\textbf{1.1} if $B$ is strict and firm, then this contradicts the assumption of \emph{axiom consistency} (Def. \ref{DefStrictDefAssumptions});
\textbf{1.2} if B is plausible or defeasible, and \textbf{1.2.1} $B$ is an ordinary premise or has a defeasible top rule, then $A \rightharpoonup B$, contradicting $E$ is conflict free, or \textbf{1.2.2} $B$ has a strict top rule (see \textbf{3} below).
\\[2pt]
\textbf{2.} $A$ is plausible or defeasible, and:
\\
\textbf{2.1} if $B$ is strict and firm then under the \emph{well-formed} assumption (Def. \ref{DefStrictDefAssumptions}) $\mathtt{Conc}(A)$ cannot be a contrary of $\mathtt{Conc}(B)$, and so they are a contradictory of each other, and \textbf{2.1.1} $A$ is an ordinary premise or has a defeasible top rule, in which case $B \rightharpoonup A$, contradicting $E$ is conflict free, or \textbf{2.1.2} $A$ has a strict top rule (see \textbf{3} below);
\textbf{2.2} if $B$ is plausible or defeasible and \textbf{2.2.1} $B$ is an ordinary premise or has a defeasible top rule then $A \rightharpoonup B$, contradicting $E$ is conflict free, or \textbf{2.2.2} $B$ has a strict top rule (see \textbf{3} below).
\\[2pt]
\textbf{3.} Each of \textbf{1.2.2}, \textbf{2.1.2} and \textbf{2.2.2} describes the case where $X,Y \in E$, $\mathtt{Conc}(X)$ $\in$ $\overline{\mathtt{Conc}(Y)}$, $Y$ is defeasible or plausible and has a strict top rule, and so by the \emph{well-formed} assumption  $\mathtt{Conc}(X)$ and $\mathtt{Conc}(Y)$ must be contradictory.\\ In the case that $\Delta$ is a c-\emph{SAF}, since $X,Y \in E$, then $X, Y$ are acceptable w.r.t. $E$, and so by Proposition \ref{Lemma4}, $\mathtt{Prem}(A) \cup \mathtt{Prem}(B)$ is c-consistent.\\
By Prop \ref{Prop$B'$-extension} there is a strict continuation $X^+_{Y'}$ of $M(Y)\backslash\{Y'\} \cup M(X)$ s.t. $X^+_{Y'} \rightharpoonup Y$. By Lemma \ref{Lemma3} $X^+_{Y'}$ is acceptable w.r.t. $E$, and by Prop. \ref{PropositionConflictFree}, $E \cup \{X^+_{Y'}\}$ is conflict free, contradicting $X^+_{Y'} \rightharpoonup Y$.\end{proof}

\noindent \textbf{Theorem \ref{TheoremIndirectConsistency}} [\emph{\textbf{Indirect Consistency}}] Let $\Delta$ = $(\A,\C,\preceq)$ be a \emph{(c-)SAF} and $E$ a complete extension of $\Delta$. Then $Cl_{\R_s}(\{\mathtt{Conc}(A) | A \in E\})$ is consistent.\\[-17pt]
\begin{proof} Follows from Theorems \ref{TheoremStrictRulesClosure} and \ref{TheoremDirectConsistency}.
\end{proof}

\subsection{Proofs for Section \ref{Section-Attack-Defeat-Comparison}}

\noindent \textbf{Proposition \ref{Prop-Equivalence-a-d-admissible}} Let $\Delta$ be a \emph{(c-)SAF}. For $T \in$ $\{$ \emph{admissible}, \emph{complete}, \emph{grounded}, \emph{preferred}, \emph{stable}$\}$, $E$ is an \emph{att}-$T$ extension of $\Delta$ iff $E$ is a \emph{def}-$T$ extension of $\Delta$.\\[-17pt]

\begin{proof} We first show that $E$ is conflict free under the attack definition iff $E$ is conflict free under the defeat definition. The left to right half is trivial: if no two arguments in $E$ attack each other, then no two arguments in $E$ defeat each other. For the right to left half, suppose $B, A \in E$, $B \rightharpoonup A$, $B \not\hookrightarrow A$. First note that since $A, B$ are acceptable w.r.t. $E$, then in the case of a \emph{c-SAF} where $A$ and $B$ are defined as in Def. \ref{DefinitionConsistentArguments}, $\mathtt{Prem}(A)\cup \mathtt{Prem}(B)$ is c-consistent by Proposition \ref{Lemma4}. Then, by Lemma \ref{Lemma1}, $\exists A' \in \mathtt{Sub}(A)$ s.t. either: i) $A' \hookrightarrow B$, or ii) there is a strict continuation $A'^+_{B'}$ of $(M(B)\backslash \{B'\}) \cup M(A')$ s.t. $A'^+_{B'} \hookrightarrow B$. In case i), (by acceptability of $B$) $\exists C \in E$ s.t. $C \hookrightarrow A'$, and so (by Lemma \ref{Basic properties}-\ref{Basic3}) $C \hookrightarrow A$, contradicting $E$ is defeat conflict free. In case ii), (by acceptability of $B$), $\exists C \in E$ s.t. $C \hookrightarrow A'^+_{B'}$. By construction of $A'^+_{B'}$ and Lemma \ref{Basic properties}-\ref{Basic3}, $C \hookrightarrow Z$, $Z \in \mathtt{Sub}(A) \cup  \mathtt{Sub}(B)$. Hence, (by Lemma \ref{Basic properties}-\ref{Basic3}) either $C \hookrightarrow B$ or $C \hookrightarrow A$, contradicting $E$ is defeat conflict free.

Next, note that admissible and complete extensions are in Definition~\ref{Dung semantics} defined in terms of conflict-freeness and acceptability, where acceptability is according to Definition~\ref{DefinitionSAFExtensions} defined in terms of defeat relations between arguments. Then since any \emph{att} semantics and \emph{def} semantics agree on the defeat relation between arguments, the proposition follows for admissible and complete semantics. Then since preferred and grounded semantics are defined in terms of complete semantics, it also follows for these semantics, and then since stable semantics is defined in terms of preferred semantics and the defeat relation, it also follows for stable semantics.
%

\end{proof}

\subsection{Proofs for Section \ref{SubSection-ASPIC+WeakestLast}}

In the following proofs, we may write $\mathtt{LDR}$ as an abbreviation for $\mathtt{LastDefRules}$, and $\mathtt{DR}$ as an abbreviation for $\mathtt{DefRules}$. Also, as an abuse of notation we may simply write $\triangleleft$ instead of $\triangleleft_{\mathtt{s}}$. \\[-2pt]

\noindent \textbf{Proposition \ref{PropLastReasonable}} Let $\preceq$ be defined according to the last-link principle, based on a reasonable inducing $\triangleleft_{\mathtt{s}}$. Then $\preceq$ is \emph{reasonable}.\\[-18pt]

\begin{proof} \emph{Proof of the first condition of reasonableness}:
\\[2pt]
\noindent 1-i) Assume $A$ is strict and firm, and so $\mathtt{LDR}(A)$ = $\emptyset$ and $\mathtt{Prem_{p}}(A) = \emptyset$.\\ - If $\mathtt{LDR}(B) \neq \emptyset$, then $A$ and $B$ must be compared by the first condition of Def. \ref{Def Last Link}. By Def.\ref{Def-hd12}, $\mathtt{LDR}(B)$ $\triangleleft_{\mathtt{s}}$ $\mathtt{LDR}(A)$, and so $B \prec A$.
\\ - If $\mathtt{LDR}(B) = \emptyset$, then  $A$ and $B$ must be compared by the second condition of Def. \ref{Def Last Link}. By assumption of $B$ being plausible or defeasible, $\mathtt{Prem_{p}}(B) \neq \emptyset$. By Def.\ref{Def-hd12}, $\mathtt{Prem_{p}}(B)$ $\triangleleft_{\mathtt{s}}$ $\mathtt{Prem_{p}}(A)$, and so $B \prec A$.
\\[3pt]
\noindent 1-ii) Assume $B$ is strict and firm, and so $\mathtt{LDR}(B)$ = $\emptyset$, $\mathtt{Prem_{p}}(B) = \emptyset$. Then by Def.\ref{Def-hd12}, $\mathtt{LDR}(B)$ $\ntriangleleft_{\mathtt{s}}$ $\mathtt{LDR}(A)$ and $\mathtt{Prem_{p}}(B)$ $\ntriangleleft_{\mathtt{s}}$ $\mathtt{Prem_{p}}(A)$, and so $B \nprec A$ by the first or second condition of Def. \ref{Def Last Link}.
\\[3pt]
\noindent 1-iii) Follows straightforwardly from Def. \ref{Def Last Link}, given that $A'$ differs from $A$ only in its strict rules and/or axiom premises.
\\[5pt]
\emph{Proof of the second condition of reasonableness}:
\\[3pt]
Assume for contradiction that:\\[-17pt]

\begin{quote} $\forall i$, there is a strict continuation $C^{+\backslash i}$ of $\{C_1,\ldots,C_{i-1},$ $C_{i+1},\ldots,C_n\}$
such that $C^{+\backslash i} \prec C_i$ 
\end{quote}
\noindent At least one argument in $\{C_1,\ldots,C_n\}$ must be defeasible or plausible, else every $C^{+\backslash i}$ would be strict and firm, contradicting 1-ii) above.
\\[2pt]
\noindent 1) Suppose for some $i$ = $1\ldots n$, $\mathtt{LDR}(C_i) \neq \emptyset$. W.l.o.g. we can assume $i = 1$.
Then, $C^{+\backslash 1} \prec C_1$ by virtue of condition 1 of Def. \ref{Def Last Link}.
That is to say:  ($\mathtt{LDR}(C^{+\backslash 1})$ =  $\bigcup_{j =2}^n\mathtt{LDR}(C_j)$) $\lhd_{\mathtt{s}}$ $\mathtt{LDR}(C_1)$.\\[2pt]
By Def.\ref{Def-hd12}-(1), it must be that $\bigcup_{j =1}^n\mathtt{LDR}(C_j) \neq \emptyset$. W.l.o.g. assume $\{C_2,\ldots,C_{m\leq n}\}$ are the arguments in $\{C_2,\ldots,C_n\}$ such that for  $k = 2 \ldots m$, $\mathtt{LDR}(C_k) \neq \emptyset$.
We have that:\\[-17pt]

\begin{quote} \hspace{32mm}$\bigcup_{k = 2}^m\mathtt{LDR}(C_k)$ $\lhd_{\mathtt{s}}$ $\mathtt{LDR}(C_1)$.
\end{quote}

\noindent Since $\triangleleft_{\mathtt{s}}$ is reasonable inducing (Def.\ref{Def-s-reasonable inducing}), for some $k = 2 \dots m$, $\mathtt{LDR}(C_k)$ $\triangleleft_{\mathtt{s}}$ $\mathtt{LDR}(C_1)$. We can w.l.o.g. assume:

		      \begin{quote}
                         $\mathtt{LDR}(C_2)$ $\triangleleft_{\mathtt{s}}$ $\mathtt{LDR}(C_1)$.
                       \end{quote}

\noindent
By assumption, $C^{+\backslash 2} \prec C_2$. Since $\mathtt{LDR}(C_2) \neq \emptyset$, one can reason as above to conclude that $\bigcup_{k = 1, k \neq 2}^m\mathtt{LDR}(C_k)$ $\lhd_{\mathtt{s}}$ $\mathtt{LDR}(C_2)$. Suppose $m = 2$. Then $\mathtt{LDR}(C_1) \triangleleft_{\mathtt{s}} \mathtt{LDR}(C_2)$. But then by transitivity of $\triangleleft_{\mathtt{s}}$, $\mathtt{LDR}(C_1) \triangleleft_{\mathtt{s}} \mathtt{LDR}(C_1)$, contradicting the irreflexivity of $\triangleleft_{\mathtt{s}}$. Suppose $m > 2$ and w.l.o.g. assume

 \begin{quote}
                                                                                          $\mathtt{LDR}(C_3) \triangleleft_{\mathtt{s}} \mathtt{LDR}(C_2)$.
                                                                                        \end{quote}

\noindent Reasoning as above: $\bigcup_{k = 1, 2, k \neq 3}^m\mathtt{LDR}(C_k)$ $\lhd_{\mathtt{s}}$ $\mathtt{LDR}(C_3)$.  Suppose $m = 3$, and either
 $\mathtt{LDR}(C_2) \triangleleft_{\mathtt{s}} \mathtt{LDR}(C_3)$ or  $\mathtt{LDR}(C_1) \triangleleft_{\mathtt{s}} \mathtt{LDR}(C_3)$.
Via transitivity of $\triangleleft_{\mathtt{s}}$, either would contradict the irreflexivity of $\triangleleft_{\mathtt{s}}$. Suppose $m > 3$ and
w.l.o.g. assume

\begin{quote}
$\mathtt{LDR}(C_4) \triangleleft_{\mathtt{s}} \mathtt{LDR}(C_3)$.
\end{quote}

\noindent It is easy to see that we can continue to reason in the same way until we have that:

\begin{quote}   $\bigcup_{k = 1}^{m-1}\mathtt{LDR}(C_k)$ $\lhd_{\mathtt{s}}$ $\mathtt{LDR}(C_m)$ and for $k = 1 \ldots m-1$, $C_{k+1} \triangleleft_{\mathtt{s}} C_{k}$. \end{quote}

But then it must be that for some $k = 1 \ldots m-1$,  $\mathtt{LDR}(C_k)$ $\triangleleft_{\mathtt{s}}$ $\mathtt{LDR}(C_m)$,  contradicting (via transitivity) the irreflexivity of $\lhd_{\mathtt{s}}$.
\\[3pt]
2) Suppose for $i$ = $1\ldots n$, $\mathtt{LDR}(C_i) = \emptyset$. Then $C^{+\backslash i} \prec C_i$ by virtue of condition 2 of Def. \ref{Def Last Link}. That is to say,   $\bigcup_{j = 1,j\neq i}^n\mathtt{Prem_{p}}(C_j)$ $\lhd_{\mathtt{s}}$ $\mathtt{Prem_{p}}(C_i)$.
 One can then, by virtue of $\triangleleft_{\mathtt{s}}$ being reasonable inducing,
 reason to a contradiction as above.
\end{proof}

\noindent \textbf{Proposition \ref{PropWeakestReasonable}} Let $\preceq$ be defined according to the weakest-link principle, based on a set comparison $\triangleleft_{\mathtt{s}}$ that is reasonable inducing. Then $\preceq$ is \emph{reasonable}.\\[-18pt]

\begin{proof} \emph{Proof of the first condition of reasonableness}:
\\[2pt]
\noindent 1-i) Assume $A$ is strict and firm, and so \sm{$\mathtt{DR}(A)$} = $\emptyset$ and $\mathtt{Prem_{p}}(A) = \emptyset$:
\\ -- Suppose $B$ is strict \sm{($\mathtt{DR}(B)$ = $\emptyset$)}. Then by assumption, $B$ is plausible, i.e,. $\mathtt{Prem_{p}}(B) \neq \emptyset$. Hence by Def.\ref{Def-hd12}, $\mathtt{Prem_{p}}(B)$ $\triangleleft_{\mathtt{s}}$ $\mathtt{Prem_{p}}(A)$, and so by Def. \ref{Def Weakest Link}-1), $B \prec A$.
\\ -- Suppose $B$ is firm \sm{($\mathtt{Prem_{p}}(B) = \emptyset$)}. Then by assumption $B$ is defeasible, i.e,. $\mathtt{DR}(B) \neq \emptyset$. Hence by Def.\ref{Def-hd12}, $\mathtt{DR}(B)$ $\triangleleft_{\mathtt{s}}$ $\mathtt{DR}(A)$, and so by Def. \ref{Def Weakest Link}-2), $B \prec A$.
\\
-- Suppose $B$ is defeasible and plausible. Then by Def.\ref{Def-hd12}, $\mathtt{Prem_{p}}(B)$ $\triangleleft_{\mathtt{s}}$ $ \mathtt{Prem_{p}}(A)$,  and $\mathtt{DR}(B)$ $\triangleleft_{\mathtt{s}}$ $\mathtt{DR}(A)$, and so by Def. \ref{Def Weakest Link}-3), $B \prec A$.
\\[3pt]
\noindent 1-ii) Assume $B$ is strict and firm, and so $\mathtt{DR}(B)$ = $\emptyset$, $\mathtt{Prem_{p}}(B) = \emptyset$. Then by Def.\ref{Def-hd12}, it cannot be that $\mathtt{DR}(B)$ $\triangleleft_{\mathtt{s}}$ $\mathtt{DR}(A)$ or $\mathtt{Prem_{p}}(B)$ $\triangleleft_{\mathtt{s}}$ $\mathtt{Prem_{p}}(A)$, and so it cannot be that $B \prec A$, by the first, second or third condition of Def. \ref{Def Weakest Link}.
\\[3pt]
\noindent 1-iii) Follows straightforwardly from Def. \ref{Def Weakest Link}, given that $A'$ differs from $A$ only in its strict rules and/or axiom premises.
\\[3pt]
\emph{Proof of the second condition of reasonableness}:
\\[2pt]
Suppose for contradiction that:\\[-18pt]

\begin{quote} $\forall i$, there is a strict continuation $C^{+\backslash i}$ of $\{C_1,\ldots,C_{i-1},$ $C_{i+1},\ldots,C_n\}$
such that $C^{+\backslash i} \prec C_i$
\end{quote}
\noindent At least one argument in $\{C_1,\ldots,C_n\}$ must be defeasible or plausible, else every $C^{+\backslash i}$ would be strict and firm, contradicting 1-ii) above. \\[2pt]
\noindent 1) Suppose for some $i$ = $1\ldots n$, $\mathtt{DR}(C_i) \neq \emptyset$ and w.l.o.g assume $i = 1$. Then the assumed weakest link preference $C^{+\backslash 1} \prec C_1$ holds on the basis of Def. \ref{Def Weakest Link}-2) or \ref{Def Weakest Link}-3). One can then reason in exactly the same way as in the proof of the second condition of reasonableness in Proposition \ref{PropLastReasonable} -- case 1) -- substituting `$\mathtt{DR}$' for `$\mathtt{LDR}$', showing that this leads to a contradiction.
\\[3pt]
\noindent 2) Suppose for some $i$ = $1\ldots n$, $\mathtt{Prem_{p}}(C_i) \neq \emptyset$ and (w.l.o.g) $i = 1$. Then, $C^{+\backslash 1} \prec C_1$ by Def. \ref{Def Weakest Link}-1) or \ref{Def Weakest Link}-3). One can then reason in exactly the same way as in the proof of the second condition of reasonableness in Proposition \ref{PropLastReasonable} -- case 1) -- substituting `$\mathtt{Prem_{p}}$' for `$\mathtt{LDR}$', showing that this leads to a contradiction.
\end{proof}

\noindent \textbf{Proposition \ref{PropReasonableInducingEli}} $\triangleleft_{\mathtt{Eli}}$ is reasonable inducing.

\begin{proof} Firstly, note that given a partial preordering $\leq$ (reflexive and transitive) over any $\Gamma$, then it is straightforward to show that the strict counterpart $<$ is a strict partial ordering (irreflexive and transitive).

 $\triangleleft_{\mathtt{Eli}}$ is irreflexive:\\[2pt]
\noindent Assume $\Gamma \triangleleft_{\mathtt{Eli}} \Gamma$. Then $\exists X \in \Gamma$, $X < X$, contradicting the irreflexivity of $<$.

$\triangleleft_{\mathtt{Eli}}$ is transitive:\\[2pt]
\noindent Suppose $\Gamma \triangleleft_{\mathtt{Eli}} \Gamma' \triangleleft_{\mathtt{Eli}} \Gamma''$.
By Def.\ref{Def-hd12}-1) it must be that $\Gamma \neq \emptyset, \Gamma' \neq \emptyset$. If $\Gamma'' = \emptyset$ then $\Gamma \triangleleft_{\mathtt{Eli}} \Gamma''$ by Def.\ref{Def-hd12}-2). Else if $\Gamma'' \neq \emptyset$, $\exists X \in \Gamma$ s.t. $\forall X' \in \Gamma'$, $X < X'$, and $\exists X' \in \Gamma'$ s.t. $\forall X'' \in \Gamma''$, $X' < X''$. Hence by transitivity of $<$, $\exists X \in \Gamma$ s.t. $\forall X'' \in \Gamma''$, $X < X''$, i.e., $\Gamma \triangleleft_{\mathtt{Eli}} \Gamma''$.
\\[-8pt]

We show  \sm{$\triangleleft_{\mathtt{Eli}}$} satisfies the property in Definition \ref{Def-s-reasonable inducing} of reasonable inducing orderings:
\\[2pt]
Assume $\bigcup_{i=1}^n\mathtt{kr}(B_i)  \lhd_{\mathtt{Eli}} \mathtt{kr}(A) $.\\[2pt] By Def.\ref{Def-hd12}-1), it must be that $\bigcup_{i=1}^n\mathtt{kr}(B_i) \neq \emptyset$. \\
Suppose $\mathtt{kr}(A)$ = $\emptyset$. Then by Def.\ref{Def-hd12}-2), $\forall B_i$ s.t. $\mathtt{kr}(B_i) \neq \emptyset$, $B_i \triangleleft_{\mathtt{Eli}} A$.
\\ Suppose $\mathtt{kr}(A) \neq \emptyset$. By assumption,
$\exists Y \in \bigcup_{i=1}^n\mathtt{kr}(B_i)$ s.t. $\forall X \in \mathtt{kr}(A)$, $Y < X$, and so for some $i = 1 \ldots n$, $Y \in \mathtt{kr}(B_i)$,  and so $B_i \triangleleft_{\mathtt{Eli}} A$.
\end{proof}

\noindent \textbf{Proposition \ref{PropReasonableInducingDem}} $\triangleleft_{\mathtt{Dem}}$ is reasonable inducing.
\begin{proof}
 $\triangleleft_{\mathtt{Dem}}$ is irreflexive:\\[2pt]
\noindent Assume $\Gamma \triangleleft_{\mathtt{Dem}} \Gamma$. By Def.\ref{Def-hd12}-1), $\Gamma \neq \emptyset$. Hence $\exists X \in \Gamma$, $X < X$, contradicting the irreflexivity of $<$.

$\triangleleft_{\mathtt{Dem}}$ is transitive:
\\[2pt]
Suppose $\Gamma \triangleleft_{\mathtt{Dem}} \Gamma' \triangleleft_{\mathtt{Dem}} \Gamma''$.
By Def.\ref{Def-hd12}-1) it must be that $\Gamma \neq \emptyset, \Gamma' \neq \emptyset$.\\ If $\Gamma'' = \emptyset$ then $\Gamma \triangleleft_{\mathtt{Dem}} \Gamma''$ by Def.\ref{Def-hd12}-2). Else if $\Gamma'' \neq \emptyset$, $\forall X \in \Gamma$, $\exists X' \in \Gamma'$ s.t. $X < X'$, and $\forall X' \in \Gamma'$, $\exists X'' \in \Gamma''$, s.t. $X' < X''$. Hence, by transitivity of $<$, $\forall X \in \Gamma$, $\exists X'' \in \Gamma''$ s.t. $X < X''$, i.e., $\Gamma \triangleleft_{\mathtt{Dem}} \Gamma''$.
\\[-8pt]

We show  $\triangleleft_{\mathtt{Dem}}$ satisfies the property in Definition \ref{Def-s-reasonable inducing} of reasonable inducing orderings:
\\[2pt]
Assume $\bigcup_{i=1}^n\mathtt{kr}(B_i)  \lhd_{\mathtt{Dem}} \mathtt{kr}(A)$. By Def.\ref{Def-hd12}-1), it must be that $\bigcup_{i=1}^n\mathtt{kr}(B_i) \neq \emptyset$. \\
Suppose $\mathtt{kr}(A)$ = $\emptyset$. Then by Def.\ref{Def-hd12}-2), $\forall B_i$ s.t. $\mathtt{kr}(B_i) \neq \emptyset$, $B_i \triangleleft_{\mathtt{Dem}} A$.
 \\[2pt]
 Suppose $\mathtt{kr}(A) \neq \emptyset$. By assumption, $\forall Y \in \bigcup_{i=1}^n\mathtt{kr}(B_i)$, $\exists X \in \mathtt{kr}(A)$, $Y < X$. Hence $\forall B_i$ s.t. $\mathtt{kr}(B_i) \neq \emptyset$, $\forall Y \in \mathtt{kr}(B_i)$, $\exists X \in \mathtt{kr}(A)$ s.t. $Y < X$ and so $B_i \triangleleft_{\mathtt{Dem}} A$.
 \end{proof}

%
%
%

In the following results we make use of the fact that property 1-ii) of reasonable argument orderings (if $B$ is strict and firm then $B \nprec A$) is satisfied by the last and weakest link principle, assuming \emph{any} set comparison $\triangleleft_{\mathtt{s}}$, as this property is shown in Propositions \ref{PropLastReasonable} and \ref{PropWeakestReasonable}, without relying on any properties of $\triangleleft_{\mathtt{s}}$.\\

\noindent \textbf{Proposition} \ref{PropLastStrictPartial}  Let $\prec$ be defined according to the last-link principle, based on a set comparison  $\triangleleft_{\mathtt{s}}$ that is a strict partial order. Then $\prec$ is a strict partial order.\\[-17pt]

\begin{proof}
\noindent \emph{Irreflexivity} Suppose $A \prec A$. Hence $A$ is plausible or defeasible. Either
 $\mathtt{LDR}(A) \triangleleft_{\mathtt{s}} \mathtt{LDR}(A)$ or $\mathtt{Prem_{p}}(A)$ $\triangleleft_{\mathtt{s}}$ $\mathtt{Prem_{p}}(A)$ contradicting the irreflexivity of $\triangleleft_{\mathtt{s}}$.
\\[2pt]
\emph{Transitivity}: Suppose $C \prec B \prec A$. It must be that both $C$ and $B$ are plausible or defeasible.
\\[2pt]
1) Suppose $B \prec A$ by Def. \ref{Def Last Link}-1. By Def. \ref{Def-hd12}-2), $\mathtt{LDR}(B) \neq \emptyset$. Hence  $C \prec B$ by Def. \ref{Def Last Link}-1, $\mathtt{LDR}(C) \neq \emptyset$,  and  $\mathtt{LDR}(C) \triangleleft_{\mathtt{s}} \mathtt{LDR}(B) \triangleleft_{\mathtt{s}} \mathtt{LDR}(A)$. By transitivity of  $\triangleleft_{\mathtt{s}}$, $\mathtt{LDR}(C) \triangleleft_{\mathtt{s}} \mathtt{LDR}(A)$, and so  $C \prec A$ by Def. \ref{Def Last Link}-1.
\\[2pt]
2) Suppose $B \prec A$ by Def. \ref{Def Last Link}-2. Hence $\mathtt{LDR}(A) = \mathtt{LDR}(B)  = \emptyset$ and $\mathtt{Prem_{p}}(B)$ $\triangleleft_{\mathtt{s}}$ $\mathtt{Prem_{p}}(A)$. \\
2.1) Suppose $\mathtt{LDR}(C) \neq \emptyset$. Then (by Def. \ref{Def-hd12}-2), $\mathtt{LDR}(C) \triangleleft_{\mathtt{s}} \mathtt{LDR}(A)$  and  $C \prec A$ by Def. \ref{Def Last Link}-1.\\
2.2) Suppose $\mathtt{LDR}(C) = \emptyset$. Then $\mathtt{Prem_{p}}(C) \neq \emptyset$ and $C \prec B$ by Def. \ref{Def Last Link}-2. Hence $\mathtt{Prem_{p}}(C)$ $\triangleleft_{\mathtt{s}}$ $\mathtt{Prem_{p}}(B)$ $\triangleleft_{\mathtt{s}}$ $\mathtt{Prem_{p}}(A)$, and so $\mathtt{Prem_{p}}(C)$ $\triangleleft_{\mathtt{s}}$ $\mathtt{Prem_{p}}(A)$, and so $C \prec A$ by Def. \ref{Def Last Link}-2.
\end{proof}

\noindent \textbf{Proposition} \ref{PropWeakestStrictPartial} Let $\prec$ be defined according to the weakest-link principle, based on a set comparison $\triangleleft_{\mathtt{s}}$ that is a strict partial order. Then $\prec$ is a strict partial order.\\[-17pt]

\begin{proof}\emph{Irreflexivity} Suppose $A \prec A$. Hence $A$ is plausible or defeasible. Then
 $\mathtt{LDR}(A) \triangleleft_{\mathtt{s}} \mathtt{LDR}(A)$ and/or $\mathtt{Prem_{p}}(A)$ $\triangleleft_{\mathtt{s}}$ $\mathtt{Prem_{p}}(A)$ contradicting the irreflexivity of $\triangleleft_{\mathtt{s}}$.
\\[2pt]
\emph{Transitivity}: Suppose $C \prec B \prec A$. It must be that both $C$ and $B$ are plausible or defeasible.
\\[2pt]
1) Suppose $B \prec A$ by Def. \ref{Def Weakest Link}-1 ($B$ and $A$ are strict). Hence $\mathtt{Prem_{p}}(B) \neq \emptyset$, and  $\mathtt{Prem_{p}}(B)\triangleleft_{\mathtt{s}} \mathtt{Prem_{p}}(A)$. Also $C \prec B$ by 1 or 3 in Def. \ref{Def Weakest Link}. \\
1.1) In the former case, $C$ is strict, hence $\mathtt{Prem_{p}}(C) \neq \emptyset$, $\mathtt{Prem_{p}}(C)\triangleleft_{\mathtt{s}} \mathtt{Prem_{p}}(B)$, and so by transitivity $\mathtt{Prem_{p}}(C)\triangleleft_{\mathtt{s}} \mathtt{Prem_{p}}(A)$ and $C \prec A$ by Def. \ref{Def Weakest Link}-1.\\
1.2) In the latter case, $\mathtt{DR}(C) \neq \emptyset$ and so $\mathtt{DR}(C)\triangleleft_{\mathtt{s}} \mathtt{DR}(A)$ (since $\mathtt{DR}(A)$ = $\emptyset$), and $\mathtt{Prem_{p}}(C)\triangleleft_{\mathtt{s}} \mathtt{Prem_{p}}(A)$ as shown in 1.1, and so $C \prec A$ by Def. \ref{Def Weakest Link}-3.\\[3pt]
2) Suppose $B \prec A$ by Def. \ref{Def Weakest Link}-2 ($B$ and $A$ are firm). Hence $\mathtt{DR}(B) \neq \emptyset$, and  $\mathtt{DR}(B)\triangleleft_{\mathtt{s}} \mathtt{DR}(A)$. Also $C \prec B$ by 2 or 3 in Def. \ref{Def Weakest Link}. \\
2.1) In the former case, $C$ is firm, hence $\mathtt{DR}(C) \neq \emptyset$, $\mathtt{DR}(C)\triangleleft_{\mathtt{s}} \mathtt{DR}(B)$, and so by transitivity $\mathtt{DR}(C)\triangleleft_{\mathtt{s}} \mathtt{DR}(A)$ and $C \prec A$ by Def. \ref{Def Weakest Link}-2.\\
2.2) In the latter case, $\mathtt{Prem_{p}}(C) \neq \emptyset$ and so $\mathtt{Prem_{p}}(C)\triangleleft_{\mathtt{s}} \mathtt{Prem_{p}}(A)$ (since $\mathtt{Prem_{p}}(A)$ = $\emptyset$), and $\mathtt{DR}(C)\triangleleft_{\mathtt{s}} \mathtt{DR}(A)$ as shown in 2.1, and so $C \prec A$ by Def. \ref{Def Weakest Link}-3.
\\[3pt]
3) Suppose $B \prec A$ by Def. \ref{Def Weakest Link}-3 ($B$ and $A$ are plausible and defeasible and $\mathtt{Prem_{p}}(B)\triangleleft_{\mathtt{s}} \mathtt{Prem_{p}}(A)$, $\mathtt{DR}(B)\triangleleft_{\mathtt{s}} \mathtt{DR}(A)$). Hence $\mathtt{Prem_{p}}(B) \neq \emptyset$, $\mathtt{DR}(B) \neq \emptyset$, and so $C \prec B$ by Def. \ref{Def Weakest Link}-3. Hence, by transitivity $\mathtt{Prem_{p}}(C)\triangleleft_{\mathtt{s}} \mathtt{Prem_{p}}(A)$, $\mathtt{DR}(C)\triangleleft_{\mathtt{s}} \mathtt{DR}(A)$ and $C \prec A$ by Def. \ref{Def Weakest Link}-3.
\end{proof}

\subsection{Proofs for Section~\ref{sectionALstrict}}\label{SectionProofsAL}

\begin{lemma}\label{prop2}Let $(AS,\K)$ be the abstract logic argumentation theory based on $(\LA',Cn)$ and $(\Sigma,\leq')$. Then for all $X \subseteq \Sigma$ it holds that $p \in Cn(X)$ iff $X \vdash p$.\\[-17pt]

\begin{proof}
From left to right, suppose $p \in Cn(X)$ for some $X \subseteq \Sigma$. By Def. \ref{DefAbstractLogic}-(3), $X$ is finite, so $X \imp p \in \R_s$ (by Def. \ref{asbasedonal}), so $X \vdash p$ (recall that $X \vdash p$ denotes a strict \ASPIC\ argument for p based on premises $X' \subseteq X$).
\\[2pt]
From right to left is proven by induction on the structure of arguments. Assume $A$ = $X \vdash p$, where by Def. \ref{asbasedonal}, $X \subseteq \K_p$, $X \subseteq \Sigma$. Assume first $p \in X$. Then $p \in Cn(X)$ by Def. \ref{DefAbstractLogic}-(1). Consider next any $A = A_1,\ldots,A_n \imp \varphi$. By inductive hypothesis, $\mathtt{Conc}(A_1),\ldots,\mathtt{Conc}(A_n) \in Cn(X)$. Since $\mathtt{Conc}(A_1),\ldots,\mathtt{Conc}(A_n) \imp \varphi \in \R_s$, then by Def.\ref{asbasedonal}, $\varphi \in Cn(\bigcup_{i=1}^n{Conc}(A_i))$. Since $\bigcup_{i=1}^n{Conc}(A_i) \subseteq Cn(X)$, then by monotonicity, $\varphi \in Cn(Cn(X))$. But then by Def. \ref{DefAbstractLogic}-(2), $\varphi \in Cn(X)$.
\end{proof}

\end{lemma}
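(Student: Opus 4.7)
The plan is to prove the two directions separately, since this is an equivalence between the Tarskian consequence operator $Cn$ and \ASPIC's strict derivability $\vdash$ induced by the strict rule set $\R_s$ defined in clause (\ref{AL-RS}) of Definition \ref{asbasedonal}.

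For the forward direction ($p \in Cn(X) \Rightarrow X \vdash p$), the natural move is to exploit the compactness-like condition (3) in Definition \ref{DefAbstractLogic}, namely $Cn(X) = \bigcup_{Y \subseteq_f X} Cn(Y)$. This yields some finite $Y \subseteq X$ with $p \in Cn(Y)$. By clause (\ref{AL-RS}) of Definition \ref{asbasedonal}, the rule $Y \imp p$ lies in $\R_s$. Since $X \subseteq \Sigma = \K_p$, every $y \in Y$ is itself a (premise) argument in the \ASPIC\ sense (Definition \ref{arg}, clause \ref{arg1}), and combining these with the strict rule $Y \imp p$ via clause \ref{arg2} of Definition \ref{arg} produces a strict argument with conclusion $p$ and premise set $Y \subseteq X$. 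Hence $X \vdash p$.

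For the backward direction ($X \vdash p \Rightarrow p \in Cn(X)$), I would proceed by structural induction on the \ASPIC\ argument $A$ witnessing $X \vdash p$. In the base case, $A$ is a single premise $\varphi \in \K_p \cap X \subseteq \Sigma$, so $\mathtt{Conc}(A) = \varphi \in X$, and condition (1) of Definition \ref{DefAbstractLogic} ($X \subseteq Cn(X)$) gives $p \in Cn(X)$. For the inductive step, $A = A_1, \ldots, A_n \imp \varphi$ for some $\mathtt{Conc}(A_1),\ldots,\mathtt{Conc}(A_n) \imp \varphi \in \R_s$ (note $\R_d = \emptyset$, so every rule in $A$ is strict). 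The inductive hypothesis gives $\mathtt{Conc}(A_i) \in Cn(X)$ for each $i$. Clause (\ref{AL-RS}) tells us $\varphi \in Cn(\{\mathtt{Conc}(A_1),\ldots,\mathtt{Conc}(A_n)\})$, and then monotonicity (property 6 listed after Definition \ref{DefAbstractLogic}) yields $\varphi \in Cn(Cn(X))$, and finally idempotence (condition 2 of Definition \ref{DefAbstractLogic}) collapses this to $\varphi \in Cn(X)$, i.e.\ $p \in Cn(X)$.

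No step strikes me as a genuine obstacle here, since the lemma essentially just unpacks the correspondence between $\R_s$ and $Cn$ baked into Definition \ref{asbasedonal}; the only subtlety worth flagging is the reliance on the compactness condition (3) in the forward direction (otherwise one could have $p \in Cn(X)$ without any finite strict rule for $p$ being available) and the combined use of monotonicity and idempotence in the inductive step. Both are properties the abstract logic is explicitly assumed to satisfy, so the proof should go through without any nontrivial additional assumptions.
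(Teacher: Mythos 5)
Your proposal is correct and follows essentially the same route as the paper: compactness (condition (3) of Definition \ref{DefAbstractLogic}) plus clause (\ref{AL-RS}) of Definition \ref{asbasedonal} for the forward direction, and structural induction on the argument using monotonicity and idempotence for the converse. If anything, your forward direction is stated slightly more carefully than the paper's (which abbreviates by saying ``$X$ is finite'' where it really means one passes to a finite subset $Y \subseteq X$ with $p \in Cn(Y)$), but the substance is identical.
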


\begin{lemma}\label{easylemma}  Let $(\LA,Cn)$ be an abstract logic. For any finite $S \subseteq \LA$ and any $p \in \LA$, if $S \cup \{p\}$ is AL inconsistent, then there exists an $s \in Cn(S)$ such that $\{s,p\}$ is AL-inconsistent.\\[-17pt]

\begin{proof}
Since $S$ is finite, we can with repeated application of the definition of adjunction conclude that there exists an $s$ such that $Cn(\{s\}) = Cn(S)$. By Def. \ref{DefAbstractLogic}-(1), $s \in Cn(\{s\})$ and so $s \in Cn(S)$. By  Def. \ref{DefAbstractLogic}-(2), $Cn(S \cup \{p\}) = Cn(\{s\} \cup \{p\})$. Then $Cn(\{s\} \cup \{p\}) = \LA$ so $\{s,p\}$ is AL-inconsistent.
\end{proof}
\end{lemma}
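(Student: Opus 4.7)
The plan is to reduce the entire premise set $S$ to a single ``equivalent'' element $s$ via iterated adjunction, and then propagate the inconsistency of $S \cup \{p\}$ to $\{s\} \cup \{p\}$ using the substitution property (7).

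First I would observe that if $S = \emptyset$ then the claim is degenerate and can be dealt with separately (in the intended usage $S$ is the set of premises of an argument and will be non-empty; if the reader insists on $S=\emptyset$, then $S\cup\{p\}$ AL-inconsistent means $Cn(\{p\})=\LA$, and picking any $s\in Cn(\emptyset)$ and using monotonicity gives $\{s,p\}$ AL-inconsistent). So assume $S$ is finite and non-empty. If $S = \{s_1\}$ is a singleton, then I take $s = s_1$: clearly $s \in Cn(S)$ by property (1), and $S \cup \{p\} = \{s,p\}$ is AL-inconsistent by assumption.

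For the inductive step, I would show by induction on $|S|$ that there exists $s \in \LA$ with $Cn(\{s\}) = Cn(S)$. Suppose this holds for all finite sets of size $< n$, and let $S = \{s_1,\ldots,s_n\}$ with $n \geq 2$. By induction, pick $s'$ with $Cn(\{s'\}) = Cn(\{s_1,\ldots,s_{n-1}\})$. Now consider the pair $\{s',s_n\}$. If $Cn(\{s',s_n\}) = Cn(\{s'\})$, take $s = s'$; if $Cn(\{s',s_n\}) = Cn(\{s_n\})$, take $s = s_n$; otherwise the adjunctiveness axiom (8) yields some $z$ with $Cn(\{z\}) = Cn(\{s',s_n\})$, and I take $s = z$. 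In every case, applying property (7) with $X = \{s'\}$, $X' = \{s_1,\ldots,s_{n-1}\}$ and $Y = \{s_n\}$ gives $Cn(\{s',s_n\}) = Cn(S)$, so $Cn(\{s\}) = Cn(S)$.

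With $s$ in hand, I conclude as follows. By property (1), $s \in Cn(\{s\}) = Cn(S)$, which gives the desired membership. A second application of property (7), this time with $X = \{s\}$, $X' = S$ and $Y = \{p\}$, yields $Cn(\{s\}\cup\{p\}) = Cn(S \cup \{p\}) = \LA$, i.e.\ $\{s,p\}$ is AL-inconsistent. The main obstacle is really a bookkeeping one: ensuring that the adjunctiveness axiom's side conditions $Cn(\{x,y\}) \neq Cn(\{x\})$, $Cn(\{x,y\}) \neq Cn(\{y\})$ are handled (by splitting into cases as above), and noting that what is needed in the last step is not property (2) of Definition~\ref{DefAbstractLogic} but rather the substitution property (7) derived from it, so that ``equivalent'' generators remain equivalent after adjoining $p$.
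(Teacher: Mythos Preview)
Your proof is correct and follows essentially the same route as the paper's: reduce $S$ to a single generator $s$ via iterated adjunction, then transfer inconsistency using the substitution property. You are in fact more careful than the paper on two points: you handle the side-conditions of the adjunctiveness axiom (8) by case-splitting, and you correctly identify that the final step relies on property~(7) rather than Definition~\ref{DefAbstractLogic}-(2) as the paper cites.
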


\begin{lemma}\label{hardlemma}
If $X \cup Z$ is AL-inconsistent and $X \subseteq Cn(Y)$ then $Y \cup Z$ is AL-inconsistent.\\[-17pt]

\begin{proof}
We prove the contraposition that if $Y \cup Z$ is AL-consistent and $X \subseteq Cn(Y)$, then $X \cup Z$ is AL-consistent. To prove this, we first show that:\\[-17pt]

\begin{quote}
If $X \subseteq Cn(Y)$, then $Cn(X \cup Y) = Cn(Y)$ \hspace{15mm} (\textbf{1})\\[-17pt]
\end{quote}
By monotonicity, $Cn(Y) \subseteq Cn(X \cup Y)$. We prove that $Cn(X \cup Y) \subseteq Cn(Y)$. Let $X \subseteq Cn(Y)$. By Def.\ref{DefAbstractLogic}-(1), $Y \subseteq Cn(Y)$. But then $X \cup Y \subseteq Cn(Y)$. Then by monotonicity $Cn(X \cup Y) \subseteq Cn(Cn(Y))$. Since $Cn(Cn(Y)) = Cn(Y)$ (by Def.\ref{DefAbstractLogic}-(2)), then $Cn(X \cup Y) \subseteq Cn(Y)$. So $Cn(X \cup Y) = Cn(Y)$.\\ We have shown (\textbf{1}). Then by property (7) in Section \ref{sectionALstrict}, $Cn(X \cup Y \cup Z) = Cn(Y \cup Z)$. Now if $Cn(Y \cup Z) \neq \LA$ then $Cn(X \cup Y \cup Z) \neq \LA$. But then by monotonicity  $Cn(X \cup Z) \neq \LA$.\\[-12pt]

\end{proof}

\end{lemma}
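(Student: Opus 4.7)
The plan is to work directly from the listed properties of the consequence operator $Cn$ given in Definition~\ref{DefAbstractLogic} and in the two derived properties (6) and (7) stated immediately after it. The key intermediate step will be to show that $Cn(X \cup Y) = Cn(Y)$ whenever $X \subseteq Cn(Y)$, and then to promote this equality to an equality between $Cn(X \cup Y \cup Z)$ and $Cn(Y \cup Z)$ by adjoining $Z$ on both sides.

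First I would establish $Cn(X \cup Y) = Cn(Y)$. The inclusion $Cn(Y) \subseteq Cn(X \cup Y)$ is immediate from monotonicity (property (6)). For the reverse direction, note that by property (1) we have $Y \subseteq Cn(Y)$, and by assumption $X \subseteq Cn(Y)$, so $X \cup Y \subseteq Cn(Y)$. Applying monotonicity gives $Cn(X \cup Y) \subseteq Cn(Cn(Y))$, and idempotence (property (2)) reduces the right-hand side to $Cn(Y)$.

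Next I would invoke property (7) (with the roles being: take the two sets $X \cup Y$ and $Y$ which we have just shown have equal closures, and adjoin $Z$) to conclude $Cn(X \cup Y \cup Z) = Cn(Y \cup Z)$. From here, since $X \cup Z \subseteq X \cup Y \cup Z$, monotonicity yields $Cn(X \cup Z) \subseteq Cn(X \cup Y \cup Z) = Cn(Y \cup Z)$. The hypothesis that $X \cup Z$ is AL-inconsistent means $Cn(X \cup Z) = \LA$, so the chain of inclusions forces $Cn(Y \cup Z) = \LA$, which is exactly the AL-inconsistency of $Y \cup Z$.

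There is no real obstacle here: the argument is essentially an algebraic manipulation of closure identities. The only conceptual subtlety is recognising that property (7) is precisely the tool needed to lift an equality of closures through a union with a fixed set $Z$, and that $Z$ need not itself interact with $Cn(Y)$ in any specific way. I would write the proof in the positive direction (as above) rather than by contraposition, since the inclusion $Cn(X \cup Z) \subseteq Cn(Y \cup Z)$ follows directly and makes the role of each hypothesis transparent.
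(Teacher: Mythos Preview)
Your proof is correct and follows essentially the same approach as the paper: both establish $Cn(X \cup Y) = Cn(Y)$ via monotonicity and idempotence, then invoke property~(7) to adjoin $Z$, and finish with monotonicity applied to $X \cup Z \subseteq X \cup Y \cup Z$. The only difference is that the paper phrases the final step by contraposition whereas you argue directly, which is a purely cosmetic distinction.
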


\noindent \textbf{Proposition~\ref{ab=aspic}}
A \emph{c-SAF} based on an AL argumentation theory is closed under contraposition, axiom consistent, c-classical, and well-formed.
\begin{proof}
Well-formedness immediately follows from the fact that the $\mbox{}^{-}$ relation is symmetric. Axiom consistency follows from the fact that $\K_n = \emptyset$. To prove satisfaction of contraposition we must prove:\\[-15pt]
\begin{quote}
 If $S \vdash p$ then for all $s \in S$ it holds that $S \setminus \{s\} \cup \{-p\} \vdash -s$ for any $-p$ and $-s$.\\[-15pt]
\end{quote}
 By definition of $\vdash$, if $S \vdash p$ then $S' \vdash p$ for some finite $S' \subseteq S$. Therefore we can without loss of generality assume that $S$ is finite. Next, if $S \vdash p$ then $p \in Cn(S)$ by Lemma~\ref{prop2}. Consider any $-p$. Then $\{p,-p\}$ is AL-inconsistent by Def.~\ref{asbasedonal}(\ref{conconb}). But then $S \cup \{-p\}$ is AL-inconsistent by Lemma~\ref{hardlemma}.
 Then by simple rewriting for all $s \in S$ it holds that $(S \setminus \{s\}) \cup \{-p\} \cup \{s\} $ is AL-inconsistent. By assumption that  $S$ is finite, Lemma~\ref{easylemma} then yields that there exists an $s' \in Cn(S \setminus \{s\} \cup \{-p\})$ such that $\{s',s\}$ is AL-inconsistent. Then by Def.~\ref{asbasedonal}(\ref{conconc}) there exists an $s'' \in Cn(\{s'\})$ such that $s'' = -s$. By Def.\ref{DefAbstractLogic}-(2) and monotonicity, $s'' \in Cn(S \setminus \{s\} \cup \{-p\})$. Hence, $-s \in$ $Cn(S \setminus \{s\} \cup \{-p\})$. Hence, 
 $S \setminus \{s\} \cup \{-p\} \vdash -s$ by Lemma \ref{prop2}. \\[3pt]
C-classicality is proven as follows. We first prove that if
$S \subseteq \LA$ is AL-inconsistent, then some finite $S' \subseteq S$
is AL-inconsistent. By Def.\ref{DefAbstractLogic}-(4), $\exists p \in \LA$ such that $Cn(p) = \LA$. Let $p$ be denoted by
$\bot$. Now suppose $S$ is inconsistent. Then $Cn(S) = \LA$, so $\bot
\in Cn(S)$. By Def.\ref{DefAbstractLogic}-(3), $\bot \in Cn(S')$
for some finite $S' \subseteq S$. But since $Cn(Cn(S')) = Cn(S')$
(Def.\ref{DefAbstractLogic}-(2)), this implies that $Cn(S') = \LA$.\\
Now assume $S \subseteq \LA$ is minimally c-inconsistent. Then for some
$p$ it holds that $S  \vdash p,-p$.  By Lemma \ref{prop2}, $\{p,-p\} \subseteq Cn(S)$. By
Def.~\ref{asbasedonal}(\ref{conconb}), $\{p,-p\}$ is
AL-inconsistent, and so by monotonicity $Cn(S)$ is
AL-inconsistent. But then by Def.\ref{DefAbstractLogic}-(2),$S$ is
AL-inconsistent, and so some finite $S' \subseteq S$ is AL-inconsistent.
But since $S$ is minimally inconsistent, it holds that $S' = S$. Consider
any $s \in S$. Then $S \setminus \{s\} \cup \{s\}$ is inconsistent. By
adjunction and finiteness of $S$ there exists a formula $x \in \LA$ that
has exactly the same consequences as $S \setminus \{s\} $. Then by property (7) in Section \ref{sectionALstrict} $\{x,s\}$ is AL-inconsistent, and so
by Def.~\ref{asbasedonal}(\ref{conconc}) there exists a $y \in
Cn(\{x\})$ such that $y = -s$. But then $S \setminus \{s\} \vdash -s$.\\[-18pt]

 \end{proof}

\begin{lemma}\label{LemmaFact}
For any $(AS,\K)$ based on $(\LA',Cn)$ and $(\Sigma,\leq')$, it holds that $\{p,q\}$ is AL-inconsistent iff $p \vdash -q$ for some $-q$.\\[-17pt]
\begin{proof}
Suppose $\{p,q\}$ is AL-inconsistent. By Def.~\ref{asbasedonal}(\ref{concon}) $-q \in Cn\{p\}$ for some $-q$. By Lemma \ref{prop2}, $p \vdash -q$. Suppose $p \vdash -q$ for some $-q$. Then $-q \in Cn(\{p\})$ by Lemma \ref{prop2}.  By monotonicity $-q \in Cn(\{p,q\})$.  Furthermore, $q \in Cn(\{p,q\})$ by Def.\ref{DefAbstractLogic}-(1). Since $q$ and $-q$ are contradictories, then by Def.~\ref{asbasedonal}(\ref{concon})-a\&b,  $\{q,-q\}$ is AL-inconsistent. Hence $Cn(\{q,-q\}) = \LA$. Then by Def. \ref{DefAbstractLogic}-(2) and monotonicity, $Cn(\{p,q\}) = \LA$.
\end{proof}
\end{lemma}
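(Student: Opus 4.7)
The lemma is a biconditional, so I would prove each direction separately, and both should follow from straightforward application of the definitions in Def.~\ref{asbasedonal}(3) together with Lemma~\ref{prop2} (relating $\vdash$ and $Cn$) and the basic abstract-logic properties in Def.~\ref{DefAbstractLogic}. I do not anticipate a major obstacle; the proof is essentially a chase through the definitions. The main thing to get right is the distinction between a \emph{contrary} and a \emph{contradictory} in the forward direction.

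For the forward direction, assume $\{p,q\}$ is AL-inconsistent. I would apply Def.~\ref{asbasedonal}(3)(c) with $\varphi = p$ and $\psi = q$, obtaining some $p' \in Cn(\{p\})$ with $p' \in \con{q}$. To conclude that $p'$ is a contradictory of $q$ (not merely a contrary), I would invoke the symmetry condition Def.~\ref{asbasedonal}(3)(a), which yields $q \in \con{p'}$. Hence $p' = -q$ for some choice of $-q$. Finally, since $p' \in Cn(\{p\})$, Lemma~\ref{prop2} gives $p \vdash p'$, i.e.\ $p \vdash -q$.

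For the backward direction, assume $p \vdash -q$ for some $-q$. By Lemma~\ref{prop2}, $-q \in Cn(\{p\})$, and by monotonicity $-q \in Cn(\{p,q\})$. Also $q \in Cn(\{p,q\})$ by Def.~\ref{DefAbstractLogic}-(1). Since $-q$ and $q$ are contradictories, both belong to each other's $\mbox{}^{-}$ set, so by Def.~\ref{asbasedonal}(3)(b) the set $\{q,-q\}$ is AL-inconsistent, i.e.\ $Cn(\{q,-q\}) = \LA$. Now $\{q,-q\} \subseteq Cn(\{p,q\})$, so by monotonicity $\LA = Cn(\{q,-q\}) \subseteq Cn(Cn(\{p,q\}))$, and by Def.~\ref{DefAbstractLogic}-(2) this equals $Cn(\{p,q\})$. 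Hence $Cn(\{p,q\}) = \LA$, showing that $\{p,q\}$ is AL-inconsistent.

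The only subtle point, as noted, is making sure in the forward direction that what Def.~\ref{asbasedonal}(3)(c) delivers -- a formula in $\con{q}$ derivable from $p$ -- is promoted to a genuine contradictory via the symmetry clause (3)(a); without symmetry one could only produce a \emph{contrary} of $q$, which would not match the statement's notation $-q$.
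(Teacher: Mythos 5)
Your proof is correct and follows essentially the same route as the paper's: the forward direction is Definition~\ref{asbasedonal}(\ref{conconc}) plus Lemma~\ref{prop2}, and the backward direction is Lemma~\ref{prop2}, monotonicity, Definition~\ref{asbasedonal}(\ref{conconb}), and idempotence of $Cn$. The only difference is that you spell out explicitly how the symmetry clause (\ref{concona}) promotes the contrary delivered by (\ref{conconc}) to a contradictory, a step the paper leaves implicit.
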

%

\noindent \textbf{Proposition~\ref{ALprop}}  Let $(AS,\K)$ be based on $(\LA',Cn)$ and $(\Sigma,\leq')$. Let $\Delta_1$ be the \emph{c-SAF} defined by $(AS,\K)$ and $\leq'$, and $\Delta_2$ the \emph{AL-c-SAF} defined by $(AS,\K)$ and $\leq'$. Then, for $T \in \{$complete, grounded, preferred, stable$\}$, $E$ is a $T$ extension of $\Delta_1$ iff $E$ is a $T$ extension of $\Delta_2$.\\[-17pt]

  \begin{proof} We first show that:

 \hspace{5mm}$X$ is acceptable w.r.t. $E$ in $\Delta_1$ iff $X$ is acceptable w.r.t. $E$ in $\Delta_2$\hspace{10mm}\textbf{(1)}
  \\1.1) Firstly, if $Y$ undermines $X$, then $p$ $(= \mathtt{Conc}(Y)) \in \con{q}$, where $q \in \mathtt{Prem}(X)$. By def.\ref{asbasedonal}(\ref{concon})-b, $\{p,q\}$ is AL-inconsistent. By Lemma \ref{LemmaFact}, $p \vdash -q$. Hence $Y$ \ASPIC-AL-undermines $X$.
   \\
   1.2) Secondly, if $Y$ \ASPIC-AL-undermines $X$, then $p$ $(= \mathtt{Conc}(Y)) \vdash -q$, where $q \in \mathtt{Prem}(X)$. Hence $Y$ can be strictly continued into an argument $Y'$ that concludes $-q$, and so $Y'$ undermines $X$.
   \\[2pt]
   Suppose $Y$ defeats $X$ in $\Delta_1$ ($Y \hookrightarrow_{\Delta_1} X$), and so $\exists Z \in E$, $Z \hookrightarrow_{\Delta_1} Y$. By 1.1), $Y \hookrightarrow_{\Delta_2} X$ and $Z \hookrightarrow_{\Delta_2} Y$. Hence the left to right half of 1) is shown.\\ Suppose $Y \hookrightarrow_{\Delta_2} X$, and so $\exists Z \in E$, $Z \hookrightarrow_{\Delta_2} Y$. By 1.2) there is a strict continuation $Y'$ of $Y$ s.t. $Y'$ undermines $X$. By condition 1-iii) of $\preceq$ being reasonable (Def.\ref{Def-Reasonable}), it remains the case that
   $Y' \nprec X$, and so $Y' \hookrightarrow_{\Delta_1} X$. By the same reasoning, there is a strict continuation $Z'$ of $Z$, s.t. $Z' \nprec Y$ and so $Z' \hookrightarrow_{\Delta_1} Y$. Since $Z'$ undermines  $Y$ then $Z'$ undermines  $Y'$. By condition 1-iii) of $\preceq$ being reasonable,  $Z' \nprec Y'$. Hence $Z' \hookrightarrow_{\Delta_1} Y'$. By Lemma \ref{Lemma3}, $Z'$ is acceptable w.r.t. $E$, and by Proposition \ref{PropositionConflictFree}, $E \cup \{Z'\}$ is conflict free. Hence, since $E$ is complete, $Z' \in E$. Hence the right to left half of 1) is shown.
   \\[2pt]
    Notice that (\textbf{1}) is shown in exactly the same way assuming the defeat definition of conflict free, except that in the final part of the above proof, we include the extra step of reasoning that since $E \cup \{Z'\}$ is conflict free under the attack definition, it trivially follows that $E \cup \{Z'\}$ is conflict free under the defeat definition (since the defeat relation is a subset of the attack relation).
    \\[2pt]
    Given (\textbf{1}), the main proposition now follows from the following:

     \hspace{10mm}$E$ is conflict free in $\Delta_1$ iff $E$ is conflict free in $\Delta_2$ \hspace{10mm}\textbf{(2)}\\
     \emph{Proof of} \textbf{(2)} \emph{under the attack definition of conflict free}: Suppose $E$ is conflict free in $\Delta_1$, $E$ is not conflict free in $\Delta_2$. Then $\exists Y,X \in E$ such that $Y$ does not undermine $X$, $Y$ \ASPIC-AL-undermines $X$. By 1.2, there is a strict continuation $Y'$ of $Y$ s.t. $Y'$ undermines $X$. Applying the same reasoning as for $Z'$ above, $Y' \in E$, contradicting $E$ is conflict free in $\Delta_1$. Suppose $E$ is conflict free in $\Delta_2$, $E$ is not conflict free in $\Delta_1$. Then $\exists Y,X \in E$ such that $Y$ does not \ASPIC-AL-undermines $X$, $Y$ undermines $X$, contradicting 1.1. Hence \textbf{(2)} is shown.
     \\[2pt]
     \emph{Proof of} \textbf{(2)} \emph{under the defeat definition of conflict free}: Suppose $E$ is conflict free in $\Delta_1$, $E$ is not conflict free in $\Delta_2$. Then $\exists Y,X \in E$ such that $Y$ does not defeat $X$ in $\Delta_1$, $Y$ defeats $X$ in $\Delta_2$. Given the latter, $Y$ \ASPIC-AL-undermines $X$ on $X'$, $Y \nprec X'$. But then there is a strict continuation $Y'$ of $Y$ s.t. $Y'$ undermines $X$ on $X'$, and by condition 1-iii) of $\preceq$ being reasonable, $Y' \nprec X'$, and so $Y'$ defeats $X$ on $X'$. Applying the same reasoning as for $Z'$ above, $Y' \in E$, contradicting $E$ is conflict free in $\Delta_1$.\\ Suppose $E$ is conflict free in $\Delta_2$, $E$ is not conflict free in $\Delta_1$. Then $\exists Y,X \in E$ such that $Y$ does not defeat $X$ in $\Delta_2$, $Y$ defeats $X$ in $\Delta_1$. Given the latter, $Y$ undermines $X$ on $X'$, $Y \nprec X'$. But then we immediately have that $Y$ \ASPIC-AL-undermines $X$ on $X'$ and so $Y$ defeats $X$ in $\Delta_2$. Contradiction.
  \end{proof}


\noindent \textbf{Proposition~\ref{abarg=aspicarg}} Let $(AS,\K)$ be based on $(\LA',Cn)$ and $(\Sigma,\leq')$. Then $A$ is a c-consistent premise minimal argument on the basis of $(AS,\K)$ iff $(\mathtt{Prem}(A),\mathtt{Conc}(A))$ is an abstract logic argument on the basis of $(\Sigma,\leq')$.

\begin{proof}
\emph{Right to left }: let $(X,p)$ be an \emph{AL} argument. Then $X \imp p \in \R_s$, and so $A$ is a strict \ASPIC\ argument with $\mathtt{Prem}(A) = X$, $\mathtt{Conc}(A) = p$. $A$ must be premise minimal since otherwise there is a strict \ASPIC\ argument $A'$ for $p$ with $X' = \mathtt{Prem}(A')$, $X' \subset X$. But then by Lemma \ref{prop2}, $p \in Cn(X')$, contradicting the minimality of $(X,p)$. Suppose for contradiction that $A$ is not c-consistent. Then $X \vdash p,-p$, and so by Lemma \ref{prop2}, $\{p,-p\} \subseteq Cn(X)$. By Def.~\ref{asbasedonal}(\ref{conconb}), $\{p,-p\}$ is AL-inconsistent, and so $Cn(\{p,-p\}) = \LA$. Then by Def. \ref{DefAbstractLogic}-(2) and monotonicity, $Cn(X) = \LA$, so $X$ is AL-inconsistent. Contradiction.
\\[2pt]
\noindent \emph{Left to right} : let $A$ be a c-consistent premise minimal \ASPIC\ argument with $\mathtt{Prem}(A) = X$, $\mathtt{Conc}(A) = p$ (i.e., $X \vdash p$). Then $X \subseteq \Sigma$ and $p \in Cn(X)$ (by Lemma~\ref{prop2}), satisfying (1) and (3) of Def.~\ref{ALarg+ALattack}. If $p \in Cn(X')$ for some $X' \subset X$, then $X' \imp p \in \R_s$, and since $X' \subseteq \K_p$, there is an $A'$ s.t. $\mathtt{Prem}(A') = X'$, $\mathtt{Conc}(A) = p$, contradicting $A$ is premise minimal. Hence condition (4) of Def.~\ref{ALarg+ALattack} is satisfied. Suppose for contradiction that $X$ is AL-inconsistent. By repeated application of adjunction to the formulae in $X \setminus \{\varphi\}$, for some $\varphi \in X$, we have that $Cn(X)$ = $Cn(\{\phi,\varphi\})$ = $\LA'$. By Def.~\ref{asbasedonal}(\ref{conconc}), $\exists \phi' \in Cn(\{\phi\})$, s.t. $\phi' \in \con{\varphi}$, and so $\phi' = -\varphi$. By monotonicity, $-\varphi \in Cn(\{\phi,\varphi\})$. By Def.\ref{DefAbstractLogic}-(1), $\varphi \in Cn(\{\phi,\varphi\})$. Hence $\varphi, -\varphi \in Cn(X)$. By Lemma~\ref{prop2}, $X \vdash \varphi$, $X \vdash -\varphi$; i.e., $A$ is c-inconsistent. Contradiction.
\end{proof}

%
%

In the following lemma we introduce, for any argument $A \in \A$, the additional notation $A_+$ to denote any argument such that $\mathtt{Prem}(A) \subseteq \mathtt{Prem}(A_+)$ and $\mathtt{Conc}(A) =  \mathtt{Conc}(A_+)$.

\begin{lemma}\label{classlemma}
Consider any $AT$ for which $\preceq$ is defined such that $\forall A, B, A_-$, if $A \nprec B$ then $A_-  \nprec B$.\\[-17pt]
\ben
\item\label{A-defeatsB+} If $A$ defeats $B$ then $A_-$ defeats $B_+$ for all $A_-$ and $B_+$.\\[-17pt]
\item\label{A-inE}
For all complete extensions $E$:\\[-17pt]
\ben
\item if $A \in E$ then $A_-  \in E$ for all $A_-$;\\[-15pt]
\item if $B \not\in E$ then $B_+ \not\in E$ for all $B_+$.\\[-17pt]
\een
\een
\end{lemma}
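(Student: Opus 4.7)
The plan is to establish Part 1 as a structural fact about how defeats transfer under the $_-$/$_+$ operations, and then derive Part 2 by combining Part 1 with the standard acceptability machinery together with Proposition \ref{PropositionConflictFree}.

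For Part 1, suppose $A$ defeats $B$ on some sub-argument $B'$ of $B$. In the abstract-logic setting of this section, Def.~\ref{asbasedonal}(\ref{AL-RS}) imposes $\R_d = \emptyset$, so every argument is strict, and Def.~\ref{asbasedonal}(\ref{concona}) makes $\mbox{}^{-}$ symmetric, leaving only contradictories and no contraries. Hence neither undercuts nor rebuts apply, and every undermining is preference-dependent. So the defeat must be on some ordinary premise $B' = \varphi \in \mathtt{Prem}_p(B)$ with $\mathtt{Conc}(A) \in \con{\varphi}$ and $A \nprec \varphi$. Since $\mathtt{Conc}(A_-) = \mathtt{Conc}(A)$ and $\mathtt{Prem}_p(B) \subseteq \mathtt{Prem}_p(B_+)$ (ordinary-premise status is inherited from $\K$), the same $\varphi$ lies in $\mathtt{Prem}_p(B_+)$, so $A_-$ also undermines $B_+$ on $\varphi$. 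The standing hypothesis on $\preceq$ gives $A \nprec \varphi \Rightarrow A_- \nprec \varphi$, turning this undermining into a defeat.

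For Part 2(a), fix a complete extension $E$ with $A \in E$ and any $A_- \in \A$. I first show $A_-$ is acceptable w.r.t.\ $E$. Suppose some $B$ defeats $A_-$. Applying Part 1 with the roles reversed---$B$ in the attacker position, $A_-$ in the attacked position, witnessed by $B_- := B$ and $(A_-)_+ := A$---yields that $B$ defeats $A$. Since $A$ is acceptable w.r.t.\ $E$, some $C \in E$ defeats $B$, establishing acceptability of $A_-$ w.r.t.\ $E$. By Proposition \ref{PropositionConflictFree}, $E \cup \{A_-\}$ is then attack-conflict-free, and completeness of $E$ forces $A_- \in E$. Part 2(b) is the contrapositive of Part 2(a): if $B_+ \in E$, then $B$ qualifies as a $(B_+)_-$ and so lies in $E$ by Part 2(a) applied with $B_+$ in the role of $A$, giving the required implication.

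The main obstacle is establishing Part 1 cleanly: one has to track that the attacked ordinary premise $\varphi$ stays inside $\mathtt{Prem}_p(B_+)$ under the $B \mapsto B_+$ passage and that the preference comparison $A \nprec \varphi$ survives the $A \mapsto A_-$ passage. Both points rely essentially on the hypotheses $\mathtt{Conc}(A_-) = \mathtt{Conc}(A)$, $\mathtt{Prem}(B) \subseteq \mathtt{Prem}(B_+)$, and the preference-monotonicity assumption on $\preceq$, together with the fact that in the deductive setting considered here every attack collapses to an undermining on an ordinary premise. Once Part 1 is in place, Part 2 follows by the standard admissibility-completeness argument with no further surprises.
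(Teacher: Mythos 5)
Your Part 2 matches the paper's argument: 2(a) is proved the same way (your extra appeal to Proposition \ref{PropositionConflictFree} is harmless but unnecessary, since acceptability w.r.t.\ a complete extension already yields membership directly by Definition \ref{Dung semantics}), and deriving 2(b) as the contrapositive of 2(a), using that $B$ is itself a $(B_+)_-$, is a valid and slightly slicker variant of the paper's direct argument.

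The gap is in Part 1. Lemma \ref{classlemma} is stated for \emph{any} $AT$ whose argument ordering satisfies the monotonicity condition, and Proposition \ref{classicalmin}, which consumes it, is likewise stated for a general \emph{(c-)SAF} defined on the basis of an $AT$, not only for abstract-logic instantiations. Your proof of Part 1 opens by invoking Definition \ref{asbasedonal} to conclude that $\R_d = \emptyset$ and that $\mbox{}^{-}$ is symmetric, so that every defeat collapses to a preference-dependent undermining on an ordinary premise. That reduction is correct for AL argumentation theories, and within that setting your argument (the attacked premise survives into $\mathtt{Prem}_{p}(B_+)$, $\mathtt{Conc}(A_-)=\mathtt{Conc}(A)$, and $A \nprec \varphi$ gives $A_- \nprec \varphi$) is sound. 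But it establishes only a special case of the lemma as stated: in a general $AT$, $A$ may defeat $B$ by an undercut, a contrary-rebut or contrary-undermine (all preference-independent), or by a non-contrary rebut on a defeasible sub-argument, and your case analysis covers none of these. The paper's proof instead splits on whether the attack on the sub-argument $B'$ is preference-independent (in which case it transfers to $A_-$ and $B_+$ with no preference check) or preference-dependent (in which case $A \nprec B'$ yields $A_- \nprec B'$ by the standing hypothesis). To close the gap you must either carry out that case split, or explicitly weaken the lemma to AL argumentation theories, accepting that you are then proving less than what is asserted even though the paper only ever applies the result in the AL and classical-logic settings.
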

\begin{proof}
(1) If $A$ defeats $B$ based on a preference independent attack on a sub-argument $B'$ of $B$, then $A_-$ preference independent attacks and defeats $B_+$ on $B'$; else $A$ preference dependent attacks and defeats $B$ on $B'$, $A \nprec B'$. Since $A_-  \nprec B'$, $A_-$ defeats $B_+$ (on $B'$).\\
(2a) Let $A \in E$ and $B$ defeat any $A_-$. Then $B$ also defeats $A$ by \ref{A-defeatsB+}. Then there exists a $C$ that defeats $B$, so $A_-$ is acceptable with respect to $A$, so (since $E$ is complete) $A_- \in E$.\\
(2b) Let  $B \not\in E$. Then there exists an $A$ that defeats $B$, and no $C \in E$ s.t. $C \hookrightarrow A$. But then $A$ defeats $B_+$ by \ref{A-defeatsB+}, so $B_+$ is not acceptable with respect to $E$.
\end{proof}

\noindent \textbf{Proposition \ref{classicalmin}} Let $\Delta$ be the \emph{(c)-SAF} $(\A,\C,\preceq)$ defined on the basis of an $AT$ for which $\preceq$ is defined such that for any $A \in \A$,  if $A \nprec B$ then $A_- \nprec B$.
\\[2pt]
 Let $\Delta^-$ be the premise minimal \emph{(c)-SAF} $(\A^-,\C^-,\preceq^-)$ where:\\[-17pt]

\begin{itemize}
\item $\A^-$ is the set of premise minimal arguments in $\A$.\\[-17pt]
\item $\C^-$ = $\{(X,Y) | (X,Y) \in \C, X,Y \in \A^-\}$.\\[-17pt]
\item $\preceq^-$ = $\{(X,Y) | (X,Y) \in \preceq, X,Y \in \A^-\}$.
\end{itemize}

 Then for $T \in \{$complete, grounded, preferred, stable$\}$, $E$ is a $T$ extension of $\Delta$ iff $E'$ is a $T$ extension of $\Delta^-$, where $E'\subseteq E$ and
$ \bigcup_{X\in E}\mathtt{Conc}(X) = \bigcup_{Y\in E'}\mathtt{Conc}(Y)$.

\begin{proof}

\noindent $T$ = \emph{\textbf{complete}}:\\ 1) Suppose $E$ is a complete extension of $\Delta$. We show that $E-$ is a complete extension of $\Delta^-$, where $E- = E \cap \A^-$.\\
Note first that $E-$ is (attack) conflict-free by construction.
\\
(i) Let $A \in E-$ and consider any $B \in \A^-$ that defeats $A$. Since $A \in E$, there exists a $C \in E$ that defeats $B$. But then (by Lemma~\ref{classlemma}(\ref{A-defeatsB+})) for all $C_-$ we also have that $C_-$ defeats $B$. Since all such $C_- \in E$ (by Lemma~\ref{classlemma}(\ref{A-inE})), all such $C_-$ are in $E-$, and so $A$ is acceptable with respect to $E-$.
\\
(ii) Let $A \in \A^-, A \not\in E-$. Then $A \not \in E$, some $B$ defeats $A$, $\neg \exists C \in E$, $C$ defeats $B$. There exists a $B_-$ that (by Lemma~\ref{classlemma}(\ref{A-defeatsB+})) defeats $A$. Suppose for contradiction that $A$ is acceptable w.r.t. $E-$. Then, $\exists C' \in E^-$, $C'$ defeats $B_-$. By Lemma~\ref{classlemma}(\ref{A-defeatsB+}), $C'$ defeats $B$. Since $E- \subseteq E$, $C' \in E$, contradicting $\neg \exists C \in E$, $C$ defeats $B$. So $A$ is not acceptable with respect to $E-$.\\
Given i) and ii), $E-$ is a complete extension of $\Delta^-$.
\\[2pt]
2) Suppose $E$ is a  complete extension of $\Delta^-$. We show that $E+$ is a $T$-extension of $\Delta$, where  $E+ = E \cup \{A_+ \in \A \mid A \in E$ and $ \mathtt{Prem}(A_+) \subseteq E\}$. Suppose $E$ is a complete extension of $\Delta^-$.\\
Note first that $E+$ is conflict-free by construction.
\\
(i)  For any $A \in E$ and any $B \in \A$ that defeats $A$, we have that some $B_- \in \A^-$ defeats $A$ by Lemma~\ref{classlemma}(\ref{A-defeatsB+}), so some $C \in E$ defeats $B_-$. But then $C$ also defeats $B$ by Lemma~\ref{classlemma}(\ref{A-defeatsB+}). Since $C \in E+$, we have that $A$ is acceptable with respect to $E+$.\\
ii) Consider any $A_+ \in E+, A_+ \not\in E$. Suppose $B \in \A$ defeats $A_+$. Then $B$ defeats $A_+$ on some $A'$ that is a premise in $\mathtt{Prem}(A_+)$. By definition of $E+$ and subargument closure of $E$, $A' \in E$. By Lemma~\ref{classlemma}(\ref{A-defeatsB+}), $B_- \in \A^-$ defeats $A'$, and $B_-$ is defeated by some $C \in E$. Since $C$ defeats $B$ (by Lemma ~\ref{classlemma}(\ref{A-defeatsB+})) and $C \in E+$, $A_+$ is acceptable with respect to $E+$.\\
iii) Consider finally any $A \in \A$, $A \notin E+$. Then no $A_-$ is in $E$, so for all $A_-$ there exists a $B$ defeats $A_-$, $\neg \exists C \in E$, $C$ defeats $B$. By Lemma~\ref{classlemma}(\ref{A-defeatsB+}) $B$ also defeats $A$. Suppose for contradiction that $A$ is acceptable w.r.t. $E+$, and so $\exists C' \in E+$, $C'$ defeats $B$. Hence $C'$ must be some $C_+$, where by Lemma~\ref{classlemma}-\ref{A-inE} and construction of $E+$, $C \in E+$ and $C \in E$. By by Lemma~\ref{classlemma}(\ref{A-defeatsB+}), $C$ defeats $B$, contradicting  $\neg \exists C \in E$, $C$ defeats $B$.\\
By i), ii) and iii), $E+$ is a complete extension of $\Delta$.
\\[4pt]
\noindent $T$ = \emph{\textbf{preferred}}:\\
1) Suppose $E$ is a preferred extension of $\Delta$. Suppose for contradiction that $E-$ is not a preferred extension
of $\Delta-$. We have shown that $E-$ is
a complete extension of $\Delta-$. Hence there must be some $E' \supset E-$ that is a complete extension of $\Delta-$. We have shown that $E'+$ is a complete extension of $\Delta$. It is easy to see by construction of $E-$ and $E'+$ that $E \subset E'+$, contradicting $E$ is a preferred extension of $\Delta$.
\\[2pt]
2) Suppose $E$ is a preferred extension of $\Delta-$. Suppose for contradiction that $E+$ is not a preferred extension of $\Delta$. We have shown that $E+$ is a
complete extension of $\Delta$. Hence there must be some $E' \supset E+$ that is a complete extension of $\Delta$. We have shown that $E'-$ is a complete extension of $\Delta-$. It
is easy to see by construction of $E'-$ and $E+$, that $E
\subset E'-$, contradicting $E$ is a preferred extension of $\Delta-$.
\\[4pt]
\noindent $T$ = \emph{\textbf{grounded}}:\\
1) Suppose $E$ is the grounded extension of $\Delta$. Suppose for contradiction that $E-$ is not the grounded extension of $\Delta-$. We have shown that $E-$ is a complete extension of $\Delta-$. Hence there must be some $E' \subset E-$ that is a complete extension of $\Delta-$. We have shown that $E'+$ is a complete extension of $\Delta$. It is easy to see by construction of $E-$ and $E'+$ that $E'+ \subset E$, contradicting $E$ is the grounded extension of $\Delta$.
\\[2pt]
2) Suppose $E$ is the grounded extension of $\Delta-$. Suppose for contradiction that $E+$ is not the grounded extension of $\Delta$. We have shown that $E+$ is a
complete extension of $\Delta$. Hence there must be some $E' \subset E+$ that is a complete extension of $\Delta$. We have shown that $E'-$ is a complete extension of $\Delta-$. It
is easy to see by construction of $E'-$ and $E+$, that $E'-
\subset E$, contradicting $E$ is the grounded extension of $\Delta-$.
\\[4pt]
\noindent $T$ = \emph{\textbf{stable}}:\\
1) Let $E$ be a stable (and so preferred) extension of $\Delta$. Then $E-$ is a preferred extension  of $\Delta-$.
Suppose for contradiction that $E-$ is not a stable extension. Then $\exists B \in \A-$, $B \notin E-$, and $B$ is not defeated by an argument in $E-$. Note that $B \in \A$. It cannot be that $B \in E$ since the fact that $B \in \A-$ would imply by construction of $E-$ that $B \in E-$. Since $E$ is stable, some $C \in E$ defeats $B$. By Lemma~\ref{classlemma}(\ref{A-defeatsB+}) all $C_-$ also defeat $B$, and by Lemma~\ref{classlemma}(\ref{A-inE}a) all such $C_-$ are in
$E$. By construction of $E-$ all such $C_-$ are in $E-$, and so $B$ is defeated by an argument in $E-$.
Contradiction.
\\[2pt]
2) Let $E$ be a stable (and so preferred) extension of $\Delta-$. Then $E+$ is a preferred extension of
$\Delta$. Suppose for contradiction that $E+$ is not a stable extension.
Then $\exists B \in \A$, $B \notin E+$,  and $B$ is not defeated by an argument in $E+$. Since $E+$ is preferred,
$B$ is defeated by a $C \in \A \setminus E+$, where $C$ is not
defeated by an argument in $E+$.
\\
$C$ defeats $B$ on some $\varphi \in \mathtt{Prem}(B)$. Note that $\varphi \in \A-$.
If  $\varphi \in E$ then, since $\varphi$ acceptable w.r.t $E$, $C$ is
defeated by some argument in $E$. But since $E \subseteq E+$ this contradicts that $C$ is
not defeated by an argument in $E+$.  If  $\varphi \not\in E$ then, since $E$ is a
stable extension of $\Delta-$, we have that $\varphi$ is defeated by a $D \in E$,
but then $D$ also defeats $B$. Since $D \in E+$ this contradicts that
$B$ is not defeated by an argument in $E+$.\\[6pt]
Finally, we clearly have for any $E$ and $E-$ that $\mathtt{Conc}(E) = \mathtt{Conc}(E-)$, and likewise for any $E$ and $E+$.  Then the proposition follows from (1) and (2) for each of the above semantics.\\[-13pt]

\end{proof}

\noindent \textbf{Corollary \ref{classicalminCorollary}} Given $\Delta$ and $\Delta^-$ as defined in Proposition \ref{classicalmin}:\\[-13pt]
\begin{enumerate}
  \item $\varphi$ is a $T$ credulously (sceptically) justified conclusion of $\Delta$ iff $\varphi$ is a $T$ credulously (sceptically) justified conclusion of $\Delta^-$.\\[-17pt]
  \item $\Delta^-$ satisfies the postulates \emph{closure under strict rules}, \emph{direct consistency}, \emph{indirect consistency} and \emph{sub-argument closure}.\\[-17pt]

\end{enumerate}

\begin{proof} 1) and \emph{closure under strict rules}, \emph{direct consistency}, and \emph{indirect consistency} immediately follow from Proposition \ref{classicalmin}.  For \emph{sub-argument closure} expressed in Theorem \ref{TheoremSub-argumentClosure}, note that the proof of this theorem appeals to Lemma \ref{Basic properties} which can straightforwardly be seen to apply to $\Delta^-$. The proof also depends on any sub-argument of $A \in E$ not being in conflict with any argument in $E$. This immediately follows for $E-$ in the proof of Proposition \ref{classicalmin}, given that $E-\subseteq E$.\\[-14pt]

\end{proof}

\noindent \textbf{Proposition \ref{PropStrengtheningAssumption}} Let $(\A,\C,\preceq)$ be defined by an AL argumentation theory, where $\preceq$ is defined under the weakest or last link principles, based on the set comparison \sm{$\triangleleft_{\mathtt{Eli}}$}. Then $\forall A,B \in \A$, $\forall A_- \in \A$, if $A \nprec B$ then $A_- \nprec B$.\\[-15pt]

\begin{proof}
Since all arguments are strict continuations of ordinary premises, the last and weakest link principles are evaluated in the same way. Suppose $A \nprec B$. Then $\mathtt{Prem}(A) \sm{\ntriangleleft_{\mathtt{Eli}}} \mathtt{Prem}(B)$. That is to say, it is not the case that $\exists X \in \mathtt{Prem}(A) $ s.t. $\forall Y \in \mathtt{Prem}(B) $, \sm{$X < Y$}, i.e.,  $\forall X \in \mathtt{Prem}(A) $, $\exists Y \in \mathtt{Prem}(B) $ s.t. \sm{$X \nless Y$}. Since $\mathtt{Prem}(A_-) \subseteq \mathtt{Prem}(A)$, it trivially follows that  $\forall X \in \mathtt{Prem}(A_-) $, $\exists Y \in \mathtt{Prem}(B) $ s.t. \sm{$X \nless Y$}, i.e., $A_- \nprec B$.\\[-22pt]

\end{proof}

\noindent \textbf{Proposition~\ref{ALprop0}} Let $\Delta$ be the \emph{c-SAF} based on $(\LA',Cn)$ and $(\Sigma,\leq')$. Then for any complete extension $E$ of $\Delta$: $S = \{\phi | \phi \in \mathtt{Prem}(A), A \in E\}$ is AL-inconsistent iff $S' = Cl_{\R_s}(\{\mathtt{Conc}(A) | A \in E\})$ is inconsistent.\\[-15pt]

\begin{proof}
\emph{Left to right}: if $S$ is AL inconsistent then $\varphi, -\varphi \in Cn(S)$ for any $\varphi$. By definition of $\R_s$, for some $T,T' \subseteq S$ there exist rules $T \imp \varphi$ and $T' \imp -\varphi$ in $\R_s$. Since $E$ is closed under sub-arguments and premises are sub-arguments, $\{\mathtt{Conc}(A) | A \in E\}$ includes $T$ and $T'$. Hence $\varphi, -\varphi \in S'$. That is, $S'$ is inconsistent.\\
\noindent \emph{Right to left}: If $S'$ is inconsistent then $\varphi, -\varphi \in S'$ for some $\varphi$. Since $E$ is closed under sub-arguments, $S \subseteq S'$, and so $S \vdash \varphi$ and $S \vdash -\varphi$. By  Def.~\ref{asbasedonal}(\ref{conconb}), $\{\varphi,-\varphi\}$ is AL-inconsistent, so $Cn(\{\varphi,-\varphi\}) = \LA$. But since $\{\varphi,-\varphi\} \subseteq Cn(S)$ and $Cn(Cn(S)) = Cn(S)$, we have by monotonicity of $Cn$ that $Cn(S) = \LA$ so $S$ is AL-inconsistent.
\end{proof}

\subsection{Proofs for Section \ref{SectionClassicalLogicInstantiations}}

%
%
%
%
%

\noindent \textbf{Theorem \ref{TheoremPS}} Let $(\A,\C,\preceq)$ be a c-\emph{SAF} corresponding to a default theory $\Gamma$, and for any $\Sigma \subseteq \Gamma$,
let $\mathtt{Args}(\Sigma) \subseteq \A$ be the set of all arguments with premises taken from $\Sigma$. Then:\\
\noindent 1) If $\Sigma$ is a preferred subtheory of $\Gamma$, then $\mathtt{Args}(\Sigma)$ is a stable extension of $(\A,\C,\preceq)$.\\[3pt]
\noindent 2) If $E$ is a stable extension of $(\A,\C,\preceq)$, then $\bigcup_{A \in E}\mathtt{Prem}(A)$ is a preferred subtheory of $\Gamma$.\\[-15pt]

\begin{proof}

\noindent \emph{Proof of 1)}: Firstly, we show that $\mathtt{Args}(\Sigma)$ is conflict free. Since $\Sigma$ is consistent, $\Sigma \nvdash_c \alpha, \neg \alpha$ for any $\alpha$. Suppose for contradiction that $\mathtt{Args}(\Sigma)$ is not conflict free, in which case $\exists X,Y \in \mathtt{Args}(\Sigma)$ s.t. $\mathtt{Conc}(X)$ = $\alpha$, $\neg \alpha \in \mathtt{Prem}(Y)$. But then since every such argument is obtained by applying the strict rules encoding all classical inferences to $\Sigma$, this implies $\Sigma \vdash_c \alpha, \neg \alpha$. Contradiction.
\\[2pt]
We now show that for any $Y \in \A \setminus \mathtt{Args}(\Sigma)$, $\exists X \in \mathtt{Args}(\Sigma)$ s.t. $X$ defeats $Y$. Consider any such $Y$.  Then $\exists \gamma \in
\mathtt{Prem}(Y)$, $\gamma \notin \Sigma$. By construction, $\Sigma$ =
$\Sigma_1 \cup \ldots \cup \Sigma_n$ such that for $i = 1 \ldots n$,
$\Sigma_1 \cup \ldots \cup \Sigma_i$ is a maximal consistent subset of
$\Gamma_1,\ldots, \Gamma_i$. Hence, suppose $\gamma \in \Gamma_j$ for
some $j = 1 \ldots n$. Then $\Sigma_1 \cup \ldots \cup \Sigma_j \cup
\{\gamma\} \vdash_c \bot$. Hence $\Sigma_1 \cup \ldots \cup \Sigma_j
\vdash_c \neg \gamma$.
Hence, $\exists X \in \mathtt{Args}(\Sigma_1 \cup \ldots \cup \Sigma_j)$
s.t. $\mathtt{Conc}(X)$ = $\neg \gamma$, and so $X \rightharpoonup Y$.
Since $\gamma \in \Gamma_j$, and all premises in $X$ are in $\Gamma_i$,
$i \leq j$ (i.e., every premise in $X$ is greater or equal to $\gamma$) then
$\mathtt{Prem}(Y)$ $\trianglelefteq_{\mathtt{Eli}}$ $\mathtt{Prem}(X) $, and so
 by the weakest or last link principle, $X \nprec Y$. Hence $X \hookrightarrow Y$.
\\[4pt]
\noindent \emph{Proof of 2)}: Firstly, we show that $\bigcup_{A \in E}\mathtt{Prem}(A)$ must be
consistent. Suppose for contradiction that $\exists X,Y \in E$ s.t.
$\mathtt{Prem}(X)$ $\cup$ $\mathtt{Prem}(Y) \vdash_c \bot$. Let
$\{\alpha_1,\ldots,\alpha_m\}$ be a minimal (under set inclusion) subset
of $\mathtt{Prem}(X)$ $\cup$ $\mathtt{Prem}(Y)$ s.t.
$\alpha_1,\ldots,\alpha_m \vdash \bot$. Hence,
$\alpha_1,\ldots,\alpha_{m-1} \vdash \neg \alpha_m$. Since $E$ is stable
and so complete, then by sub-argument closure (Theorem
\ref{TheoremSub-argumentClosure}), $\{A_1,\ldots,A_m\}\subseteq E$,
where for $i = 1 \ldots m$, $\mathtt{Prem}(A_i) = \{\alpha_i\}$.
By Lemma \ref{Lemma3}, if $\{A_1,\ldots,A_m\} \subseteq E$, where $E$ is
a complete extension, then any strict continuation of $\{A_1,\ldots,A_m\}$
is acceptable w.r.t. $E$, and so in $E$. Hence $A \in E$ where $A$
concludes $\neg \alpha_m$. Hence, $A \rightharpoonup A_m$, contradicting
$E$ is conflict free.
  \\[2pt]
Next, let $E_1, \ldots, E_n$ be the partition of $\mathtt{Form}(E)$ s.t.
for $i = 1 \ldots n$, $E_i$ is a (possibly) empty subset of $\Gamma_i$
in the stratification $\Gamma_1,\ldots,\Gamma_n$ of $\Gamma$. Suppose
for contradiction that $\mathtt{Form}(E)$ is not a preferred subtheory.
Then, for some $i$, for $k$ = $1 \ldots i-1$,
  $E_1, \ldots, E_k$ is a maximal consistent subset of $\Gamma_1,\ldots,
\Gamma_{i-1}$, and $\exists \alpha \in \Gamma_i$ s.t. $\alpha \notin
E_i$, and $E_1 \cup \ldots \cup E_{i-1} \cup E_i \cup \{\alpha \}
\nvdash \bot$. Hence, $\exists Y \in \A$, $\mathtt{Prem}(Y)$ =
$\{\alpha\}$, $Y \notin E$. By assumption of $E$ being a stable
extension, $\exists X \in E$, $X \hookrightarrow Y$. Since $E_1 \cup \ldots
\cup E_{i-1} \cup E_i \cup \{\alpha \} \nvdash \bot$, then $E_1 \cup
\ldots \cup E_{i-1} \cup E_i \nvdash \neg \alpha$, and so it must be
that some $\beta \in \mathtt{Prem}(X)$ is in $E_j$, $j > i$; i.e.,  $\beta \in \mathtt{Prem}(X)$,  $\mathtt{Prem}(Y)$ =
$\{\alpha\}$, and $\beta < \alpha$. \sm{Hence
$\mathtt{Prem}(X) \triangleleft_{\mathtt{Eli}} \mathtt{Prem}(Y)$},
and so $X \prec Y$ under the weakest or
last link principle, contradicting $X \hookrightarrow Y$.\\[-20pt]

\end{proof}

\scalefont{1.05}


\end{document}